\newtheorem{theorem}{Theorem}
\newtheorem{lemma}{Lemma}
\def\mathE{{\mathbb{E}}}
\def\mathN{{\mathcal{N}}}
\def\mathA{{\mathcal{A}}}
\def\vrho{{\bm{\rho}}}
\def\valpha{{\bm{\alpha}}}
\def\vxi{\bm{\xi}}
\def\vbeta{\bm{\beta}}
\DeclareMathOperator{\tr}{\mathrm{tr}}
\DeclareMathOperator{\diag}{\mathrm{diag}}
\def\eqref#1{equation~\ref{#1}}
\def\1{\bm{1}}
\def\rmI{{\mathbf{I}}}
\def\vmu{{\bm{\mu}}}
\def\vxi{{\bm{\xi}}}
\def\vg{{\bm{g}}}
\def\vw{{\bm{w}}}
\def\vx{{\bm{x}}}
\def\mA{{\bm{A}}}
\def\mB{{\bm{B}}}
\def\mD{{\bm{D}}}
\def\mI{{\bm{I}}}
\def\mP{{\bm{P}}}
\def\mQ{{\bm{Q}}}
\def\mR{{\bm{R}}}
\def\mV{{\bm{V}}}
\def\mW{{\bm{W}}}
\def\mX{{\bm{X}}}
\def\mY{{\bm{Y}}}
\def\mSigma{{\bm{\Sigma}}}
\def\mPsi{{\bm{\Psi}}}
\DeclareMathAlphabet{\mathsfit}{\encodingdefault}{\sfdefault}{m}{sl}
\SetMathAlphabet{\mathsfit}{bold}{\encodingdefault}{\sfdefault}{bx}{n}
\newcommand{\qi}[1]{\textcolor{black}{#1}}
\begin{document}

\date{}

\title{Defending Against Data Reconstruction Attacks in Federated Learning: \\ An Information Theory Approach}

\author{
{\rm Qi Tan\textsuperscript{a},
Qi Li\textsuperscript{b},
Yi Zhao\textsuperscript{c, \Letter},
Zhuotao Liu\textsuperscript{b},
Xiaobing Guo\textsuperscript{d},
and Ke Xu\textsuperscript{a, \Letter}}\\
\\
\textsuperscript{a}Department of Computer Science and Technology, Tsinghua University\\
\textsuperscript{b}Institute for Network Science and Cyberspace, Tsinghua University\\
\textsuperscript{c}School of Cyberspace Science and Technology, Beijing Institute of Technology\\
\textsuperscript{d}Lenovo Research\\
} 

\maketitle

\begin{abstract}
Federated Learning (FL) trains a black-box and high-dimensional model among different clients by exchanging parameters instead of direct data sharing, which mitigates the privacy leak incurred by machine learning. However, FL still suffers from membership inference attacks (MIA) or data reconstruction attacks (DRA). In particular, an attacker can extract the information from local datasets by constructing DRA, which cannot be effectively throttled by existing techniques, e.g., Differential Privacy (DP).

In this paper, we aim to ensure a strong privacy guarantee for FL under DRA. \qi{We prove that reconstruction errors under DRA are constrained by the information acquired by an attacker, which means that constraining the transmitted information can effectively throttle DRA. To quantify the information leakage incurred by FL, we establish a channel model, which depends on the upper bound of joint mutual information between the local dataset and multiple transmitted parameters. Moreover, the channel model indicates that the transmitted information can be constrained through data space operation, which can improve training efficiency and the model accuracy under constrained information. According to the channel model,
we propose algorithms to constrain the information transmitted in a single round of local training. With a limited number of training rounds, the algorithms ensure that the total amount of transmitted information is limited.
Furthermore, our channel model can be applied to various privacy-enhancing techniques (such as DP) to enhance privacy guarantees against DRA. Extensive experiments with real-world datasets validate the effectiveness of our methods.}
\end{abstract}

\section{Introduction}
Federated learning (FL)~\cite{DBLP:conf/aistats/McMahanMRHA17,DBLP:journals/tist/YangLCT19,DBLP:journals/network/ZhaoXCT22,10355680}  is a new form of machine learning (ML),
which protects privacy by transmitting gradients or parameters to avoid sharing raw data. Specifically, the parameters form a \emph{communication channel} between the server and each client, so the server gets information from local datasets via such channels. Based on the Data Processing Inequality (DPI) \cite{DBLP:books/daglib/0016881}, communication by parameters, which is a deterministic mapping of local data, instead of raw data, reduces the risk of data privacy.
However, recent studies reveal that the parameter channel of FL still leaks privacy.
For example, multiple literature indicates that adversaries can conduct membership inference attacks (MIA) with uploaded model parameters~\cite{DBLP:conf/icml/SablayrollesDSO19,DBLP:conf/sp/NasrSH19,DBLP:conf/sp/MelisSCS19,DBLP:conf/eurosp/LongWBB0TGC20,DBLP:conf/icml/Choquette-ChooT21,DBLP:conf/sp/CarliniCN0TT22}, which breaks the anonymity of data privacy. Moreover, adversaries can completely steal training data by data reconstruction attacks (DRA)~\cite{DBLP:conf/ccs/HitajAP17,DBLP:conf/nips/ZhuLH19,DBLP:conf/uss/CarliniTWJHLRBS21,DBLP:conf/nips/HaimVYSI22,DBLP:journals/tifs/ChenZLFX23}, resulting in serious privacy issues in FL.

\qi{In order to enhance privacy protection for FL, dimension reduction~\cite{DBLP:conf/kdd/LiDYC020, DBLP:journals/tkde/OsiaTSKHR20,DBLP:conf/aaai/ThapaCCS22} or differential privacy (DP) \cite{DBLP:conf/nips/AgarwalSYKM18,DBLP:conf/sosp/LecuyerSVG019,DBLP:conf/nips/LevySAKKMS21} are widely adopted approaches.
However, dimension reduction lacks theoretical guarantees of the defense ability against DRA, so it cannot flexibly configure defense capabilities according to different privacy requirements.
DP's privacy protection can provide theoretical guarantees (e.g., the privacy budget $\epsilon$) for MIA attacks~\cite{DBLP:conf/sp/BalleCH22,DBLP:conf/icml/GuoKCM22}. It aims to guarantee that changing any data point will not significantly affect the output distribution of a system. This goal is different from the one in defending against DRA, which focuses on preventing the attacker from reconstructing the whole distribution of the local dataset. Thus DP still cannot defend against DRA attacks~\cite{DBLP:conf/kdd/Cormode11, DBLP:conf/uss/LiuWH000CF022,DBLP:journals/corr/abs-2302-07225,DBLP:conf/sp/BalleCH22,DBLP:conf/icml/GuoKCM22}.
For instance, in DP-SGD, algorithms with identical privacy budget but different training hyper-parameters (e.g., different batch size $B$) cannot guarantee the same success rate for DRA~\cite{DBLP:journals/corr/abs-2302-07225}.
Moreover, quantifying information leakage is the basis for defending against DRA attacks.
Previous technique like Quantitative Information Flow (QIF)\footnote{QIF focuses on a special security concern, namely the probability of guessing a secret in one try~\cite{DBLP:series/isc/AlvimCMMPS20,DBLP:conf/fossacs/Smith09}. This security concern is different from the one in DRA, where DRA focuses on reconstructing the whole distribution of local data.} \cite{DBLP:series/isc/AlvimCMMPS20} quantifies information leak under a white-box and time-invariant setting\cite{DBLP:conf/ccs/0002CPP20}. It requires the knowledge of the correlation between the distributions of inputs and outputs, which cannot hold in FL systems.}

\qi{It is difficult to 
defend against DRA attacks in FL due to the following challenges. (i) \textbf{the black-box model.} For DNN-based models, the correlation between the input distribution and output distribution is extremely complex, and we cannot obtain the exact mapping function between the two distributions, making theoretical analysis impossible.
(ii) \textbf{the high-dimensional parameter space.} Traditional mathematical tools (e.g., eigen-decomposition) cannot process high-dimensional parameter spaces on the scale of thousands of millions due to the requirements for large-scale storage and high-performance computing.
(iii) \textbf{the time-variant system.} During the training process, constant parameter updates change the model at each step, leading to a time-variant system in FL. The time-variant system, which
changes the output distributions accordingly,
leads to dynamic information leakage, requiring continuously changing quantifications.}

\qi{To address the above challenges, we develop a theoretical framework based on mutual information (MI)\footnote{Mutual information $I(X;Y)$ \cite{DBLP:conf/ccs/CuffY16,DBLP:journals/corr/Shwartz-ZivT17,DBLP:conf/iclr/HjelmFLGBTB19} represents the uncertainty decrement of $X$ when we observe $Y$.} to evaluate privacy leakage caused by DRA in FL, and design methods to constrain information leakage to defend against DRA attacks. Specifically, we demonstrate that the lower bound of mean squared error (MSE), which serves as an indicator of DRA's precision in reconstruction (i.e., the smaller MSE means the higher precision for the attacker), is determined by the amount of acquired information, i.e., the MI between the local dataset and the shared parameters.
Thus, MI can be utilized as the indicator for quantifying the information leakage in FL.
Then we build a channel model to analyze information leaks under the black-box setting of FL.
Through our proposed channel model, we find
that the transmitted information (i.e., the information leakage) is decided by two factors: the channel capacity $C$, which represents the maximal ability to transmit information in a single training round; and the optimization rounds $n$, which is correlated to the information accumulation.
For example, if the channel capacity is bounded by a threshold $\kappa$, and the number of optimization rounds is less than $n$, then the total amount of information leakage is less than $n \cdot \kappa$. Furthermore, our channel model can
analyze various privacy-enhancing methods in defending against DRA, including DP, gradient compression, and utilizing large batch size.}


\qi{Based on the channel model, we utilize DPI to transform the operations (e.g., eigen-decomposition and adding noise) of constraining channel capacity from the parameter space to the data space. This transformation significantly improves the training efficiency and the model accuracy of the high dimensional and time-variant model under constrained information leakage.}
Specifically, our \emph{protecting goal} is to decide the covariance matrix for the added noise according to a given data distribution $\mD$, which ensures privacy protection by constraining the reconstruction error above a certain threshold. Compared to conventional protection techniques, which directly deal with parameters after gradient mapping, constraining in the data space has two distinct advantages:
firstly, it makes the computational complexity independent of the optimization rounds $n$ and the model's dimensionality $d_m$, reducing it from $\mathcal{O}(n\cdot d_m)$ to $\mathcal{O}(d_D)$, where $d_D$ denotes the dimensionality of the data and $d_D \ll d_m$. Secondly, data space preserves the correlations between data attributes, hence we can leverage the prior knowledge of relative importance to implement stronger safeguards for the critical attributes, which enhances the capability to balance the utility and the privacy. 

Finally, according to the theoretical results, we propose three implementations for constraining the channel capacity. These implementations incorporate different prior knowledge in defending against DRA, which can be employed to flexibly balance the utility and the privacy.

In summary, the contributions of our paper are as follows:
\begin{itemize}
\setlength{\itemsep}{2pt}
\setlength{\parsep}{0.5pt}
\setlength{\parskip}{0pt}
  \item We demonstrate that the amount of transmitted information decides the lower bound of the reconstruction error for DRA attacks.
  \item \qi{We establish a channel model to quantify the information leakage of the black-box model in FL, which can be applied to analyze various privacy-enhancing methods for defending against DRA.}
  \item \qi{We theoretically constrain the transmitted information through the operation in data space instead of parameter space for the first time and demonstrate that it significantly improves the training efficiency and the model accuracy under constrained information leakage.}
  \item \qi{By incorporating different prior knowledge, we propose three implementations to constrain channel capacity, which can be utilized to flexibly balance the utility and the privacy.}
  \item \qi{Extensive experiments demonstrate that the newly proposed methods effectively enhance the safety, efficiency, and flexibility of FL.}
\end{itemize}

\section{Background and Preliminary}
This paper studies the FL problem based on information theory. Specifically, in the FL scenario, the server and clients communicate by sending model parameters. Even without direct data sharing, the MI between shared parameters and the local dataset grows accordingly, which enhances the ability for attackers to conduct DRA attacks. For clarity purposes, Tab.~\ref{tb:main_notations} lists major notations used in the paper, and we will describe the remaining variables when they are utilized.

\begin{table}[htb]
\vskip -0.15in
\footnotesize
\caption{Major Notation Explanation}\label{tb:main_notations}
\begin{center}
\begin{tabular}{cl}
\toprule
Notations & Explanation\\
\midrule
$F(\cdot)$, $\eta$            &  Loss function and learning rate for local optimization \\
$\mW^{(t)}_{i}$, $\mW^{(t)}_{o}$           &  The received (input) and shared (output) parameters of\\
&the victim at time $t$ \\
$\mD$                       &  Random variable follows data distribution of the victim\\
$B$                         &  Batch size for optimization \\
$E$, $n$                    &  \qi{Local steps for one communication round and  the total}\\ &\qi{local steps for all communication rounds} \\
$\mathcal{A}^{(t)}(\cdot)$  &  The aggregation method at time $t$ \\
$\mV^{(t)}$                 &  Variables (e.g., gradients, parameters) collected by the \\
&server from clients other than the victim \\
$C^{(t)}$       &  The channel capacity (maximum transmitted informati-\\
                &  on) of the victim at time $t$\\
$\kappa$      &  The threshold of $C^{(t)}$, i.e., the setted channel capacity        \\
$\mSigma_{*}$               &  The covariance matrix of random variable \\
$\lambda$ and $\sigma$ & Eigenvalues for the covariance matrix and noise variables \\
$I(\mX; \mY)$        & \qi{The mutual information between $\mX$ and $\mY$} \\
$\bm{\xi}$        & \qi{The noise variable that is subject to Gaussian distribution}\\
\bottomrule
\end{tabular}
\end{center}
\vskip -0.1in
\end{table}

\subsection{Federated Learning}
Regarding FL, 
a specific client, namely the victim, receives the initial parameter $\mW_1$ from the server in a specific communication round and conducts the optimization process with the victim's dataset as
\begin{equation}\label{eq:local_sgd_FL}
  \mW_{t+1} \leftarrow \mW_t - \eta \cdot \nabla_{\mW}F(\mW_t; \mD), \; t = 1,\, \cdots,\, E,
\end{equation}
where $E$ is the number of local steps.
Then the victim sends $\mW_{E+1}$ back to the server. If we rewrite $\mW_1$ and $\mW_{E+1}$ as $\mW_i$ and $\mW_o$ respectively, the two parameters form a communication channel for information transmission (as illustrated in Fig. \ref{fig:unfoldinfoflow}), and the local optimization process defined in Eq.~(\ref{eq:local_sgd_FL}) loads information from the dataset to the communication channel. Finally, the server collects the parameters from different clients (including the victim) for aggregation as
\begin{equation}\label{eq:aggregation_FL}
  \mW_i = \mathA(\mW_o; \mV).
\end{equation}

\subsection{Information Theory}
\noindent {\bfseries Differential Entropy.} The differential entropy of a random variable $\mX$ is defined as follows
\begin{equation}\label{eq:diff_entropy}
  h(\mX) = -\int_{\mX}f(\vx)\log f(\vx)\,d\vx,
\end{equation}
which is utilized to describe the degree of random uncertainty. For a dataset with high information entropy, i.e., a dataset with plentiful information, it is difficult for an attacker to conduct DRA attacks. While for the dataset with low information entropy, the opposite is true.

However, calculating the differential entropy by Eq.~(\ref{eq:diff_entropy}) is infeasible in practice since we cannot obtain the distribution function of the target variable. Therefore, we use the maximum entropy distribution to analyze the worst-case scenario. Specifically, with $\mathE[\mX]=\vmu$ and $Cov(\mX)=\mSigma$, the maximum entropy distribution is the Gaussian distribution, i.e., $h(\mX) \leq h(\mathN(\vmu, \mSigma))$.

\noindent {\bfseries Mutual Information.} For two random variables $\mX$ and $\mY$, the mutual information between them is
\begin{equation}\label{eq:mutual_info}
  I(\mX;\mY) = h(\mX) - h(\mX| \mY) = h(\mY) - h(\mY | \mX).
\end{equation}
Specifically, the mutual information $I(\mX;\mY)$ is a symmetric function, which describes the random uncertainty decrement of $\mX$ when we observe $\mY$, and vice versa.

Specifically, in this paper, we utilize $I(\mD; \mW_i, \mW_o)$ to quantify the information leakage in FL. The intuition is that when an attacker observes the communication parameters of a victim, the random uncertainty of the victim's local dataset will decrease, which implies the attacker can extract information from the victim's local dataset to achieve more precise DRA.

\noindent {\bfseries Channel Capacity.} The key factor in describing a communication channel is channel capacity. In information theory, traditional channel capacity is defined by the maximum MI between the sending variable $\mX$ and the receiving variable $\mY$, i.e., $ C = \max_{p(\vx)} \; I(\mX; \mY)$, which is the maximum information that we can send by information coding.

In this work, the sending variable is $\mD$, while the receiving variable is $\mW_o$, which is decided by the variables $\mW_i$ and $\mD$ according to Eq.~(\ref{eq:local_sgd_FL}), hence the channel capacity can be formalized as $C = \max_{p(\vw_o)} \; I(\mD;\mW_o | \mW_i)$,
which is the upper bound of the transmitted information.

\noindent{\bfseries The reasons for choosing MI to measure the information leakage.} \qi{Prior research like QIF and g-leakage utilizes min-entropy \cite{DBLP:series/isc/AlvimCMMPS20, DBLP:conf/fossacs/Smith09} to measure the information leakage, which can only work in a white-box and time-invariant system \cite{DBLP:conf/ccs/0002CPP20}. However, FL is a black-box (e.g., deep neural networks) and time-variant (e.g., parameter updating in each round) system. Hence, these techniques are not applicable. Moreover, min-entropy is unsuitable for modeling the attack against FL, e.g., DRA. The min-entropy focuses on measuring the uncertainty of guessing the most likely output of random sources \cite{hagerty2012entropy, DBLP:journals/tifs/KimGK21}, which is more related to cryptographic systems. However, in DRA, the attacker's target is to reconstruct the whole data distribution based on the victim's sharing parameters, instead of guessing a most likely data point. Therefore, the precision of DRA attacks depends on the difference between two distributions, which must take all data of the distributions into consideration. In this scenario, the Shannon entropy, i.e., MI, which is based on the expectation metric, can accurately measure the correlation between the whole distributions, thus is more suitable for analyzing the information leakage issue under DRA. Therefore, we choose MI to measure the information leakage in
FL. }

\section{Key Observation and Method Overview}
\subsection{Key Observation}
Regarding FL, different clients jointly optimize the model by passing parameters to the server instead of raw data. Since the parameters are the mapping of the original data, the attacker can still reconstruct the private data from the parameters, thereby stealing privacy. Correspondingly, the client can preserve privacy by perturbing the transmitted parameters. Therefore, we build a channel model to calculate the privacy data contained in parameters. Moreover, based on the quantitative results, the privacy data in parameters can be flexibly adjusted to meet various privacy requirements.


In this section, we establish a formal correlation between the transmitted information, i.e., the MI, and the reconstruction error of DRA by the following theorem.

\begin{theorem}[Lower bound for reconstruction error]\label{mse_lower_bound}
    For any random variable $\mD$, $\mD \in \mathbb{R}^{d}$ and $\mW$, $\mW \in \mathbb{R}^{m}$, we have
    \begin{equation}\label{eq:mse_lower_bound}
        \mathbb{E} [ \|\mD - \hat{\mD}(\mW)\|^2 / d ] \geq \frac{e^{2h(\mD)/d}}{2\pi e} e^{-2I(\mD;\; \mW)/d},
    \end{equation}
    where $\hat{\mD}(\mW)$ is an estimator of $\mD$ constructed by $\mW$.
\end{theorem}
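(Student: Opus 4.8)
The plan is to recognize Theorem~\ref{mse_lower_bound} as a multivariate Shannon lower bound from rate--distortion theory, provable directly from the maximum-entropy principle recalled in the preliminaries. The object to control is the error vector $\mE := \mD - \hat{\mD}(\mW)$, whose normalized second moment $\mathbb{E}[\|\mE\|^2]/d$ is precisely the left-hand side of Eq.~(\ref{eq:mse_lower_bound}). I would relate the differential entropy of $\mE$ to the conditional entropy $h(\mD \mid \mW)$ and then trade entropy for mutual information through the identity $I(\mD; \mW) = h(\mD) - h(\mD \mid \mW)$ from Eq.~(\ref{eq:mutual_info}).

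First, since $\hat{\mD}(\mW)$ is a deterministic function of $\mW$, conditioning on the event $\mW = \vw$ turns $\hat{\mD}(\vw)$ into a constant vector; as differential entropy is invariant under translation by a constant, $h(\mD \mid \mW = \vw) = h(\mE \mid \mW = \vw)$, and averaging over $\mW$ gives $h(\mD \mid \mW) = h(\mE \mid \mW)$. Because conditioning never increases differential entropy, this yields $h(\mD \mid \mW) \le h(\mE)$.

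Next, I would bound $h(\mE)$ by the Gaussian maximum-entropy inequality stated in the preliminaries: with $\mSigma_E$ the covariance of $\mE$, $h(\mE) \le \tfrac{1}{2}\log\!\big((2\pi e)^d \det \mSigma_E\big)$. To convert the determinant into the scalar MSE, apply AM--GM to the eigenvalues of $\mSigma_E$, namely $(\det \mSigma_E)^{1/d} \le \tr(\mSigma_E)/d$, together with $\tr(\mSigma_E) \le \mathbb{E}[\|\mE\|^2]$ (the variance is dominated by the second moment). Chaining these, $h(\mD) - I(\mD; \mW) = h(\mD \mid \mW) \le \tfrac{d}{2}\log\!\big(2\pi e\, \mathbb{E}[\|\mE\|^2]/d\big)$; dividing by $d/2$, exponentiating, and rearranging produces Eq.~(\ref{eq:mse_lower_bound}) exactly.

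The main obstacle is not any single hard estimate but making the reductions watertight: the translation-invariance step must be carried out conditionally on each value of $\mW$ before averaging, the bound $\tr(\mSigma_E) \le \mathbb{E}[\|\mE\|^2]$ must be justified for an error whose mean need not vanish, and one must assume the relevant differential entropies are finite so that the logarithms are well defined. Once these technical points are handled, the remaining manipulation is routine algebra of logarithms and exponentials.
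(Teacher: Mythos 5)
Your proof is correct, and it reaches Eq.~(\ref{eq:mse_lower_bound}) by a genuinely different middle passage than the paper's. Both arguments share the same endpoints --- the identity $I(\mD;\mW)=h(\mD)-h(\mD\mid\mW)$, the Gaussian maximum-entropy bound, and the AM--GM step $\det(\mSigma)\le(\tr(\mSigma)/d)^d$, which the paper isolates as Lemma~\ref{lemma:bound_determinant} --- but the bridge from $h(\mD\mid\mW)$ to the MSE differs. The paper stays conditional throughout: it bounds $h(\mD\mid\mW)$ by the entropy of a Gaussian with the \emph{conditional} covariance $\Cov(\mD\mid\mW)$, replaces the conditional mean by $\hat{\mD}(\mW)$ using the fact that the conditional mean is the MMSE-optimal estimator, and then needs Jensen's inequality (concavity of $\log$) to pull the outer expectation over $\mW$ inside the logarithm. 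You instead pass to the unconditional error $\mE=\mD-\hat{\mD}(\mW)$ via translation invariance plus the fact that conditioning reduces differential entropy, bound $h(\mE)$ by the Gaussian with covariance $\mSigma_E$, and finish with $\tr(\mSigma_E)\le \mathbb{E}[\|\mE\|^2]$ (variance dominated by second moment). Your route avoids both the conditional-mean-optimality step and Jensen at the cost of one harmless extra slack in $h(\mE\mid\mW)\le h(\mE)$, and the two chains terminate in the identical bound. The technical caveats you flag --- performing the translation-invariance step conditionally on each $\mW=\vw$ before averaging, handling a possibly nonzero error mean, and assuming finiteness of the differential entropies --- are exactly the right ones, and each is routine to discharge.
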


Specifically, Eq.~(\ref{eq:mse_lower_bound}) denotes the lower bound of MSE in DRA, which represents the optimal error for data reconstruction. In the FL scenario, we denote $\mD$ as the target data distribution and $\mW$ as the shared parameter. Therefore $h(\mD)$ is the entropy of the target data distribution, which is a constant during machine learning.
Moreover, the lower bound is negatively correlated to MI, i.e., $I(\mD; \mW)$, which indicates that if $\mW$ contains more information of target data $\mD$, i.e., a larger $I(\mD; \mW)$, the attacker can achieve a more precise reconstruction of $\mD$. Consequently, large $I(\mD; \mW)$ exacerbates the privacy issue.

In FL, MI increases as the number of optimization rounds increases, thereby enhancing the precision of DRA. Therefore, constraining the overall MI in FL is the way to restrict the precision of DRA. To this end, we build a channel model to measure the increase of MI in Sec.~\ref{sec:quanti_model}, and propose three implementation methods to limit MI within a certain threshold. 


\subsection{Method Overview}

\begin{figure}[tb]
\begin{center}
\begin{minipage}[b]{\columnwidth}
\includegraphics[width=\columnwidth]{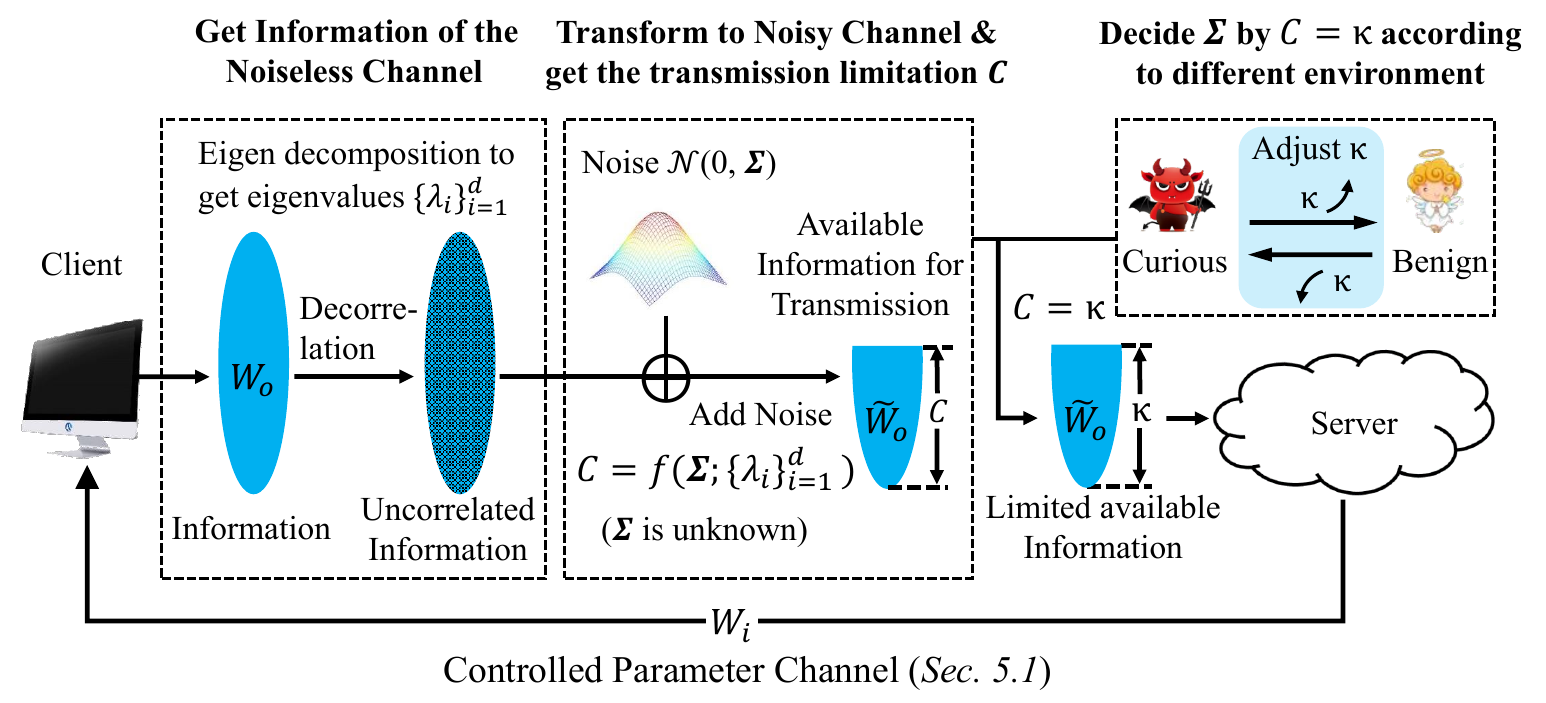}
\caption{To enhance the capability for defending against DRA in FL, we develop techniques to constrain the amount of transmitted information below a certain threshold $\kappa$.}\label{fig:motivation}
\vskip -0.2in
\end{minipage}
\end{center}
\vskip -0.2in
\end{figure}

In this study, we develop techniques to enhance FL's capability of privacy protection, in particular, defending against DRA. Normally, FL transmits information from the local dataset to the server by model parameters, i.e., the parameter channel. The transmitted information can be employed by an attacker to conduct various attacks. Hence, the objective of privacy protection is to constrain the amount of transmitted information. Specifically, for DRA, the reconstruction error is lower bounded by a function of MI (Thm.~\ref{mse_lower_bound}), which indicates that a smaller MI leads to a larger reconstruction error. Therefore, our technique is devoted to limiting the reconstruction ability by constraining the total MI in FL.

\noindent{\bfseries Threat Model.} \qi{In this work, we focus on privacy leaks incurred by DRA in FL. Specifically, the attacker aims to reconstruct the data distribution $\mD$ of a specific client (i.e., the victim) via parameters shared by the victim. We assume that the attacker can get the transmitted parameters by the victim (i.e., $\mW_i$ and $\mW_o$). 
Then the attacker can reconstruct a data distribution $\hat{\mD}(\mW_i, \mW_o)$ to approximate the target data distribution $\mD$ (Appendix~\ref{sec:detail_dra} explains the details of DRA).} 

\qi{Our goal is to constrain the private data that the attacker can obtain according to the \emph{transmitted parameters} in FL. Therefore, the attacker's ability to construct the DRA attack is limited.}

\noindent{\bfseries Controlled parameter channel.} As illustrated in Fig.~\ref{fig:motivation}, to restrict the transmitted information, we transform the noiseless parameter $\mW_o$ to a noisy Gaussian channel by adding Gaussian noise $\mathN(\bm{0}, \mSigma)$ to it. \qi{Based on the theorem of Gaussian channel in information theory~\cite{DBLP:books/daglib/0016881}}, the noisy Gaussian channel has limited capability for information transmission, which is the channel capacity. Thus, with the eigenvalues of $\mW_o$, we derive a formula $f(\mSigma)$ to characterize the channel capacity of the Gaussian channel by the maximum entropy distribution. Finally, we solve the equation $f(\mSigma)=\kappa$ to decide $\mSigma$ for constraining the transmitted information within a threshold $\kappa$. We will explain how the controlled parameter channel constrains transmitted information in Sec.~\ref{sec:quanti_model} and Sec.~\ref{subsec:ctr_parameter_space}.

\noindent{\bfseries Constraining channel capacity in the data space.} \qi{To overcome the efficiency issue caused by the high dimensional and time-variant model in FL, we theoretically transform operations of constraining channel capacity from the parameter space to the data space.} Specifically, the information contained in the resulting parameter is decided by the input data, thus constraining channel capacity can be achieved by restricting the information contained in the input data. Moreover, this transformation significantly improves the training efficiency and the model accuracy under constrained information leakage. We will explain the transformation in Sec.~\ref{subsec:ctr_data_space}.

Theoretically, when an attacker observes $\mW_i$ and $\mW_o$, the random uncertainty of the local data $\mD$ decreases, which means the attacker gets more information to conduct DRA attacks, leading to more precise reconstruction. The amount of random uncertainty reduction $\Delta I$ (i.e., information leakage) in a round can be formalized as
$$\Delta I = I(\mD; \mW_i, \mW_o) - I(\mD; \mW_i) =  I(\mD; \mW_o | \mW_i), $$
which means the attacker gets $\Delta I$ information from $\mD$ (as illustrated in Fig.~\ref{fig:unfoldinfoflow}). Moreover, this privacy leak occurs in each optimization round, which results in an increase in MI and consistently increases the risk of privacy.

As aforementioned, for defending against DRA, our technique constructs a controlled parameter channel by limiting $\Delta I$ to less than a threshold $\kappa$ for all optimization rounds. Then together with the bounded optimization rounds $n$, we provide $n \cdot \kappa$ guarantee for the total information leakage, thereby constraining the attack precision of DRA.

\section{Channel Model of the Information Leakage}\label{sec:quanti_model}
In this section, we formalize the problem of FL into a communication process based on information theory and then unfold the recurrent communication process into a time-dependent Markov Chain. Finally, we build a channel model to calculate the MI according to the unfolded communication process.

\subsection{Accumulation of Mutual Information}
As illustrated in Fig.~\ref{fig:unfoldinfoflow}, in the FL scenario, a specific client, i.e., the victim, communicates with the server through a logic channel: the parameters $\mW_{i}$ and $\mW_{o}$. Specifically, There are three different information flows: \emph{the ingress flow}, i.e., the received parameter $\mW_i$, which determines the background knowledge possessed by the server (i.e., the attacker); \emph{the egress flow}, which is the information contained in the sharing parameter $\mW_o$; and \emph{the internal flow}, i.e., local optimization process, which loads the information contained in the local dataset to the egress flow. Particularly, due to privacy requests, the victim only communicates with the server by $\mW_i$ and $\mW_o$.

The information leakage of the communication channel depends on the MI increment when the server observes $\mW_{o}$. Thus, it can be formalized as
\begin{equation}\label{eq:informationflow}
  I(\mD; \mW_i, \mW_o) = \underbrace{I(\mD;  \mW_i)}_{\text{Prior}} + \underbrace{I(\mD; \mW_o | \mW_i)}_{\text{Information Leakage ($\Delta I$)}}.
\end{equation}
Eq.~(\ref{eq:informationflow}) is an immediate result according to the chain rule of MI. It indicates that the MI between $\mD$ and the joint distribution $(\mW_i, \mW_o)$ can be divided into two parts: the prior knowledge and the information leakage $\Delta I$. \qi{In the rest of this paper, we will utilize $I(\mD; \mW_o | \mW_i)$ instead of $\Delta I$ for more comprehensible analysis.}

However, $\mW_i$ and $\mW_o$ are joint distributions of different rounds, which contain multiple local learning processes. Hence, connecting Eq. (\ref{eq:informationflow}) to the internal flow is difficult. To resolve this issue, we unfold the recurrent process to a time-dependent Markov chain. As illustrated in Fig.~\ref{fig:unfoldinfoflow}, there is only one local optimization process within a round from $\mW^{(t)}_{i}$ to $\mW^{(t+1)}_{i}$, hence we can analyze MI increment at round $t$.

Specifically, according to Eq.~(\ref{eq:local_sgd_FL}) and Eq.~(\ref{eq:aggregation_FL}), the relationship between $\mW^{(t)}_{i}$ and $\mW^{(t)}_{o}$ in Fig.~\ref{fig:unfoldinfoflow} is
\begin{equation}\label{eq:local_sgd}
  \mW^{(t)}_{o} = \mW^{(t)}_{i} - \eta \cdot \nabla_{\mW}F(\mW^{(t)}_{i};\mD),
\end{equation}
which represents the local learning process with SGD. While for $\mW^{(t)}_{o}$ and $\mW^{(t+1)}_{i}$, the relationship is
\begin{equation}\label{eq:aggregation}
  \mW^{(t+1)}_{i} = \mathA^{(t)}(\mW^{(t)}_{o}; \mV^{(t)}),
\end{equation}
where $\mV^{(t)}$ are the variables uploaded by clients other than the victim.

Then based on the unfolded process, we transform the MI in Eq. (\ref{eq:informationflow}) to the joint mutual information as
\begin{align}\label{eq:unfolded_mi}
&&&I(\mD; \; \mW^{(0)}_{i}, \mW^{(0)}_{o}, \cdots , \mW^{(n)}_{i}) \notag\\
&=&& \textstyle\sum_{t=0}^{n} I(\mD; \mW^{(t)}_{i}| \mW^{(t-1)}_{o},\mW^{(t-1)}_{i},\cdots,\mW^{(0)}_{o},\mW^{(0)}_{i}) \notag\\
&&&+ \textstyle\sum_{t=0}^{n-1} I(\mD; \mW^{(t)}_{o}| \mW^{(t)}_{i},\mW^{(t-1)}_{o},\cdots,\mW^{(0)}_{o},\mW^{(0)}_{i}) \notag\\
&=&& \underbrace{\textstyle\sum_{t=0}^{n-1} I(\mD; \mW^{(t)}_{o}| \mW^{(t)}_{i})}_{\Gamma_{client}} + \underbrace{\textstyle\sum_{t=0}^{n-1} I(\mD; \mW^{(t+1)}_{i}| \mW^{(t)}_{o})}_{\Gamma_{server}} \notag\\
&&&+ \underbrace{I(\mD; \mW^{(0)}_{i})}_{\text{Prior}},
\end{align}
where the first equality depends on the chain rule of MI and the last equality is an immediate consequence of the Markov property of the learning process.

\begin{figure}[tb]
\begin{center}
\begin{minipage}[b]{\columnwidth}
\centering
\includegraphics[width=\columnwidth]{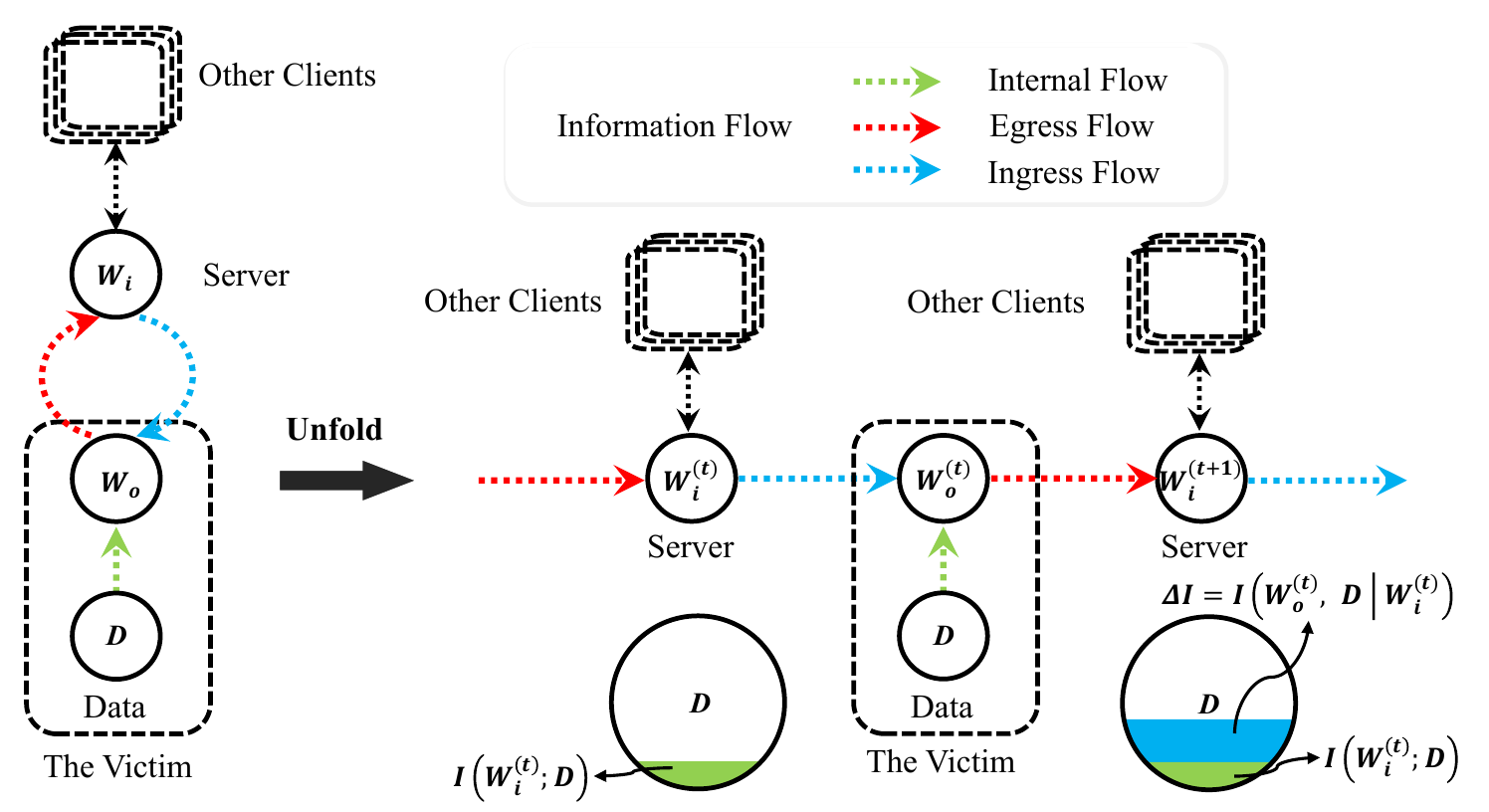}
\vskip -0.1in
\caption{The process of FL can be unfolded to a time-dependent Markov chain. Hence we can analyze the mutual information in a round from $\mW^{(t)}_{i}$ to $\mW^{(t+1)}_{i}$.}\label{fig:unfoldinfoflow}
\vskip -0.2in
\end{minipage}
\end{center}
\vskip -0.2in
\end{figure}

Specifically, Eq.~(\ref{eq:unfolded_mi}) indicates that the overall MI is comprised of three parts: the prior, $\Gamma_{server}$, and $\Gamma_{client}$.

\subsection{Analysis of Mutual Information}
Regarding Eq.~(\ref{eq:unfolded_mi}), the information can be divided into three parts: the prior, $\Gamma_{server}$, and $\Gamma_{client}$. The prior knowledge $I(\mD; \mW^{(0)}_{i})$ is decided by the background knowledge of the attacker before the learning process.

For $\Gamma_{server}$, it represents the aggregation process on the server, \qi{hence it cannot be controlled by the local learning process. On the contrary, it can be exploited by an attacker}. Without loss of generality, we focus on the general term $I(\mD; \mW^{(t+1)}_{i}| \mW^{(t)}_{o})$, $t \in \{0,\cdots , n-1\}$, which is the information increment on the server. Based on the relationship between the MI and the entropy, we rewrite it as
\begin{align*}
    &\; I(\mD; \mW^{(t+1)}_{i}| \mW^{(t)}_{o}) \\
  = &\; h(\mW^{(t+1)}_{i}| \mW^{(t)}_{o}) - h(\mW^{(t+1)}_{i}| \mD, \mW^{(t)}_{o}) \\
  = &\; h(\mathA^{(t)}(\mW^{(t)}_{o}; \mV^{(t)})| \mW^{(t)}_{o}) - h(\mathA^{(t)}(\mW^{(t)}_{o}; \mV^{(t)})| \mD, \mW^{(t)}_{o}),
\end{align*}
the last equality is a substitution according to Eq.~(\ref{eq:aggregation}). The formula indicates that $I(\mD; \mW^{(t+1)}_{i}| \mW^{(t)}_{o})$ is decided by $\mV^{(t)}$, which are variables independent of local learning process.

Specifically, if $\mV^{(t)}$ is independent of $\mD$, then $\mD\rightarrow\mW^{(t)}_{o}\rightarrow\mW^{(t+1)}_{i}$ forms a Markov chain. Therefore, given the observation of $\mW^{(t)}_{o}$, $\mD$ is conditional independent of $\mW^{(t+1)}_{i}$, i.e., $I(\mD; \mW^{(t+1)}_{i}| \mW^{(t)}_{o})=0$. In this situation, $\Gamma_{server}=0$, which means the server (i.e., the attacker) cannot affect DRA attacks. \qi{On the contrary, if $\mV^{(t)}$ has the information of $\mD$ that is independent of $\mW^{(t)}_{o}$, i.e., the attacker can get auxiliary information other than the local learning process, the mutual information $I(\mD; \mW^{(t+1)}_{i}| \mW^{(t)}_{o})>0$, which increases the risk of information leakage for the rest communication rounds when $T\geq t+1$. In this situation, the server (i.e., the attacker) can increase the risk of privacy by utilizing the information collected by means other than FL.}

\qi{However, the analysis of $\Gamma_{server}$ is beyond the protection of the local learning process in FL, since $\Gamma_{server}$ is related to the auxiliary information for the attacker to conduct the DRA attack.
Particularly, our target is to bound $\Gamma_{client}$, which is the information leakage of the local learning process in FL. Sec.~\ref{sec:info_ctr} indicates that regardless of $\Gamma_{server}>0$ or not, $\Gamma_{client}$ is constrained by our methods.}

\qi{Finally, we put emphasis on $\Gamma_{client}$, the most important part that is correlated to information leakage of the local learning process in FL.} Similarly, we focus on the general term of $\Gamma_{client}$, i.e., $I(\mD; \mW^{(t)}_{o}| \mW^{(t)}_{i})$, which is the MI increment at round $t$, then we have
\begin{equation}\label{eq:gamma_in}
  I(\mD; \mW^{(t)}_{o}| \mW^{(t)}_{i}) = h(\mW^{(t)}_{o}| \mW^{(t)}_{i}) - h(\mW^{(t)}_{o}| \mD, \mW^{(t)}_{i}).
\end{equation}
Based on Eq~(\ref{eq:local_sgd}), if $\mW^{(t)}_{i}$ is observed, $\mW^{(t)}_{o}$ is decided by a deterministic function of $\mD$, which has a finite entropy. Moreover, if $\mW^{(t)}_{i}$ and $\mD$ are both observed, $\mW^{(t)}_{o}$ is deterministic, thereby $h(\mW^{(t)}_{o}| \mD, \mW^{(t)}_{i}) \to -\infty$. The result is reasonable since the volume of a constant's support set\footnote{The volume of support set for a random variable $\mX$ is $2^{h(\mX)}$ \cite{DBLP:books/daglib/0016881}.} goes to $0$, i.e., $2^{-\infty} = 0$. In this case, $I(\mD; \mW^{(t)}_{o}| \mW^{(t)}_{i}) \to +\infty$, which means a noiseless channel results in unlimited risk of privacy leaks.

To limit the information leakage, we add a Gaussian noise to $\mW^{(t)}_{o}$, which transforms Eq.~(\ref{eq:gamma_in}) to a noisy egress flow. Specifically, we turn to analyze
\begin{equation}\label{eq:noise_channel}
  \widetilde{\mW}^{(t)}_{o} = \mW^{(t)}_{o} + \bm{\xi},
\end{equation}
where $\bm{\xi}\sim \mathN(\bm{0}, \mSigma)$, and $\mW^{(t)}_{o}=\lim_{\mSigma \to \bm{0}}\widetilde{\mW}^{(t)}_{o}$. Then the property of $\widetilde{\mW}^{(t)}_{o}$ is implied by following lemma.

\begin{lemma}[Maximum entropy distribution]\label{lemma:upper_bound_entropy}
  Let $\mX$ be a continuous random vector with $\mathE[\mX] = \vmu_{\mX}$, $Cov(\mX) = \mSigma_{\mX}$. Let $\mY \sim \mathN(\vmu_{\mY}, \mSigma_{\mY})$ be a Gaussian random variable that is independent with $\mX$, then $h(\mX+\mY)$ achieves its maximum when $\mX \sim \mathN(\vmu_{\mX}, \mSigma_{\mX})$.
\end{lemma}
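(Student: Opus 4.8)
The plan is to reduce the statement to the maximum-entropy characterization of the Gaussian already recorded in the Differential Entropy paragraph, namely that among all continuous random vectors with a prescribed mean and covariance the differential entropy is maximized by the Gaussian. First I would introduce the sum $\mZ = \mX + \mY$ and compute its first two moments. Since $\mX$ and $\mY$ are independent, $\mathE[\mZ] = \vmu_{\mX} + \vmu_{\mY}$ and $Cov(\mZ) = \mSigma_{\mX} + \mSigma_{\mY}$, the cross-covariance terms vanishing by independence. The crucial observation is that these two moments depend only on the prescribed quantities $\vmu_{\mX}, \mSigma_{\mX}, \vmu_{\mY}, \mSigma_{\mY}$ and not on any finer feature of the law of $\mX$; hence as $\mX$ ranges over all admissible laws, $\mZ$ ranges over random vectors sharing one fixed mean and one fixed covariance.

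Next I would apply the maximum-entropy inequality to $\mZ$ directly: for every admissible $\mX$,
\[
  h(\mX + \mY) = h(\mZ) \le h\bigl(\mathN(\vmu_{\mX}+\vmu_{\mY},\, \mSigma_{\mX}+\mSigma_{\mY})\bigr),
\]
with equality exactly when $\mZ$ is Gaussian. This single inequality already furnishes a uniform upper bound on $h(\mX+\mY)$ over all choices of $\mX$ with the stipulated moments, and the bound does not depend on which admissible $\mX$ we picked.

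To finish I would exhibit a maximizer, i.e. show the bound is attained by the Gaussian choice. If $\mX \sim \mathN(\vmu_{\mX}, \mSigma_{\mX})$, then $\mZ = \mX + \mY$ is a sum of two independent Gaussian vectors, hence itself Gaussian with mean $\vmu_{\mX}+\vmu_{\mY}$ and covariance $\mSigma_{\mX}+\mSigma_{\mY}$; its entropy therefore equals the right-hand side above, so the supremum is achieved precisely at the Gaussian $\mX$, as claimed.

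The routine parts are the moment computation and the closure of Gaussians under independent summation. The one genuine subtlety lies in the equality/uniqueness direction: the inequality says equality forces $\mZ$ to be Gaussian, but to conclude that \emph{only} a Gaussian $\mX$ attains the maximum one needs that $\mZ = \mX+\mY$ Gaussian with $\mY$ an independent Gaussian forces $\mX$ to be Gaussian, which is Cramér's decomposition theorem. For the statement as phrased (the maximum is achieved at Gaussian $\mX$) attainability alone suffices, but I would flag Cramér's theorem as the tool that upgrades this to a characterization of the unique maximizer.
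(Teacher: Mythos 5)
Your proposal is correct and follows essentially the same route as the paper's proof: compute the mean and covariance of $\mX+\mY$ via independence, bound $h(\mX+\mY)$ by the entropy of the Gaussian with those moments, and note the bound is attained because the sum of independent Gaussians is Gaussian. Your additional remark on Cram\'er's decomposition theorem for uniqueness of the maximizer goes beyond what the paper proves (and what the lemma as stated requires), but it is a valid observation.
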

%
%

To analyze the noisy egress flow after transformation, we rewrite Eq.~(\ref{eq:gamma_in}) as
\begin{equation}\label{eq:extened_gamma_in}
    I(\mD; \widetilde{\mW}^{(t)}_{o}| \mW^{(t)}_{i}) = h(\mW^{(t)}_{o}+\bm{\xi}| \mW^{(t)}_{i}) - h(\mW^{(t)}_{o}+\bm{\xi}| \mD, \mW^{(t)}_{i}).
\end{equation}

As $\bm{\xi}$ is a Gaussian variable, Lemma~\ref{lemma:upper_bound_entropy} implies that the first term on the right-hand side of Eq.~(\ref{eq:extened_gamma_in}), i.e., $h(\mW^{(t)}_{o}+\bm{\xi}| \mW^{(t)}_{i})$, is upper bounded by the entropy of the Gaussian distribution. Moreover, when $\mD$ and $\mW_i^{(t)}$ are both observed, $\mW^{(t)}_o$ is a constant, which means the only randomness of $\widetilde{\mW}^{(t)}_{o}=\mW^{(t)}_{o}+\bm{\xi}$ comes from $\bm{\xi}$, thus the second term on the right-hand side of Eq.~(\ref{eq:extened_gamma_in}) is $h(\mathN(\bm{0}, \mSigma))$. Let $\mathE[\mW^{(t)}_{o}]=\vmu_{\mW}$ and $Cov(\mW^{(t)}_{o})=\mSigma_{\mW}$, we have the upper bound of Eq.~(\ref{eq:extened_gamma_in}) as
\begin{equation}\label{eq:upper_bound_noise_channel}
  I(\mD; \widetilde{\mW}^{(t)}_{o}| \mW^{(t)}_{i} ) \leq  h(\mathN(\vmu_{\mW}, \mSigma_{\mW}+\mSigma))-h(\mathN(\bm{0}, \mSigma)).
\end{equation}

In practice, the distribution of $\mW^{(t)}_{o}$ is extremely complex and time-variant, so we utilize upper bound (\ref{eq:upper_bound_noise_channel}) to limit the information leakage. The important parts of Eq.~(\ref{eq:upper_bound_noise_channel}) are the covariance matrixes of $\mW^{(t)}_{o}$ and $\bm{\xi}$. Moreover, Sec.~\ref{sec:info_ctr} indicates that regardless of $\mSigma_{\mW}$, we can restrict $I(\mD;\widetilde{\mW}^{(t)}_{o} | \mW^{(t)}_{i} )\leq \kappa$, $\forall\, \kappa>0$, by deciding $\mSigma$.


Finally, for multiple local updates, the scenario is slightly different. We denote the number of local steps in one communication round as $E$, and the total number of local steps for all communication rounds as $n$, where $n$ is divisible into $E$. These notations imply that the number of communications is $T = \frac{n}{E}$. Then the issue can be transformed to the one-step case by a time-dependent aggregation method as
\begin{align}\label{time_dependent_agg}
    \mW^{(t)}_{i} = \begin{cases}
            \mathA^{(t)}(\mW^{(t-1)}_{o};\; \mV^{(t-1)}),  & \mbox{if } E\,|\,t\\
            \mW^{(t-1)}_{o}, & \mbox{otherwise}.
          \end{cases}
\end{align}

Where $E\,|\,t$ represents we make aggregation on the server every $E$ steps. If we set $E=n$, the FL problem reduces to classical ML without collaboration.

\qi{Based on the former analysis, the new aggregation rule only changes $\Gamma_{server}$ in Eq.~(\ref{eq:unfolded_mi}), while $\Gamma_{client}$ remains the same. Hence, the privacy analysis for the local optimization process is identical to classical ML. Therefore, our analysis of $\Gamma_{client}$ will focus on classical ML in the remainder of this paper.}

\section{Controlled Parameter Channel}\label{sec:info_ctr}

Based on the former analysis, the important part of defending against DRA is $I(\mD; \widetilde{\mW}^{(t)}_{o}| \mW^{(t)}_{i})$, which represents the MI increment at round $t$. The key parameters to restrict the MI increment are $\mSigma_{\mW}$ and $\mSigma$. In this section, we first propose a method for deciding $\mSigma$ that ensures $I(\mD; \widetilde{\mW}^{(t)}_{o}| \mW^{(t)}_{i})\leq \kappa$.
Then, we transform the operations for constraining MI from the parameter space to the data space and propose three implementation methods for constraining the channel capacity.
Finally, we analyze existing techniques with our theoretical results in defending against DRA.

\subsection{Controlled Channel Capacity}\label{subsec:ctr_parameter_space}

Channel capacity is the maximum ability to transmit information within a single round, i.e., $\max \;I(\mD;\widetilde{\mW}^{(t)}_{o} | \mW^{(t)}_{i} )$, which is the key parameter for constraining the information leakage. 


To constrain the channel capacity, we have Thm~\ref{thm:channel_capacity}.
\begin{theorem}[Channel capacity]\label{thm:channel_capacity}
  Let $\widetilde{\mW}^{(t)}_{o} = \mW^{(t)}_{o}+\sqrt{\sigma} \cdot \bm{\xi}$, where $\sigma \geq 0$ and $\bm{\xi}\sim \mathN(\bm{0}, \rmI)$, if $\vmu^{(t)}$ and $\mSigma^{(t)}$ are the mean vector and covariance matrix of $\mW^{(t)}_{o}$ when $\mW^{(t)}_{i}$ is observed, we have $I(\mD; \widetilde{\mW}^{(t)}_{o} | \mW^{(t)}_{i} ) \leq f^{(t)}(\sigma)$, where
  \begin{equation}\label{eq:info_func}
    f^{(t)}(\sigma) := \frac{1}{2}\sum_{i=1}^{d}\ln\frac{\lambda^{(t)}_i + \sigma}{\sigma}, \; \sigma \in (0, \; +\infty).
  \end{equation}
  where $\lambda^{(t)}_i$ is the i-th eigenvalue of the covariance matrix $\mSigma^{(t)}$ and $d$ represents the dimension of $\mW^{(t)}_{o}$.
\end{theorem}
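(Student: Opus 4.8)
The plan is to reduce the theorem to a direct computation built on the conditional mutual information decomposition already derived in Eq.~(\ref{eq:extened_gamma_in}), specialized to the isotropic noise $\sqrt{\sigma}\,\bm{\xi}$ with $\bm{\xi}\sim\mathN(\bm{0},\rmI)$, so that the noise covariance is $\mSigma=\sigma\rmI$. First I would write $I(\mD; \widetilde{\mW}^{(t)}_{o} \mid \mW^{(t)}_{i}) = h(\widetilde{\mW}^{(t)}_{o}\mid \mW^{(t)}_{i}) - h(\widetilde{\mW}^{(t)}_{o}\mid \mD,\mW^{(t)}_{i})$ and evaluate the two conditional entropies separately.

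The second (subtracted) term is exact. By Eq.~(\ref{eq:local_sgd}), once $\mD$ and $\mW^{(t)}_{i}$ are fixed the clean parameter $\mW^{(t)}_{o}$ is a deterministic function, so the only randomness left in $\widetilde{\mW}^{(t)}_{o}=\mW^{(t)}_{o}+\sqrt{\sigma}\,\bm{\xi}$ comes from the independent noise; hence $h(\widetilde{\mW}^{(t)}_{o}\mid \mD,\mW^{(t)}_{i}) = h(\mathN(\bm{0},\sigma\rmI)) = \tfrac{d}{2}\ln(2\pi e\,\sigma)$. For the first term, conditioning on $\mW^{(t)}_{i}$ gives $\mW^{(t)}_{o}$ conditional mean $\vmu^{(t)}$ and conditional covariance $\mSigma^{(t)}$, so adding the independent Gaussian noise yields conditional covariance $\mSigma^{(t)}+\sigma\rmI$; applying Lemma~\ref{lemma:upper_bound_entropy} (the maximum entropy distribution for a fixed covariance is Gaussian) bounds $h(\widetilde{\mW}^{(t)}_{o}\mid \mW^{(t)}_{i}) \le h(\mathN(\vmu^{(t)},\mSigma^{(t)}+\sigma\rmI)) = \tfrac12\ln\!\big((2\pi e)^d\det(\mSigma^{(t)}+\sigma\rmI)\big)$. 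Subtracting the two entropies cancels the common $\tfrac{d}{2}\ln(2\pi e)$ constant and leaves $I(\mD; \widetilde{\mW}^{(t)}_{o} \mid \mW^{(t)}_{i}) \le \tfrac12\ln\det(\mSigma^{(t)}+\sigma\rmI) - \tfrac{d}{2}\ln\sigma$, which is precisely the conditional form of Eq.~(\ref{eq:upper_bound_noise_channel}).

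Finally I would diagonalize. Since $\mSigma^{(t)}$ is symmetric positive semidefinite with eigenvalues $\lambda^{(t)}_1,\dots,\lambda^{(t)}_d$, the matrix $\mSigma^{(t)}+\sigma\rmI$ shares the same eigenvectors with eigenvalues $\lambda^{(t)}_i+\sigma$, so $\det(\mSigma^{(t)}+\sigma\rmI)=\prod_{i=1}^{d}(\lambda^{(t)}_i+\sigma)$ while $\sigma^{d}=\prod_{i=1}^{d}\sigma$; substituting gives $\tfrac12\ln\prod_{i=1}^{d}\frac{\lambda^{(t)}_i+\sigma}{\sigma}=\tfrac12\sum_{i=1}^{d}\ln\frac{\lambda^{(t)}_i+\sigma}{\sigma}=f^{(t)}(\sigma)$, which is exactly the claim, and positivity of $\lambda^{(t)}_i+\sigma$ for $\sigma>0$ makes every logarithm well defined on the stated domain. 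The only delicate point, and the step I would write most carefully, is the conditional use of Lemma~\ref{lemma:upper_bound_entropy}: the lemma is stated for an unconditional vector plus independent Gaussian noise, so I would either apply it to the conditional law of $\mW^{(t)}_{o}$ given each realization of $\mW^{(t)}_{i}$ and then average over $\mW^{(t)}_{i}$, or invoke a conditional version; throughout one must keep track of the fact that $\bm{\xi}$ is independent of the pair $(\mD,\mW^{(t)}_{i})$ so that the noise contributes its full entropy in the subtracted term and the maximum-entropy bound applies cleanly in the first.
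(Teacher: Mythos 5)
Your proposal is correct and follows essentially the same route as the paper's proof: the same entropy decomposition, the exact Gaussian evaluation of $h(\widetilde{\mW}^{(t)}_{o}\mid \mD,\mW^{(t)}_{i})$, the maximum-entropy bound of Lemma~\ref{lemma:upper_bound_entropy} on $h(\widetilde{\mW}^{(t)}_{o}\mid \mW^{(t)}_{i})$, and the eigendecomposition of $\mSigma^{(t)}+\sigma\rmI$. Your added care about applying the (unconditional) lemma pointwise in $\mW^{(t)}_{i}$ and averaging is a small rigor improvement over the paper, which invokes the lemma on the conditional entropy directly.
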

Based on Lemma~\ref{lemma:upper_bound_entropy}, $I(\mD, \; \widetilde{\mW}^{(t)}_{o} | \mW^{(t)}_{i} )$ achieves its maximum $f^{(t)}(\sigma)$ when $\mW^{(t)}_{o}$ conforms to the Gaussian distribution. According to the Central Limit Theorem, if we use mini-batch SGD, the distribution of $\mW^{(t)}_{o}$ converges to the Gaussian distribution, which means upper bound~(\ref{eq:info_func}) becomes tighter when we utilize larger batch size for local training.


Regarding $\Gamma_{client}$, if we denote $C^{(t)} = f^{(t)}(\sigma^{(t)})$, where $\sigma^{(t)}$ represents $\sigma$ at time $t$, we have $I(\mD; \widetilde{\mW}^{(t)}_{o}| \mW^{(t)}_{i}) \leq C^{(t)}$, which indicates 
$\mW^{(t)}_{i}$ and $\widetilde{\mW}^{(t)}_{o}$ form a communication channel with channel capacity $C^{(t)}$.

Specifically, according to Eq.~(\ref{eq:info_func}), $C^{(t)}$ is decided by two components: $\{\lambda^{(t)}_i\}^{d}_{i=1}$ and $\sigma^{(t)}$. First, $\{\lambda^{(t)}_i\}^{d}_{i=1}$ represent the eigenvalues of $\mSigma^{(t)}$. Based on Eq.~(\ref{eq:local_sgd}), $\mW^{(t)}_{o}$ is related to 
$\mW^{(t)}_{i}$, which is the parameter received from the server. Therefore, the server (i.e., the attacker) can craft $\mW^{(t)}_{i}$ to get more information from the local dataset (as displayed in Fig.~\ref{fig:channel_capacity}).

\begin{figure*}[tb]
\begin{center}
\begin{minipage}[b]{0.29\textwidth}
\vskip -0.1in
\includegraphics[width=\textwidth]{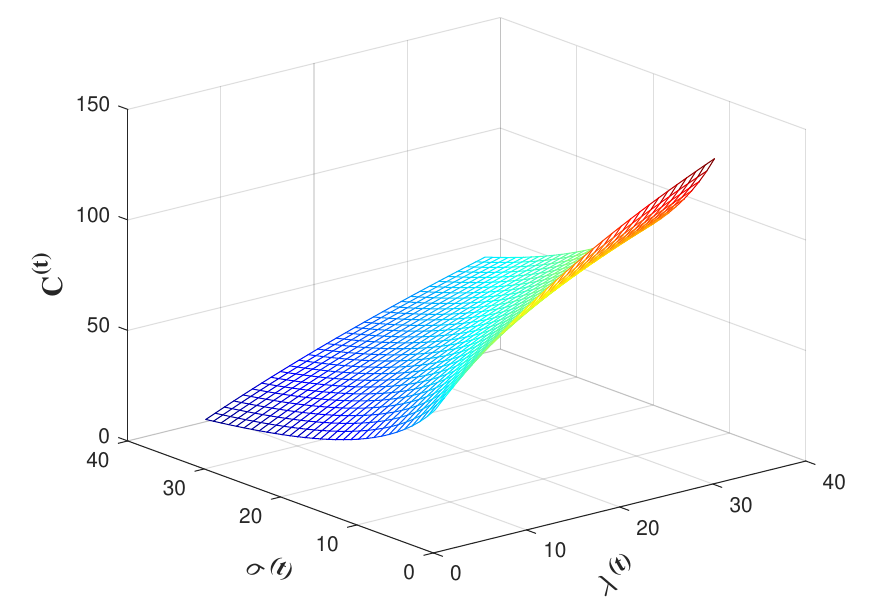}
\vskip -0.1in
\caption{The channel capacity $C^{(t)}$ is the maximum MI increment at round $t$. It is an increasing function of $\lambda^{(t)}$ and a decreasing function of $\sigma^{(t)}$.}\label{fig:channel_capacity}
\end{minipage}
\hfill
\begin{minipage}[b]{0.29\textwidth}
\vskip -0.1in
\includegraphics[width=\textwidth]{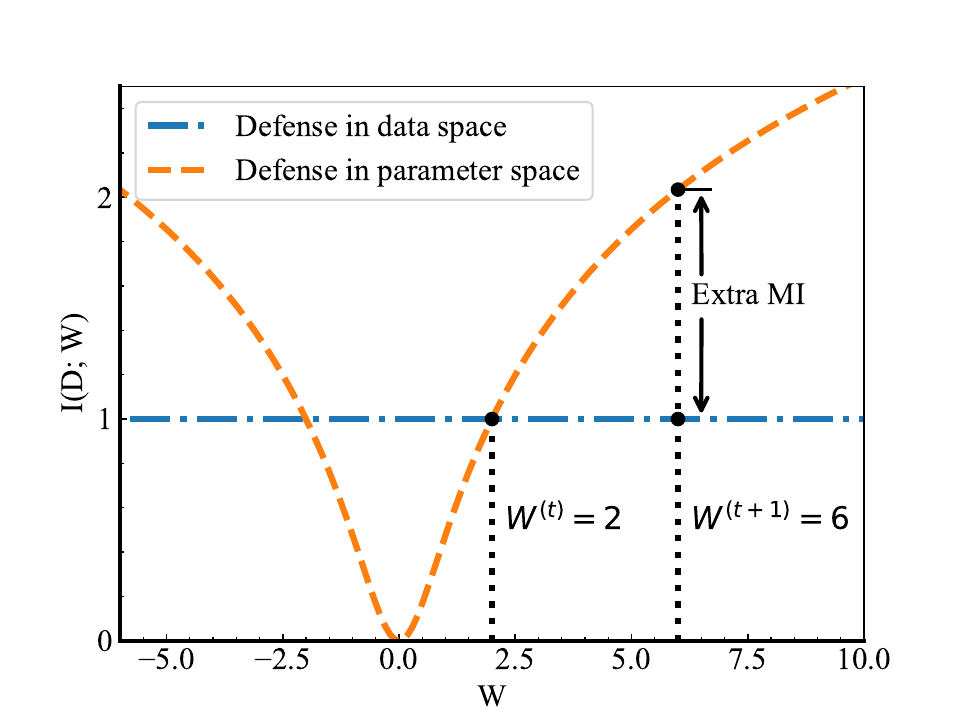}
\vskip -0.1in
\caption{\qi{A toy example to explain the rationale for constraining in data space, which is equivalent to adding an adaptive noise to the parameter.}}\label{fig:exp_rationale_}
\end{minipage}
\hfill
\begin{minipage}[b]{0.34\textwidth}
\vskip -0.1in
\centering
\includegraphics[width=\textwidth]{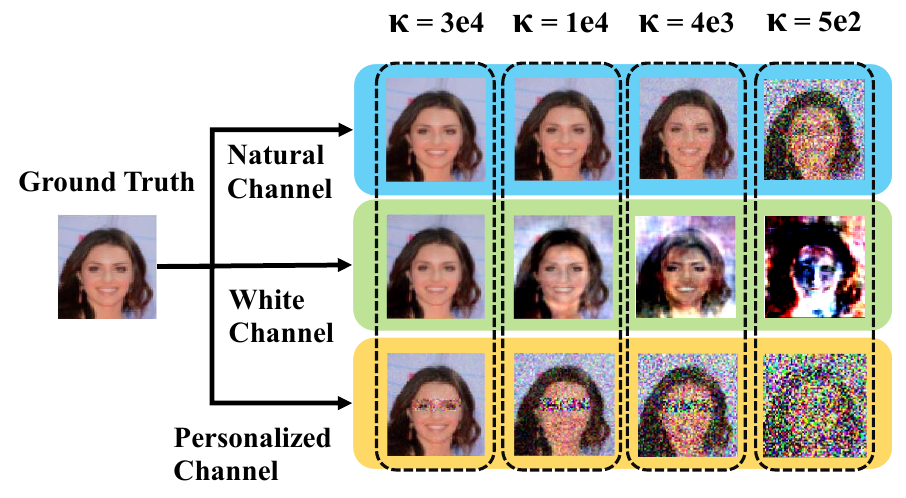}
\caption{Visualizations for CelebA when we apply different channel implementations (Natural, White, and Personalized) and utilize different channel capacities.}\label{fig:Overview_Channel_Capacity}
\end{minipage}
\end{center}
\vskip -0.2in
\end{figure*}

Second, $\sigma^{(t)}$ is related to the added noise. Based on the monotonicity of $C^{(t)}=f^{(t)}(\sigma^{(t)})$, we conclude that for any $\mW^{(t)}_{i}$, there exists an unique $\sigma^{(t)}$ that satisfies $f^{(t)}(\sigma^{(t)}) = \kappa$, where $\kappa \geq 0$ is a certain threshold. 

If we denote $\lambda^{(t)}_{i}=\lambda^{(t)}$, $i\in\{1, \cdots,d\}$, we have $C^{(t)}=\frac{d}{2}\ln(\frac{\lambda^{(t)}+\sigma^{(t)}}{\sigma^{(t)}})$. In this scenario, the channel capacity can be displayed as Fig.~\ref{fig:channel_capacity}. \qi{Specifically, the channel capacity is an increasing function of $\lambda^{(t)}$ and a decreasing function of $\sigma^{(t)}$. Moreover, it is worth noting that even the server can change the channel capacity by crafting $\mW^{(t)}_{i}$, the victim can constrain the transmitted information within $\kappa$ by solving $f^{(t)}(\sigma^{(t)}) = \kappa$ after receiving $\mW^{(t)}_{i}$, which decides the added noise $\sigma^{(t)}$.}

\subsection{Limiting Channel Capacity in Data Space}\label{subsec:ctr_data_space}

For a controlled parameter channel, our target is to constrain the transmitted information at round $t$, i.e., $I(\mD; \widetilde{\mW}^{(t)}_{o}| \mW^{(t)}_{i})$. Based on Thm.~\ref{thm:channel_capacity}, there are two steps for solving the equation $C^{(t)} = \kappa$ in the parameter space: the eigen-decomposition of $\mSigma^{(t)}$ and solving the high order equation with corresponding eigenvalues. Whereas, $\mW^{(t)}_{o}$ is a high-dimensional and time-variant parameter, which leads to an extremely large number of calculations. Hence, the implementation of the aforementioned method in the parameter space is computationally expensive.

Regarding $\mW^{(t)}_{o}$ in Fig.~\ref{fig:unfoldinfoflow}, when $\mW^{(t)}_{i}$ is observed, it is conditional independent with the previous parameters $\mW^{(s)}_{*}$, $\forall s<t$, and all of the previous local learning processes. Therefore, if $\mW^{(t)}_{i}$ is observed, $\mW^{(t)}_{o}$ is purely decided by the local learning process at round $t$, i.e., $\mD \rightarrow \mW^{(t)}_{o} | \mW^{(t)}_{i}$. Then based on these properties, we design a random function $M(\cdot)$ to map the raw data $\mD$ to the noisy data $\widetilde{\mD}$, and then utilize the noisy data for the local training, i.e., $\widetilde{\mW}^{(t)}_{o}=\mW^{(t)}_{i} - \eta \cdot \nabla_{\mW}F(\mW^{(t)}_{i};\widetilde{\mD})$.

If we use the random function $M(\cdot)$ before the local learning process, the variables form a Markov Chain, i.e., $\mD \rightarrow \widetilde{\mD} \rightarrow \mW^{(t)}_{o} | \mW^{(t)}_{i}$. According to the DPI \cite{DBLP:books/daglib/0016881}, we have
\begin{equation}\label{eq:data_para_transfer}
   I(\mD; \mW^{(t)}_{o}| \mW^{(t)}_{i}) \leq I(\mD; \widetilde{\mD}| \mW^{(t)}_{i}) = I(\mD; \widetilde{\mD}),
\end{equation}
where the last equality results from the independence between $M(\cdot)$ and $\mW^{(t)}_{i}$. Therefore, we can bound $I(\mD; \widetilde{\mD})$ so as to restrict $I(\mD; \mW^{(t)}_{o}| \mW^{(t)}_{i})$. Moreover, bounding $I(\mD; \widetilde{\mD})$ enables us to limit the channel capacity in the data space. Specifically, we use  $\widetilde{\mD} = M(\mD)=\mD + \vxi$ as the random function, where $\vxi \sim \mathN(\bm{0}, \mSigma_{\vxi})$, and the key parameter is $\mSigma_{\vxi}$.

\noindent{\bfseries \qi{The rationale of constraining transmitted information in data space.}} \qi{In addition to the upper bound derived by DPI, we can explain the rationale of constraining in data space by Taylor's expansion for a more comprehensible analysis. Specifically, if we use a Gaussian noise in the data space to constrain the information leakage, we can expand the gradient mapping as follows}
\begin{align}\label{eq:rationale_data_space}
    &\nabla_{\mW} F(\mW;\mD + \bm{\xi}) &=& \nabla_{\mW} F(\mW;\mD)+ \nabla_{\mD} \nabla_{\mW} F(\mW;\mD)^{\text{T}} \cdot \bm{\xi} \notag\\
                                       &&&+ O(\|\bm{\xi}\|^2).
\end{align}

\qi{Eq.~(\ref{eq:rationale_data_space}) indicates that constraining the transmitted information in the data space is equivalent to adding an adaptive noise $\nabla_{\mD} \nabla_{\mW} F(\mW;\mD)^{\text{T}} \cdot \bm{\xi}$ to the parameter.}

\qi{Moreover, the coefficient $\nabla_{\mD} \nabla_{\mW} F(\mW;\mD) $
is the variation of $\nabla_{\mW} F(\mW;\mD)$. If $\nabla_{\mW} F(\mW;\mD)$ changes significantly according to the data $\mD$, which means the gradient has a high distinction degree with regard to the data, i.e., the gradient leaks more information, the large coefficient will provide strong privacy protection. Otherwise, the small coefficient leads to better utility without violating the privacy requirements.}

\qi{Additionally, Fig.~\ref{fig:exp_rationale_} provides a toy example for understanding the rationale of constraining in the data space. Specifically, we set $D \sim \mathcal{N}(-1, 1)$ and our target is to decide the model $W$ that minimizes $F(W; D)=W^2 D$ through gradient descent. Particularly, we require the information leakage to be less than $1$, i.e., $C^{(t)}=1$. As illustrated in Fig.~\ref{fig:exp_rationale_}, when we use the method in the parameter space to limit $C^{(t)}=1$ at $W^{(t)}=2$, i.e., adding the noise $\xi \sim \mathcal{N}(0, \frac{16}{e^{2}-1})$ to the gradient $\nabla_{W}F(W;D)=2\,WD$, the resulting noise cannot guarantee $C^{(t+1)}=1$ at next round $W^{(t+1)}=6$ (we set the learning rate to $1$), which implies that we need to recalculate the covariance matrix for the noise at each round. On the contrary, constraining in the data space (adding the noise $\xi \sim \mathcal{N}(0, \frac{1}{e^{2}-1})$ to the data $D$) results in an adaptive noise to guarantee $C^{(t)}=1$ for all $t$ regardless of $W^{(t)}$, leading to the $O(1)$ time-complexity for achieving the privacy requirement.}

By incorporating different prior knowledge, we propose three implementation methods for deciding $\mSigma_{\vxi}$.

\noindent {\bfseries Natural Channel.} \qi{For Natural Channel, the relative importance of different data attributes is naturally decided by the data itself.} Specifically, based on Thm.~\ref{thm:channel_capacity}, with substituting $\widetilde{\mD}$ for $\widetilde{\mW}^{(t)}_{o}$, we have
\begin{equation}\label{eq:data_channel_capacity}
  I(\mD; \widetilde{\mD})\leq f(\sigma)=\frac{1}{2}\sum_{i=1}^{d}\ln\frac{\lambda_i + \sigma}{\sigma}, \; \sigma \in (0, \; +\infty),
\end{equation}
where $\lambda_i$ and $d$ represents the $i$-th eigenvalue and the dimension of data $\mD$, respectively. In this case, we chose $\mSigma_{\vxi}=\sigma \rmI$.
Moreover, $f(\sigma)$ is a monotone function of $\sigma$, hence there is a unique $\sigma$ satisfying $f(\sigma)=\kappa$. Meanwhile, this $\sigma$ guarantees $I(\mD; \widetilde{\mD})\leq f(\sigma)=\kappa$. Combining it with Eq.~(\ref{eq:data_para_transfer}), we conclude that the information leakage at time $t$ is less than $\kappa$. However, $f(\sigma)=\kappa$ is a polynomial equation of order $d$, hence we need to solve it with numerical methods (e.g., binary search). In summary, the process of Natural Channel is
\begin{enumerate}
\setlength{\itemsep}{2pt}
\setlength{\parsep}{0pt}
\setlength{\parskip}{0pt}
  \item Make the eigen-decomposition of $\mSigma_{\mD}$, which is the covariance matrix of data $\mD$, to get $\Lambda = diag(\lambda_1,\cdots,\lambda_d)$
  \item Solve the equation $f(\sigma)=\kappa$ to get $\sigma$ with binary search
  \item Get $\widetilde{\mD} = \mD + \vxi$, where $\vxi \sim \mathN(\bm{0},\; \sigma \rmI)$
\end{enumerate}

\noindent {\bfseries White Channel.}
\qi{In the method of the White Channel, we treat the relative importance of all attributes to be equal, which provides much stronger protection for the local dataset.} For such a purpose, we add a constraint to Eq.~(\ref{eq:data_channel_capacity}) as
\begin{align}\label{eq:white_channel}
  &f(\mPsi) = \frac{1}{2}\sum_{i=1}^{d}\ln\frac{\lambda_i + \sigma_i}{\sigma_i} = \kappa \\
  &s.t. \quad \ln\frac{\lambda_i + \sigma_i}{\sigma_i}=\ln\frac{\lambda_j + \sigma_j}{\sigma_j},\; \text{for } 1\leq i<j\leq d ,\notag
\end{align}
where $\mPsi=\diag (\sigma_1, \cdots, \sigma_d)$ represents the eigenvalues of $\mSigma_{\vxi}$. Then we can get $\sigma_i = \frac{\lambda_i}{\exp(2\kappa/d)-1}$ by solving Eq~(\ref{eq:white_channel}). Finally, we need to transform $\mPsi$ back to $\mSigma_{\vxi}$. According to Eq.~(\ref{eq:sigma_decomp}) in the proof of Thm.~\ref{thm:channel_capacity}, we have $\mSigma_{\mD} = \mQ \bm{\Lambda} \mQ^T$, where $\mQ$ is an orthogonal matrix composed with the eigenvectors of $\mSigma_{\mD}$. Then with a similar process, we have $\mSigma_{\vxi} = \mQ \mPsi \mQ^T$. Therefore, the typical process of the White Channel is
\begin{enumerate}
\vskip -0.1in
\setlength{\itemsep}{2pt}
\setlength{\parsep}{0pt}
\setlength{\parskip}{0pt}
  \item Make the eigen-decomposition of $\mSigma_{\mD}$ to get $\Lambda = diag(\lambda_1,\cdots,\lambda_d)$ and the eigenspace $\mQ$
  \item Get $\mPsi=\diag (\sigma_1, \cdots, \sigma_d)$ by $\sigma_i = \frac{\lambda_i}{\exp(2\kappa/d)-1}$
  \item Get $\mSigma_{\vxi} = \mQ \mPsi \mQ^T$
  \item Get $\widetilde{\mD} = \mD + \vxi$, where $\vxi \sim \mathN(\bm{0},\; \mSigma_{\vxi})$
\end{enumerate}


\noindent {\bfseries Personalized Channel.} 
In practice, the relative importance of different attributes in data is different. For example, \cite{yang2022digital} claims that for online diagnosis, our target is to extract disease-relevant features but remove identity features from the facial images of patients, which means we add more noise to the identity features compared to the disease-relevant features. For analyzing the problem, we assume the relative importance of different dimensions is $\vbeta=(\beta_0, \cdots, \beta_{d-1})^T$, e.g., (height, weight)=(1, 2) represents the relative importance of the height to the weight is $1:2$.

The relative importance decides the level of noise addition. In other words, if the attribute is more important, we add more noise to it for stronger protection. To utilize the prior knowledge $\vbeta$, we chose $\mSigma_{\vxi}=\sigma \cdot \diag(\vbeta)$. According to the proof of Thm.~\ref{thm:channel_capacity}, $\sigma$ is decided by

\begin{equation}\label{eq:personalized_channel_equation}
  f(\sigma) = \frac{\ln\det[\mSigma_{\mD} + \sigma \cdot \diag(\vbeta)]}{2\sum_{i=1}^{d}\ln (\sigma \cdot \beta_{i})} = \kappa.
\end{equation}
However, the intrinsic correlations of local data bring difficulties in solving Eq.~(\ref{eq:personalized_channel_equation}), so we derive the following theorem for simplifying the calculation.

\begin{theorem}[Upper bound of Personalized Channel.]\label{thm:upper_bound_personalized}
\vskip -0.05in
   Let $\mSigma = \mSigma_{d-1}$, we can rewrite $\mSigma$ as
   \begin{align}\label{matrix_decomp_pre}
   \mSigma_{i} = &\left( \begin{array}{cc}
                              \mSigma_{i-1} &  \vrho_{i}\\
                                 \vrho_{i}^\mathrm{T} &   c_{i,i}
                            \end{array}\right),
   \; i\in \{0,\cdots,d-1\},
   \end{align}
   where $d$ represent the dimension of $\mSigma$, then we have
   \begin{align}\label{eq:upper_bound_personalized}
     &&& \ln\det(\mSigma + \diag(\vbeta))\leq \min[\sum_{i=0}^{d-1}\ln(c_{i,i}+\beta_{i}),\sum_{i=0}^{d-1}\ln(u_i+k_i)], \notag \\
     &&& u_i  =c_{i,i}-\vrho_i^\mathrm{T}\mSigma^{-1}_{i-1}\vrho_i,\; k_i =\beta_{i}+(\mSigma^{-1}_{i-1}\vrho_i)^\mathrm{T}\diag(\vbeta)(\mSigma^{-1}_{i-1}\vrho_i). \notag
   \end{align}
\vskip -0.05in
\end{theorem}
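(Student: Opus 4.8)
The plan is to reduce $\ln\det(\mSigma + \diag(\vbeta))$ to a sum of one‑dimensional \emph{Schur‑complement increments} generated by the block recursion in~(\ref{matrix_decomp_pre}), and then bound each increment in two independent ways. Writing $M_i := \mSigma_i + \diag(\beta_0,\dots,\beta_i)$, the recursion~(\ref{matrix_decomp_pre}) shows that $M_i$ has leading principal submatrix $M_{i-1}$, border column $\vrho_i$, and corner entry $c_{i,i}+\beta_i$, so the block‑determinant identity gives $\det M_i = \det M_{i-1}\cdot s_i$ with Schur complement $s_i := (c_{i,i}+\beta_i) - \vrho_i^{\mathrm T} M_{i-1}^{-1}\vrho_i$. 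Telescoping from the empty base case yields $\det(\mSigma+\diag(\vbeta)) = \prod_{i=0}^{d-1} s_i$; since $\mSigma+\diag(\vbeta)\succ 0$ every $s_i>0$, so after taking logarithms the target becomes $\sum_i \ln s_i \le \min[\sum_i\ln(c_{i,i}+\beta_i),\ \sum_i\ln(u_i+k_i)]$. It therefore suffices to prove $s_i \le c_{i,i}+\beta_i$ and $s_i \le u_i+k_i$ for every $i$, after which the minimum follows trivially because both quantities are upper bounds.

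The first bound is immediate: $M_{i-1}\succ 0$ forces $M_{i-1}^{-1}\succ 0$, hence $\vrho_i^{\mathrm T}M_{i-1}^{-1}\vrho_i\ge 0$ and $s_i \le c_{i,i}+\beta_i$ (this is Hadamard's inequality read off the recursion). The second bound is the crux. Abbreviating $A := \mSigma_{i-1}$, $B := \diag(\beta_0,\dots,\beta_{i-1})$ (the $i\times i$ restriction of $\diag(\vbeta)$ appearing in $k_i$), and $v := \vrho_i$, the inequality $s_i\le u_i+k_i$ is equivalent, after cancelling $c_{i,i}+\beta_i$, to
\begin{equation*}
  v^{\mathrm T}\big[A^{-1}-(A+B)^{-1}\big]v \;\le\; v^{\mathrm T}A^{-1}BA^{-1}v .
\end{equation*}
I would rewrite the resolvent difference with the identity $A^{-1}-(A+B)^{-1}=A^{-1}B(A+B)^{-1}$, so that the claim collapses to $v^{\mathrm T}A^{-1}BA^{-1}B(A+B)^{-1}v\ge 0$.

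The main obstacle is that the matrix $A^{-1}BA^{-1}B(A+B)^{-1}$ is \emph{not} symmetric, so nonnegativity of the quadratic form cannot be read off directly. My plan to clear this is a congruence trick: since $A\succ 0$, set $\widetilde B := A^{-1/2}BA^{-1/2}\succeq 0$ and $z := A^{-1/2}v$. A short computation (pushing the $A^{\pm 1/2}$ factors through) collapses the product to $A^{-1/2}\,\widetilde B^{2}(\mI+\widetilde B)^{-1}A^{-1/2}$, so the quadratic form equals $z^{\mathrm T}\widetilde B^{2}(\mI+\widetilde B)^{-1}z$. Because $\widetilde B^{2}(\mI+\widetilde B)^{-1}=g(\widetilde B)$ with $g(\lambda)=\lambda^{2}/(1+\lambda)\ge 0$ on the spectrum of $\widetilde B\succeq 0$, this matrix is symmetric positive semidefinite and the form is nonnegative, giving $s_i\le u_i+k_i$.

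Finally I would assemble the two per‑index bounds through $\sum_i\ln s_i$, monotonicity of $\ln$, and the trivial minimum, and check the base index $i=0$ (where $\vrho_0$ is empty, so $s_0=c_{0,0}+\beta_0=u_0+k_0$ and both bounds hold with equality). I expect the reduction to Schur complements and the first bound to be routine, with essentially all of the effort concentrated in the symmetrization step that makes $A^{-1}BA^{-1}B(A+B)^{-1}$ manifestly positive semidefinite.
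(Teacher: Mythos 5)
Your proposal is correct and follows essentially the same route as the paper: the same Schur-complement recursion $\det(\mSigma_i+\diag(\vbeta_i)) = s_i\cdot\det(\mSigma_{i-1}+\diag(\vbeta_{i-1}))$, the same telescoping of $\ln\det$, and the same two per-index bounds $s_i\le c_{i,i}+\beta_i$ and $s_i\le u_i+k_i$. The only divergence is in the justification of the second per-index bound: where you symmetrize $A^{-1}BA^{-1}B(A+B)^{-1}$ by congruence with $A^{-1/2}$ and invoke the functional calculus for $g(\lambda)=\lambda^2/(1+\lambda)\ge 0$, the paper instead expands $(\mA+\mB)^{-1}=\mA^{-1}-\mA^{-1}(\mA^{-1}+\mB^{-1})^{-1}\mA^{-1}$ and applies its Lemma~\ref{lemma:reverse_matrix} (namely $\valpha^{\mathrm T}(\mA+\mB)^{-1}\valpha\le\valpha^{\mathrm T}\mB^{-1}\valpha$) --- two equivalent standard positivity arguments for the same inequality.
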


Combining Thm.~\ref{thm:upper_bound_personalized} with Eq.~(\ref{eq:personalized_channel_equation}), we can get an upper bound of the channel capacity for the Personalized Channel
$$ U(\sigma)=\frac{\min[\sum_{i=0}^{d-1}\ln(c_{i,i}+\sigma\beta_{i}), \sum_{i=0}^{d-1}\ln(u_i+\sigma k_i)]}{2\sum_{i=1}^{d}\ln (\sigma \cdot \beta_{i})}.$$

Moreover, $U(\sigma)$ decouples $\mSigma$ and $\diag(\vbeta)$,
hence we can use a pre-processing to reduce the calculation of solving $U(\sigma)=\kappa$. The process of the Personalized Channel is
\begin{enumerate}
\setlength{\itemsep}{2pt}
\setlength{\parsep}{0pt}
\setlength{\parskip}{0pt}
  \item Get $u_i$ and $k_i$ according to Thm.~(\ref{thm:upper_bound_personalized})
  \item Solve the equation $U(\sigma)=\kappa$ to get $\sigma$ with binary search
  \item Get $\mSigma_{\vxi} = \sigma \cdot \diag(\vbeta)$
  \item Get $\widetilde{\mD} = \mD + \vxi$, where $\vxi \sim \mathN(\bm{0},\; \mSigma_{\vxi})$
\end{enumerate}

Finally, we visualize $\widetilde{\mD}$ of different implementations according to different channel capacities in Fig.~\ref{fig:Overview_Channel_Capacity}. \qi{Moreover, we discuss the advantages and the disadvantages of different channel implementations in Appendix~\ref{sec:advantage_and_disadvantage}.}

\qi{Furthermore, constraining in the data space brings two advantages: first, compared to the parameter space, data space is white-box, low-dimensional, and time-invariant. Second, in the data space, the relative importance of attributes is preserved, which makes it easier to leverage prior knowledge.}

\noindent{\bfseries Guidelines for noise injection.} \qi{In summary, for defending against DRA, the important target is to restrict the transmitted information by noise addition. 
Here we provide guidelines for the noise injection:}
\begin{itemize}
\setlength{\itemsep}{2pt}
\setlength{\parsep}{0pt}
\setlength{\parskip}{0pt}
    \item \qi{For privacy-enhancing techniques (e.g., DP and gradient compression) in FL, the reconstruction error for DRA is theoretically above a threshold (Thm.~\ref{mse_lower_bound}) when we restrict the transmitted information within $\kappa$. It can be achieved by solving $f(\sigma)=\kappa$, where $f(\sigma)$ is defined in Eq.~(\ref{eq:data_channel_capacity}). This equation decides the injection noise with corresponding statistics.}
    \item \qi{For defending against DRA in FL, we can inject noise to the training data instead of transmitted parameters. This transformation can produce an adaptive noise, which reduces the computational complexity brought by the high dimensional and time-variant system in FL.}
    \item \qi{For any pre-processing process such as embedding, we can get the same theoretical guarantee by substituting $\mSigma_{embedding}$ for $\mSigma_{\mD}$ in Eq.~(\ref{eq:data_channel_capacity}), where $\mSigma_{embedding}$ and $\mSigma_{\mD}$ are covariance matrixes calculated by embeddings and data, respectively.}
    \item \qi{In our method, a larger batch size leads to a tighter upper bound and a stronger ability to defend against DRA.}
\end{itemize}

\subsection{Channel Capacity for Existing Methods}

\begin{figure*}[tb]
\begin{center}
\begin{minipage}[b]{0.3\textwidth}
\vskip -0.15in
\centering
\includegraphics[width=\textwidth]{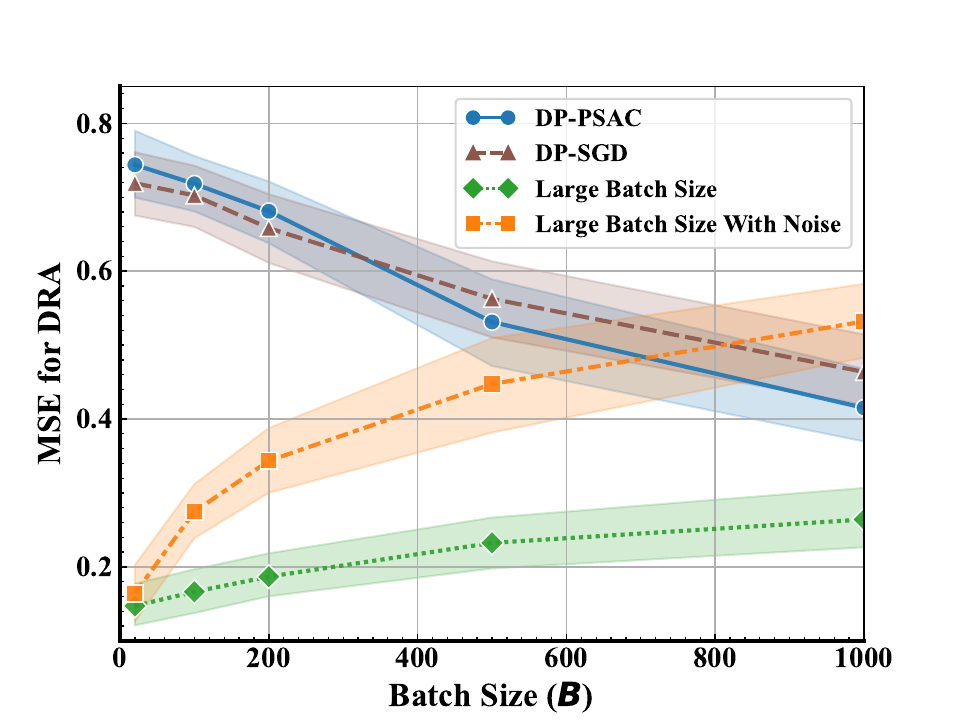}
\caption{\qi{Defensive capabilities of DP and utilizing large batch size according to different batch sizes.}}\label{fig:batch_size_exp}
\end{minipage}
\hfill
\begin{minipage}[b]{0.3\textwidth}
\vskip -0.15in
\centering
\includegraphics[width=\textwidth]{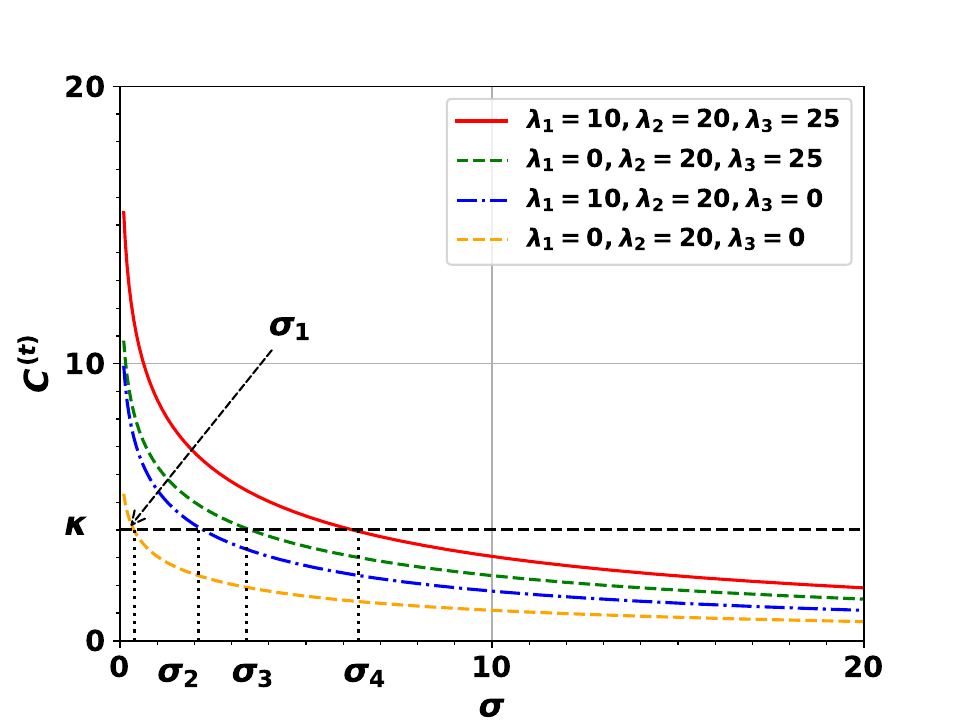}
\caption{Reducing the dimensionality of a gradient is equivalent to adding more noise into the original gradient.}\label{fig:Rel_Compre_Noise}
\end{minipage}
\hfill
\begin{minipage}[b]{0.3\textwidth}
\vskip -0.15in
\centering
\includegraphics[width=\textwidth]{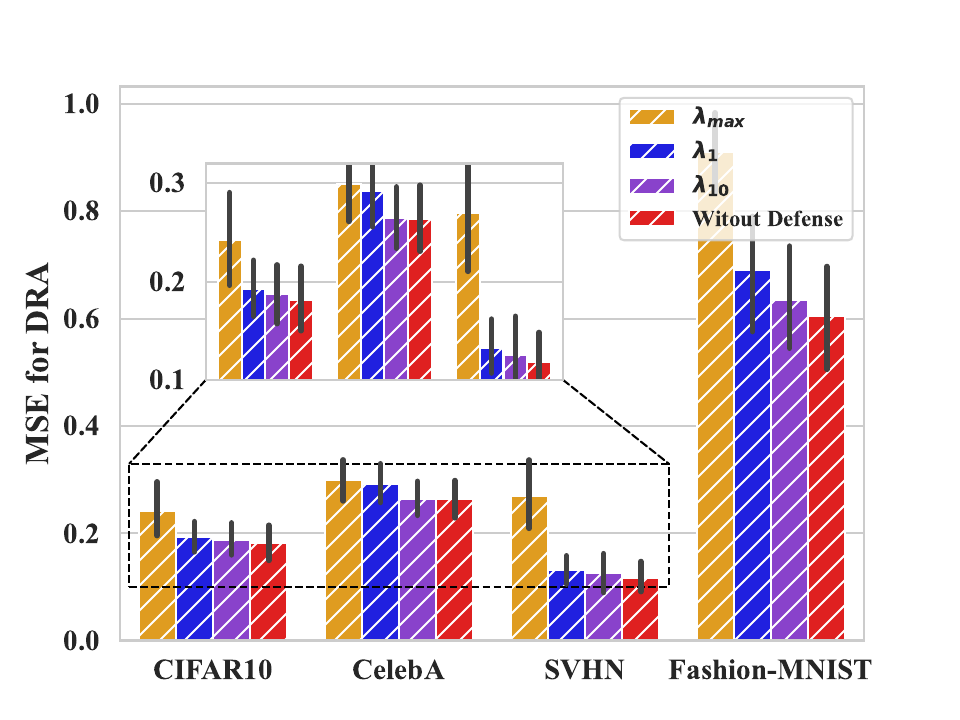}
\caption{\qi{DRA becomes more difficult when we compress the dimension with a large eigenvalue.}}\label{fig:Compression_Eigen}
\end{minipage}
\end{center}
\vskip -0.1in
\end{figure*}

With Thm.~\ref{mse_lower_bound} and Thm.~\ref{thm:channel_capacity}, we can analyze existing methods for defending against DRA
by the channel capacity $C^{(t)}$.

Existing privacy-enhancing methods in FL can be divided into three categories: perturbation, compression, and utilizing large batch size. The method of perturbation is represented by DP. Moreover, we demonstrate that for defending against DRA, the methods of gradient compression and utilizing large batch size are equivalent to adding more noise for perturbation, and the intrinsic mechanism of them are restricting the transmitted information.

\noindent {\bfseries Perturbation.} As a method with theoretical guarantee, DP focuses on the problem of protecting individual information. \qi{In this work, we consider the event-level DP \cite{DBLP:conf/nips/LiuSYK020,DBLP:conf/nips/LevySAKKMS21} because we focus on the privacy issue for a specific client (i.e., the victim). That is, whether the attacker can reconstruct the local dataset with the victim's shared parameters. Specifically, the widely used DP technique in FL is the Gaussian mechanism, thus we analyze the Gaussian mechanism in this section. A typical process of the Gaussian mechanism} for $(\epsilon, \delta)$-DP consists of two stages: gradient clipping, which guarantees the sensitivity of the gradient is bounded; noise addition, which provides the $(\epsilon, \delta)$-DP guarantee based on the bounded sensitivity. To analyze DP based on information theory, we have the following theorem.
\begin{theorem}[Channel capacity for DP]\label{thm:dp_channel_capacity}
    In FL, if the sensitivity of the gradient mapping is upper bounded by $S$, i.e., $\|\vg\|_2 \leq S$, the channel capacity of $(\epsilon, \delta)$-DP, i.e., $C_{DP}$, is upper bounded by following formulas:
    \begin{enumerate}[\quad(1)]
    \setlength{\itemsep}{2pt}
    \setlength{\parsep}{0.5pt}
    \setlength{\parskip}{0pt}
      \item $C_{DP} \leq \frac{B \cdot S^2}{\sigma},$
      \item $C_{DP} \leq \frac{B \cdot \epsilon^2}{2\log(1.25/\delta)},$
    \end{enumerate}
    where $B$ represents the batch size and $\sigma$ is the noise scale.
\end{theorem}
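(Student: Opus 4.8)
The plan is to view the Gaussian mechanism as an instance of the noisy egress channel already analyzed in Thm~\ref{thm:channel_capacity}, and then specialize its bound using the two ingredients that define the mechanism: the gradient-clipping norm $S$ and the noise calibration needed for $(\epsilon,\delta)$-DP. Concretely, in DP-SGD the transmitted quantity is the noisy clipped-gradient query $\widetilde{\vg} = \vg + \bm{\xi}$ with $\bm{\xi}\sim\mathN(\bm{0},\sigma\rmI)$, where $\vg=\sum_{j=1}^{B}\bar{\vg}_j$ is the sum of the $B$ per-sample gradients, each clipped so that $\|\bar{\vg}_j\|_2\le S$. This is exactly the form $\widetilde{\mW}^{(t)}_o=\mW^{(t)}_o+\bm{\xi}$ with $\mW^{(t)}_o=\vg$, so conditioning on $\mW^{(t)}_i$ and invoking Lemma~\ref{lemma:upper_bound_entropy} as in Thm~\ref{thm:channel_capacity} yields $C_{DP}=I(\mD;\widetilde{\vg}\mid\mW^{(t)}_i)\le \tfrac12\sum_{i=1}^{d}\ln(1+\lambda_i/\sigma)$, where $\{\lambda_i\}$ are the eigenvalues of $\Cov(\vg)$ conditioned on $\mW^{(t)}_i$.

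For part (1), I would linearize this bound with $\ln(1+x)\le x$, turning it into a trace bound $C_{DP}\le \tfrac{1}{2\sigma}\sum_i\lambda_i=\tfrac{1}{2\sigma}\tr(\Cov(\vg))$. The key step is then to control $\tr(\Cov(\vg))$ from the clipping constraint: conditioned on $\mW^{(t)}_i$ the per-sample clipped gradients are independent, so $\Cov(\vg)=\sum_{j=1}^B \Cov(\bar{\vg}_j)$, and each summand satisfies $\tr(\Cov(\bar{\vg}_j))\le \mathbb{E}\|\bar{\vg}_j\|_2^2\le S^2$. Hence $\tr(\Cov(\vg))\le B S^2$ and $C_{DP}\le \tfrac{BS^2}{2\sigma}\le \tfrac{BS^2}{\sigma}$, giving (1) (the last step merely relaxes the constant for a cleaner statement).

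For part (2), I would substitute the standard Gaussian-mechanism calibration into (1). The $\ell_2$-sensitivity of the clipped gradient sum is $\Delta=S$, because adding or removing one record changes $\vg$ by a single clipped gradient of norm at most $S$. Achieving $(\epsilon,\delta)$-DP then requires noise variance $\sigma\ge \tfrac{2\Delta^2\ln(1.25/\delta)}{\epsilon^2}=\tfrac{2S^2\ln(1.25/\delta)}{\epsilon^2}$, i.e. $\tfrac1\sigma\le \tfrac{\epsilon^2}{2S^2\ln(1.25/\delta)}$; plugging this into $C_{DP}\le \tfrac{BS^2}{\sigma}$ cancels $S^2$ and yields $C_{DP}\le \tfrac{B\epsilon^2}{2\ln(1.25/\delta)}$.

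I expect the main obstacle to be bookkeeping the batch dependence correctly: the power (trace of covariance) of the aggregated query grows only linearly in $B$ because the per-sample gradients are independent given $\mW^{(t)}_i$, whereas the sensitivity stays at $S$ (one record affects a single summand), and it is precisely this asymmetry that produces the single factor $B$ rather than $B^2$ in both bounds. The other point requiring care is justifying that the maximum-entropy bound of Thm~\ref{thm:channel_capacity} still applies after clipping: clipping makes $\vg$ non-Gaussian, but since the bound is obtained by replacing $\vg$ with the Gaussian of matching covariance (the worst case in Lemma~\ref{lemma:upper_bound_entropy}), it remains valid, and only the second-moment estimate implied by clipping is needed to finish.
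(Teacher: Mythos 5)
Your proposal is correct and follows essentially the same route as the paper's proof: bound the capacity formula of Thm.~\ref{thm:channel_capacity} by the trace of the gradient covariance divided by the noise scale, use clipping to bound that trace by $S^2$ per sample with the factor $B$ arising from the independence of per-sample gradients (the paper phrases this as $\mSigma_{\tilde{\vg}}=\tfrac1B\mSigma_{\vg_i}$ versus $\mSigma_{\tilde{\vxi}}=\tfrac1{B^2}\mSigma_{\vxi}$ for the averaged query, which is the same signal-to-noise accounting as your sum formulation), and substitute the calibration $\sigma\ge 2S^2\log(1.25/\delta)/\epsilon^2$ for part (2). The only cosmetic difference is that you linearize via $\ln(1+x)\le x$ to reach $\tr(\mSigma)/(2\sigma)$ directly, whereas the paper applies AM--GM and then the monotone limit $d\ln\bigl(1+S^2/(d\sigma)\bigr)\uparrow S^2/\sigma$ as $d\to\infty$; both yield the stated bounds.
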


\qi{Compared to the conventional DP theorem, Thm.~\ref{thm:dp_channel_capacity} indicates that batch size $B$ is a key factor of the defense ability to defend against DRA, which has been overlooked by prior literature. Specifically, increasing $B$ reduces DP's ability to defend against DRA, which has been validated by experiments in Fig.~\ref{fig:batch_size_exp} (the details are explained in Appendix~\ref{sec:exp_validating})}.

\noindent {\bfseries Compression.} Another defense technique is compression, it intuitively focuses on reducing transmitted information by reducing the dimension of shared parameters. However, most of these methods lack theoretical guarantees.

Specifically, information is contained in each dimension of the gradient (i.e., the parameter), and the channel capacity decreases accordingly when we reduce the dimension of it. If we analyze the compression in the eigenspace, dimension reduction can be theoretically described by Thm.~\ref{thm:channel_capacity}. \qi{Specifically, the total channel capacity is $C^{(t)}=\frac{1}{2}\sum_{i=1}^{d}\ln\frac{\lambda^{(t)}_i + \sigma}{\sigma}$, and the general term $\frac{1}{2}\ln\frac{\lambda^{(t)}_i + \sigma}{\sigma}$ represents the channel capacity of the $i$-th dimension. Compressing the $i$-th dimension in the eigenspace, i.e., setting $\lambda^{(t)}_i=0$, leads the channel capacity of $i$-th dimension to be $0$, thereby reducing the total channel capacity. As illustrated in Fig.~\ref{fig:Rel_Compre_Noise}, when we compress the gradient, i.e., setting eigenvalues to be $0$, the results are equivalent to adding more noise for perturbation. 
As displayed in Fig.~\ref{fig:Compression_Eigen}, we also conduct experiments to validate our theoretical result, the experiments indicate that the improvement of the defense ability becomes more significant when we compress the dimension with a larger eigenvalue (the details are explained in Appendix~\ref{sec:exp_validating}).} Due to the space limitation, we put the detailed analysis of compression in Appendix~\ref{subsec:more_existing_method}.

\noindent{\bfseries Large Batch Size.} \qi{Another defense strategy for privacy protection in FL is utilizing large batch size \cite{DBLP:conf/nips/ZhuLH19}. Our model can theoretically formalize this strategy. If we denote the batch size as $B$, then the gradient in mini-batch SGD is $ \frac{1}{B}\sum_{i=1}^{B}\nabla_{\mW}F(\mW^{(t)}_{i};\; \mD_i)$, where $\{\mD_i\}_{i=1}^{B}$ represents the set of iid data points sampled from the dataset. The covariance matrix with large batch size is scaled by $B$, i.e., $\mSigma^{(t)}_B=\frac{1}{B}\mSigma^{(t)}$. Hence, with substituting $\mSigma^{(t)}_B$ for $\mSigma^{(t)}$ in Thm.~\ref{thm:channel_capacity}, we have}
\begin{equation}\label{thm:large_batch_size}
    C^{(t)}_B = f_B^{(t)}(\sigma) := \frac{1}{2}\sum_{i=1}^{d}\ln\frac{(\lambda^{(t)}_i/B + \sigma)}{\sigma}. 
\end{equation}
\qi{With the constant noise addition, $C^{(t)}$ is a decreasing function of $B$, which means we can enhance the ability to defend against DRA by increasing $B$. Fig.~\ref{fig:batch_size_exp} experimentally validates this theory and we put the details of Fig.~\ref{fig:batch_size_exp} in Appendix~\ref{sec:exp_validating} due to the space limitation.}

\noindent{\bfseries Guidelines for hyper-parameters.} \qi{Finally, to improve existing defensive algorithms, we provide several guidelines for choosing the hyper-parameters in FL:}
\begin{itemize}
\setlength{\itemsep}{2pt}
\setlength{\parsep}{0.5pt}
\setlength{\parskip}{0pt}
    \item \qi{A smaller batch size in DP algorithm is more effective to defend against DRA.}
    \item \qi{Compressing the dimension with larger eigenvalue results in the stronger ability to defend against DRA.}
    \item \qi{A larger batch size leads to a stronger defense ability to defend against DRA when we add a constant noise to the parameter during training.}
\end{itemize}

\section{Experiment}\label{sec:exp}

\begin{figure*}[htb]
\begin{center}
\begin{minipage}[b]{0.9\textwidth}
\includegraphics[width=\textwidth]{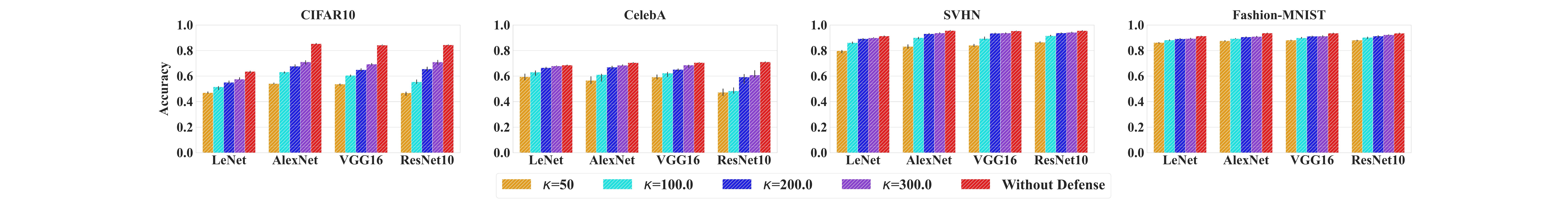}
\vskip -0.1in
\end{minipage}
\vskip -0.1in
\caption{The effect of channel capacities ($\kappa$) for model accuracy (Natural Channel).}\label{fig:ModelACC_ChannelCapacity}
\begin{minipage}[b]{0.9\textwidth}
\includegraphics[width=\textwidth]{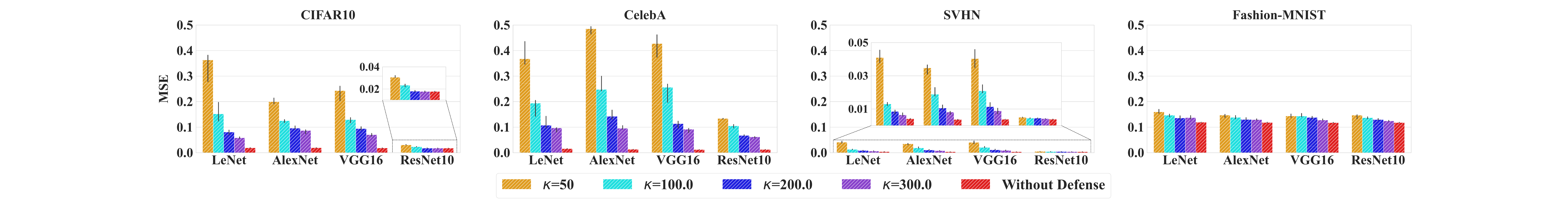}
\vskip -0.1in
\end{minipage}
\vskip -0.1in
\caption{The effect of channel capacities ($\kappa$) for DRA (Natural Channel).}\label{fig:ModelInv_ChannelCapacity}
\vskip -0.2in
\end{center}
\end{figure*}

\begin{figure*}
\begin{center}
\begin{minipage}[b]{0.9\textwidth}
\vskip -0.3in
\includegraphics[width=\textwidth]{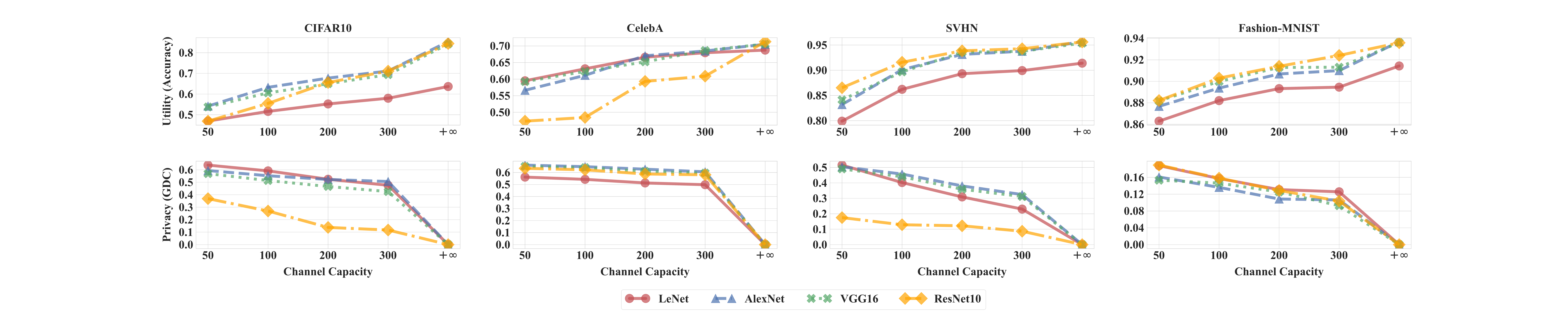}
\vskip -0.1in
\end{minipage}
\vskip -0.1in
\caption{Utility privacy tradeoff according to different channel capacities.}\label{fig:Utility_Privacy_Tradeoff}
\end{center}
\vskip -0.2in
\end{figure*}

In this section, we conduct experiments with various models and datasets to validate our theories and compare our methods with other privacy-enhancing techniques. All experiments are performed upon a Supermicro SYS-420GP-TNR server with two Intel(R) Xeon(R) Gold 6348 CPUs (2$\times$28 cores), Ubuntu 18.04.1, 10GB memory, and four NVIDIA A100 PCIe 80GB GPUs. Meanwhile, to eliminate the impact of randomness, each experiment is repeated 10 times.

\subsection{Experimental Settings}

\noindent{\bfseries Datasets.} Dataset is a task-dependent factor, which means we cannot change the dataset when the training task is decided. In our experiments, we resize the data to 32*32 for comparison, and utilize four classical datasets, including CIFAR10 \cite{2012Learning}, CelebA \cite{DBLP:conf/iccv/LiuLWT15}, SVHN \cite{2011Reading}, and Fashion-MNIST \cite{DBLP:journals/corr/abs-1708-07747}.





\begin{figure*}[htb]
\vskip 0.1in
\begin{center}
\begin{minipage}[b]{0.9\textwidth}
\includegraphics[width=\textwidth]{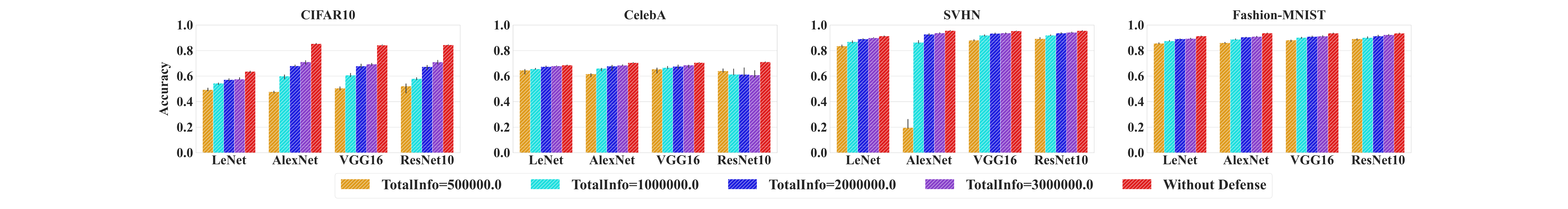}
\vskip -0.1in
\end{minipage}
\vskip -0.1in
\caption{The effect of optimization number ($n$) for model accuracy when $\kappa=300$ (Natural Channel).}\label{fig:ModelACC_Number}
\vskip 0.1in
\begin{minipage}[b]{0.9\textwidth}
\includegraphics[width=\textwidth]{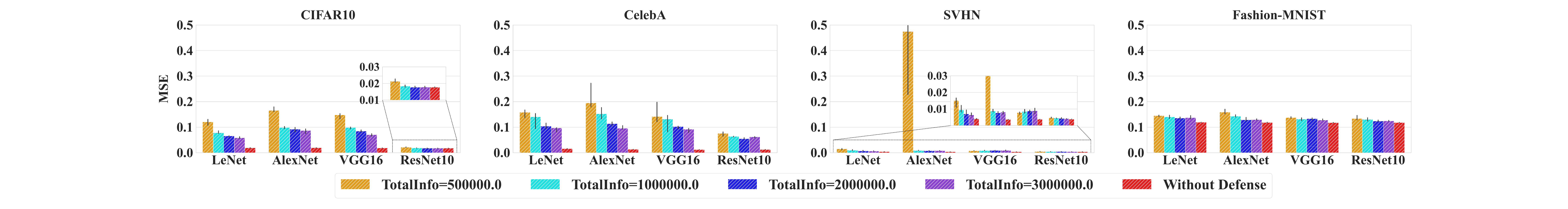}
\vskip -0.1in
\end{minipage}
\vskip -0.1in
\caption{The effect of optimization number ($n$) for DRA when $\kappa=300$ (Natural Channel).}\label{fig:ModelInv_Number}
\begin{minipage}[b]{0.9\textwidth}
\vskip 0.1in
\includegraphics[width=\textwidth]{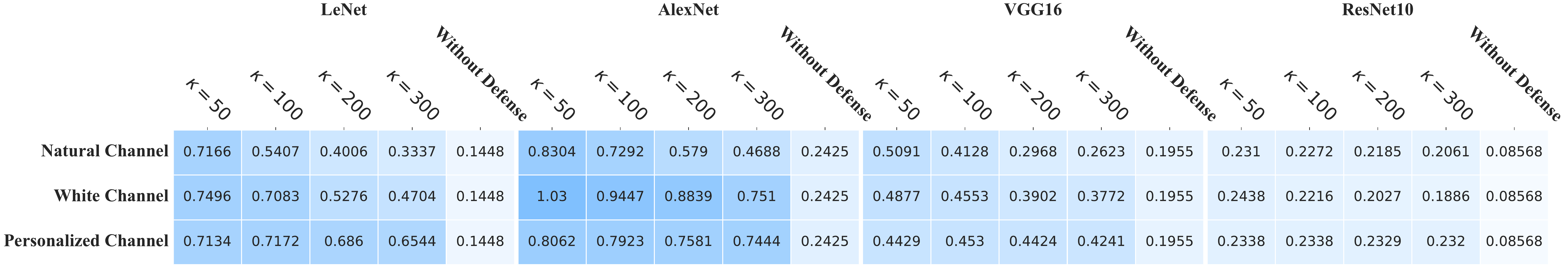}
\vskip -0.1in
\end{minipage}
\caption{Heatmaps of MSE for the gradient inversion attacks on CIFAR-10.}\label{fig:Heat_map_MSE}
\end{center}
\vskip -0.1in
\end{figure*}

\noindent{\bfseries Basic Models.} In this work, we experiment with four classical model architectures, including LeNet \cite{DBLP:journals/pieee/LeCunBBH98}, AlexNet \cite{DBLP:conf/nips/KrizhevskySH12}, VGG16 \cite{DBLP:journals/corr/SimonyanZ14a}, and ResNet10 \cite{DBLP:conf/cvpr/HeZRS16}. Moreover, model architectures are task-independent, we can select architectures according to different goals, e.g., utility or privacy.

\noindent{\bfseries Attacks.} We test our methods on two typical attacks in FL: \emph{MIA} and \emph{DRA}, the details of different attacks are as follows.

\emph{Membership Inference Attack.} For MIA, we use the white-box attack~\cite{DBLP:conf/sp/NasrSH19, DBLP:conf/uss/LiuWH000CF022}. Meanwhile, we employ a partial knowledge attacker, which means the attacker can access to part of the training dataset. In this case, the attacker has much stronger background knowledge. Moreover, we use four inputs for attacking~\cite{DBLP:conf/uss/LiuWH000CF022}: the samples' ranked posteriors, classification loss, gradients of the last layer, and one-hot encoding of the true label. These inputs are fed into different neural networks to get different embeddings, then we concatenate all embeddings as the input of a 4-layer MLP to get the inference.

\emph{Data Reconstruction Attack.} For DRA, we employ two representative attacks: model inversion attack and gradient inversion attack. For model inversion attack~\cite{2015Model,DBLP:conf/uss/LiuWH000CF022}, we first construct a dummy input with auxiliary information (i.e., the mean value of images that are not in the training dataset) as the input for the target model, then utilize different target labels to optimize the dummy input. We use Adam optimizer with a learning rate of 1e-2 for 600 iterations. Additionally, we employ the settings proposed by Geiping et al.~\cite{DBLP:conf/nips/GeipingBD020} to investigate the defense ability for gradient inversion attacks.


\noindent{\bfseries Metric.} We evaluate our design with various criteria, including the model utility, the defense capability against different attacks, the utility-privacy trade-offs, and the efficiency. To this end, we employ the following metrics for evaluation.

\emph{Test Accuracy.} FL searches for an accurate model for classification, hence we use test accuracy as the metric for utility.

\emph{AUC.} We use the attack AUC to measure the attack performance for MIA. \qi{It's worth noting that a smaller AUC means a stronger defense capability.}

\emph{MSE.} \qi{We use MSE as the main metric for DRA since MSE is a general metric that indicates the convergence of random variables. That is, if the MSE of two random variables is 0, we conclude that they have identical distributions \cite{wasserman2004all}, which means a perfect reconstruction. Additionally, a large MSE means a stronger ability to defend against DRA. Moreover, we also utilize PSNR, SSIM, and Cosine Similarity to evaluate the defense ability, we put these results in Appendix~\ref{sec:appendix_exp} due to the space limitation.} Specifically, for model inversion attacks, we calculate the metric between the reversed data and the center of the corresponding class for different classes. We use the median of all classes as the final metric. For gradient inversion attacks, we employ the mean value of the metric between the reconstructed data and the target data.

\emph{General Defense Capability (GDC).} For evaluating the comprehensive defense capability, we use the improvements of the aforementioned attacks. Specifically, for MIA, we define the improvement as
$$ IMP_{MIA} =({AUC}_{without\_def} - {AUC})/{AUC}_{without\_def}.$$
While the improvement for DRA is
$$IMP_{DRA} = (\frac{1}{MSE_{without\_def}} - \frac{1}{MSE})/\frac{1}{{MSE}_{without\_def}}.$$
Finally, we define general defense capability as
$$GDC=(IMP_{MIA} + IMP_{DRA})/{2},$$
and a larger $GDC$ means a stronger defense capability.

\noindent{\bfseries Default parameter configuration.} \qi{In our experiments, we mainly utilize natural channel to investigate the channel parameters. Specifically, we fix the optimization rounds as $n=1\times 10^4$ and utilize $\kappa$ in \{50, 100, 200, 300\} to investigate the effect of different channel capacities. Additionally, we fix $\kappa=300$ and utilize $TotalInfo$ in $\{5\times10^5,  1\times10^6, 2\times10^6, 3\times10^6\}$ to investigate the effect of different $n$, where $n=\lfloor \frac{TotalInfo}{\kappa} \rfloor$ and $\lfloor \cdot \rfloor$ is the floor function. We will explicitly explain it when we utilize different configurations.}

\subsection{The Effect of Controlled Channel}


\begin{figure*}[htb]
\begin{center}
\begin{minipage}[b]{0.9\textwidth}
\includegraphics[width=\textwidth]{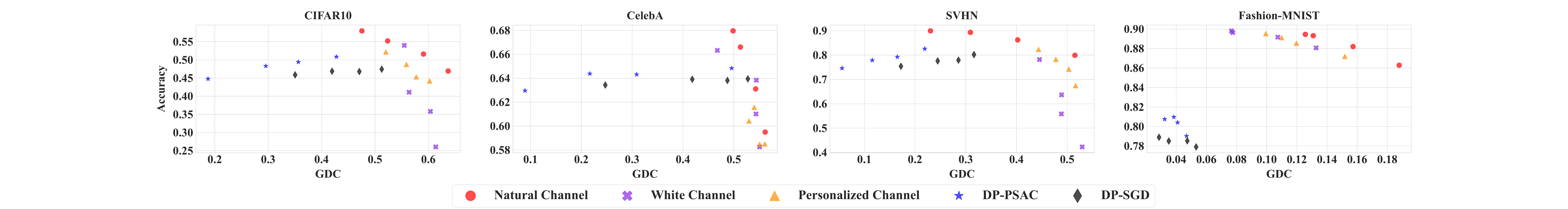}
\vskip -0.1in
\end{minipage}
\caption{Evaluating different methods in the Utility-Privacy plane.}\label{fig:Tradeoff_in_Utility_Privacy_Plane}
\begin{minipage}[b]{0.9\textwidth}
\includegraphics[width=\textwidth]{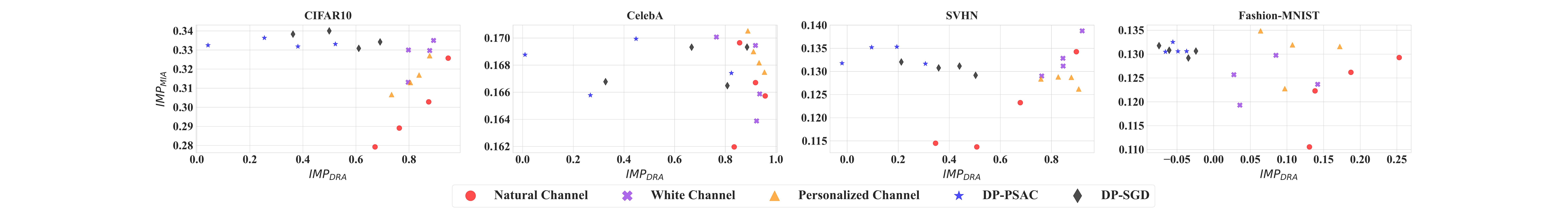}
\vskip -0.1in
\end{minipage}
\caption{Evaluating different methods in the Defense-Defense plane.}\label{fig:Tradeoff_in_Defense_Plane}
\vskip -0.2in
\end{center}
\vskip -0.2in
\end{figure*}

\noindent{\bfseries Impact of channel capacity ($\kappa$).} $\kappa$ represents the upper bound of information leakage in a single training round. 
As illustrated in Fig.~\ref{fig:ModelACC_ChannelCapacity}, when we enhance the channel, i.e., increase $\kappa$, the accuracy increases accordingly regardless of the datasets and the model architectures, which means a wider channel leads to a better utility. However, the increasing utility is at the cost of reducing data privacy. 
Fig.~\ref{fig:ModelInv_ChannelCapacity} displays the corresponding defense capability against DRA. Specifically,
for DRA attacks, the MSE decreases according to the increasing $\kappa$, which means the reconstructed data is closer to the target data. These results indicate that in the practical scenario, we can adjust channel capacity by $\kappa$ to balance the utility and the privacy according to various requirements.

\qi{Moreover, we explicitly display the utility-privacy tradeoff in Fig.~\ref{fig:Utility_Privacy_Tradeoff}. Specifically, the utility increases with the increase of channel capacity $\kappa$, while the ability of privacy protection decreases with the increase of $\kappa$. These results are reasonable since channel capacity constrains the transmitted information, and the reconstruction error, i.e., the MSE, can be theoretically restricted when the transmitted information is limited based on Thm.~\ref{mse_lower_bound}. Meanwhile, more available information results in a more accurate model, indicating that the adjustable channel capacity can be utilized to balance the utility-privacy tradeoff.}

\noindent{\bfseries Impact of optimization rounds ($n$).} Different from the channel capacity $C$, the number of optimization rounds $n$ affects the parameter channel through another dimension: information accumulation. 
Similarly, the results in Fig.~\ref{fig:ModelACC_Number} 
and \ref{fig:ModelInv_Number} imply that $n$ is another parameter to influence the utility and the defense ability of the controlled channel. Increasing $n$ results in obtaining more information from the local dataset, thereby enhancing the utility while reducing the defense ability.

\noindent{\bfseries Defense ability against gradient inversion attack.} We randomly select 30 images from each dataset as the target data and evaluate the defense abilities to defend against gradient inversion attacks. As shown in Fig.~\ref{fig:Heat_map_MSE}, the MSE of data reconstruction consistently increases as we decrease the threshold $\kappa$, which means a smaller $\kappa$ leads to a stronger ability to defend against DRA. The MSE of the White Channel and the Personalized Channel are larger than the Natural Channel, indicating that the other two methods provide stronger defense capability compared to the Natural Channel. 

\subsection{Comparing with Other Methods}
In this section, we compare our methods with two classic perturbation mechanisms: DP-SGD \cite{DBLP:conf/ccs/AbadiCGMMT016} and DP-PSAC \cite{DBLP:journals/corr/abs-2212-00328}. Specifically, DP-SGD first utilizes DP for ML and DP-PSAC is the state-of-art method to improve DP-SGD by adaptively clipping the gradients.

\noindent{\bfseries Utility-Privacy trade-off.} We first compare our methods with DP-SGD and DP-PSAC in the utility-privacy plane. The plane is formed by two axes: the $x$-axis represents GDC, which is the general ability for privacy protection. While the $y$-axis represents the accuracy, which represents the utility. For all methods, we use the model of LeNet and fix the rounds as $n=1\times10^4$. The results are displayed in Fig.~\ref{fig:Tradeoff_in_Utility_Privacy_Plane}. Specifically, in our methods, we use channel capacity in \{50, 100, 200, 300\} for the Natural Channel, and \{500, 800, 1000, 1500\} for the White Channel and the Personalized Channel.
Additionally, we use 1:50 in $\vbeta$ for the Personalized Channel to protect the attributes around the eyes (Appendix~\ref{sec:prior_knowledge} indicates the details of $\vbeta$). For DP-SGD and DP-PSAC, we use the clipping bound $S=1.0$, $\delta=1\times 10^{-5}$ ($\delta=3\times 10^{-6}$ for CelebA), and the multiplier $\sqrt{\sigma}$ in \{0.8, 0.57, 0.46, 0.2066\}  (the corresponding privacy budget $\epsilon$ for these DP training are \{1.705, 5.120, 12.026, 331.668\}, respectively) 
. \qi{According to Thm.~\ref{thm:dp_channel_capacity}, these $\sigma$ result in channel capacities in \{100, 200, 300, 1500\} when we utilize $B=64$, which are identical to the channel capacities of our methods.} As in Fig.~\ref{fig:Tradeoff_in_Utility_Privacy_Plane}, the Natural Channel achieves the best utility-privacy trade-off. The reason is that it utilizes the original importance to maintain the information in data attributes, and the constrained channel capacity ensures the ability to defend against DRA.

\noindent{\bfseries Comparison of the details in defense ability.} To investigate the detailed defense ability of different methods, we decouple the defense capabilities to display them in the defense-defense plane. Specifically, $x$-axis represents $IMP_{DRA}$ and $y$-axis represents $IMP_{MIA}$. As displayed in Fig.~\ref{fig:Tradeoff_in_Defense_Plane}, DP specializes in defending against MIA but is weaker in defending against DRA, this is reasonable since DP focuses on protecting individual information. Compared to DP, our methods specialize in defending against DRA. Moreover, our methods achieve the best comprehensive defense capability for both attacks.

\begin{table}[htb]
\footnotesize
\renewcommand{\arraystretch}{1.4}
\setlength{\tabcolsep}{2.5pt}
\vskip -0.1in
\caption{Training time (s) for 25 epochs of different models}\label{tb:compare_runtime_data_parameter_space}
\begin{center}
\begin{tabular}{lcccc}
\toprule
Model & Parameters  & SGD &  DP-SGD & Ours\\
\midrule
 \multirow{2}{*}{LeNet}                     & \multirow{2}{*}{$6.20\times 10^4$}      & \multirow{2}{*}{283.43$\pm$0.95}       &  445.45$\pm$20.41  & 312.99$\pm$1.40\\
 ~& ~& ~ & (+57.16\%)& (+10.43\%) \\
 \hline
 \multirow{2}{*}{ResNet10}         & \multirow{2}{*}{$4.90\times 10^6$}     & \multirow{2}{*}{624.23$\pm$8.66}       & 1397.34$\pm$17.38 & 651.29$\pm$10.39\\
 ~& ~& ~ & (+123.85\%) & (+4.33\%)\\
 \hline
 \multirow{2}{*}{AlexNet}             & \multirow{2}{*}{$3.59\times 10^7$}      & \multirow{2}{*}{374.13$\pm$2.68}      &1023.83$\pm$2.85 & 406.02$\pm$1.04\\
~& ~& ~ & (+173.66\%) & (+8.52\%)\\
\hline
 \multirow{2}{*}{VGG16}             & \multirow{2}{*}{$1.34\times 10^8$}  &    \multirow{2}{*}{581.69$\pm$5.35}      & 3402.91$\pm$4.72 & 612.99$\pm$4.62 \\
 ~& ~& ~ & (+485.00\%) & (+5.38\%)\\
\bottomrule
\end{tabular}
\end{center}
\vskip -0.1in
\end{table}

\noindent{\bfseries Efficiency of constraining in the data space.} As displayed in Tab.~\ref{tb:compare_runtime_data_parameter_space}, we compare the efficiency of our method with DP-SGD, which is a method that works in the parameter space. We use OPACUS 1.1.2 \cite{DBLP:journals/corr/abs-2109-12298} for the implementation of DP-SGD. The results indicate that when we transform the operations to the data space, it significantly reduces the amount of calculation, i.e., the training time, especially when the dimension of the parameter increases.

\section{Related Work}

\noindent{\bfseries Differential Privacy.} Differential Privacy \cite{DBLP:conf/icalp/Dwork06} is an important technique for privacy protection. For the definition of differential privacy, Mironov~\cite{DBLP:conf/csfw/Mironov17} proposes to utilize R\'{e}nyi divergence, which leads to compact and accurate privacy loss. Moreover, Abadi et al.~\cite{DBLP:conf/ccs/AbadiCGMMT016} propose an empirical algorithm DP-SGD for applying DP to ML training, then multiple literature tries to improve the performance of DP-SGD~\cite{DBLP:conf/nips/AgarwalSYKM18, DBLP:conf/nips/AndrewTMR21, DBLP:journals/corr/abs-2206-07136, DBLP:journals/corr/abs-2212-00328}. Most of them focus on the utility, trying to adaptively clip the gradients or add suitable perturbations. For the application of DP, Levy et al.~\cite{DBLP:conf/nips/LevySAKKMS21} propose to protect user-level DP instead of ensuring the privacy of individual samples, which makes DP more reliable to FL. Truex et al.~\cite{DBLP:conf/eurosys/Truex0CGW20} present a protocol LDP-Fed to formally ensure data privacy in collecting local parameters with high precision. However, DP is still vulnerable to DRA, several researchers indicate that they can reconstruct the data without violating the requirements of DP~\cite{DBLP:conf/kdd/Cormode11, DBLP:journals/corr/abs-2211-03128}, implying that the algorithms are still vulnerable to DRA. Compared to DP, our work aims to defend against DRA based on information theory.

\noindent{\bfseries Limiting Mutual Information.} Another related concept of privacy protection depends on MI. Several studies connect MI with DP by deriving the upper bound of MI for a distinct DP mechanism~\cite{DBLP:conf/csfw/BartheK11,DBLP:conf/ifip1-7/AlvimACDP11,DBLP:conf/ccs/CuffY16}. For privacy protection, Li et al.~\cite{DBLP:conf/kdd/LiDYC020} propose to train a feature extractor that minimizes MI between the output features and the assigned label while maximizing the MI between output features and the original data. Similarly, Osia et al.~\cite{DBLP:journals/tkde/OsiaTSKHR20} propose to train an extractor with the variational lower bound for MI estimation. Moreover, Hannun et al.~\cite{DBLP:conf/ijcai/HannunGM22} utilize Fisher Information to measure the information leakage, and the conclusions are consistent with our theories. Finally, we can also model FL based on information theory \cite{DBLP:journals/corr/abs-1911-07652, DBLP:journals/tpds/UddinXLYG21}, which measures the MI between different variables in FL. Compared to the former methods, our technique models the communication channel of FL directly based on information theory and utilizes its upper bound to derive practical methods for constraining the transmitted information, which leads to the strong ability to defend against DRA.

\section{Discussion and Conclusion}
\qi{Our method constrains the information leakage of the black-box model according to an upper bound derived by maximum entropy distribution (Lemma~\ref{lemma:upper_bound_entropy}). We can tighten the upper bound by utilizing a large batch size or incorporating domain knowledge. Among them, utilizing large batch size for training causes the output distribution to be closer to a Gaussian distribution, 
while incorporating domain knowledge enables us to get more properties of the output distribution. If we get a tighter upper bound, the tradeoff between utility and privacy can be further improved.}


In summary, as the reconstruction error of DRA is decided by the transmitted information, we build a channel model to measure the information leakage of the black-box model in FL. The model indicates that the amount of transmitted information is decided by the channel capacity $C$ and the number of optimization rounds $n$. Guided by the model, we develop methods to constrain the channel capacity within a threshold $\kappa$. Combining it with the limited optimization rounds $n$, the upper bound of the total transmitted information remains below $n\cdot \kappa$, which ensures the ability to defend against DRA. Furthermore, we transform the operations of constraining channel capacity from the parameter space to the data space. The transformation significantly improves the training efficiency and the model accuracy under constrained information leakage.
Finally, extensive experiments with real-world datasets validate the benefit of our methods.

\section{Acknowledgments}

We thank our shepherd and anonymous reviewers for their
thoughtful comments. This work was supported in part by the National Science Foundation for Distinguished Young Scholars of China under No. 61825204, National Natural Science Foundation of China under No. 62202258, No. 62132011, No. 61932016, Beijing Outstanding Young Scientist Program under No. BJJWZYJH01201910003011, China Postdoctoral Science Foundation under No. 2021M701894, China National Postdoctoral Program for Innovative Talents, Shuimu Tsinghua Scholar Program, Lenovo Young Scientist Program, and the Beijing National Research Center for Information Science and Technology key projects. Yi Zhao and Ke Xu are the corresponding authors.

\bibliographystyle{plain}
\bibliography{PreventingDRA}

\begin{thebibliography}{10}

\bibitem{DBLP:conf/ccs/AbadiCGMMT016}
Mart{\'{\i}}n Abadi, Andy Chu, Ian~J. Goodfellow, H.~Brendan McMahan, Ilya Mironov, Kunal Talwar, and Li~Zhang.
\newblock Deep learning with differential privacy.
\newblock In {\em Proc. of {CCS}}, 2016.

\bibitem{DBLP:journals/corr/abs-1911-07652}
Linara Adilova, Julia Rosenzweig, and Michael Kamp.
\newblock Information-theoretic perspective of federated learning.
\newblock {\em CoRR}, abs/1911.07652, 2019.

\bibitem{DBLP:conf/nips/AgarwalSYKM18}
Naman Agarwal, Ananda~Theertha Suresh, Felix~X. Yu, Sanjiv Kumar, and Brendan McMahan.
\newblock cpsgd: Communication-efficient and differentially-private distributed {SGD}.
\newblock In {\em Proc. NeurIPS}, 2018.

\bibitem{DBLP:conf/ifip1-7/AlvimACDP11}
M{\'{a}}rio~S. Alvim, Miguel~E. Andr{\'{e}}s, Konstantinos Chatzikokolakis, Pierpaolo Degano, and Catuscia Palamidessi.
\newblock Differential privacy: On the trade-off between utility and information leakage.
\newblock In {\em Proc. {FAST}}, 2011.

\bibitem{DBLP:series/isc/AlvimCMMPS20}
M{\'{a}}rio~S. Alvim, Konstantinos Chatzikokolakis, Annabelle McIver, Carroll Morgan, Catuscia Palamidessi, and Geoffrey Smith.
\newblock {\em The Science of Quantitative Information Flow}.
\newblock Information Security and Cryptography. Springer, 2020.

\bibitem{DBLP:conf/nips/AndrewTMR21}
Galen Andrew, Om~Thakkar, Brendan McMahan, and Swaroop Ramaswamy.
\newblock Differentially private learning with adaptive clipping.
\newblock In {\em Proc. NeurIPS}, pages 17455--17466, 2021.

\bibitem{DBLP:conf/sp/BalleCH22}
Borja Balle, Giovanni Cherubin, and Jamie Hayes.
\newblock Reconstructing training data with informed adversaries.
\newblock In {\em Proc. {S\&P}}, 2022.

\bibitem{DBLP:conf/csfw/BartheK11}
Gilles Barthe and Boris K{\"{o}}pf.
\newblock Information-theoretic bounds for differentially private mechanisms.
\newblock In {\em Proc. {CSF}}, 2011.

\bibitem{DBLP:journals/corr/abs-2206-07136}
Zhiqi Bu, Yu{-}Xiang Wang, Sheng Zha, and George Karypis.
\newblock Automatic clipping: Differentially private deep learning made easier and stronger.
\newblock {\em CoRR}, abs/2206.07136, 2022.

\bibitem{DBLP:conf/sp/CarliniCN0TT22}
Nicholas Carlini, Steve Chien, Milad Nasr, Shuang Song, Andreas Terzis, and Florian Tram{\`{e}}r.
\newblock Membership inference attacks from first principles.
\newblock In {\em Proc. {S\&P}}, 2022.

\bibitem{DBLP:conf/uss/CarliniTWJHLRBS21}
Nicholas Carlini, Florian Tram{\`{e}}r, Eric Wallace, Matthew Jagielski, Ariel Herbert{-}Voss, Katherine Lee, Adam Roberts, Tom~B. Brown, Dawn Song, {\'{U}}lfar Erlingsson, Alina Oprea, and Colin Raffel.
\newblock Extracting training data from large language models.
\newblock In {\em Proc. {USENIX} Security}, 2021.

\bibitem{DBLP:journals/tifs/ChenZLFX23}
Jiahui Chen, Yi~Zhao, Qi~Li, Xuewei Feng, and Ke~Xu.
\newblock Feddef: Defense against gradient leakage in federated learning-based network intrusion detection systems.
\newblock {\em {IEEE} TIFS}, 2023.

\bibitem{DBLP:conf/icml/Choquette-ChooT21}
Christopher~A. Choquette{-}Choo, Florian Tram{\`{e}}r, Nicholas Carlini, and Nicolas Papernot.
\newblock Label-only membership inference attacks.
\newblock In {\em Proc. {ICML}}, 2021.

\bibitem{DBLP:conf/kdd/Cormode11}
Graham Cormode.
\newblock Personal privacy vs population privacy: learning to attack anonymization.
\newblock In {\em Proc. {SIGKDD}}, 2011.

\bibitem{DBLP:books/daglib/0016881}
Thomas~M. Cover and Joy~A. Thomas.
\newblock {\em Elements of information theory {(2.} ed.)}.
\newblock Wiley, 2006.

\bibitem{DBLP:conf/ccs/CuffY16}
Paul Cuff and Lanqing Yu.
\newblock Differential privacy as a mutual information constraint.
\newblock In {\em Proc. {SIGSAC}}, 2016.

\bibitem{DBLP:journals/corr/abs-2211-03128}
Travis Dick, Cynthia Dwork, Michael Kearns, Terrance Liu, Aaron Roth, Giuseppe Vietri, and Zhiwei~Steven Wu.
\newblock Confidence-ranked reconstruction of census microdata from published statistics.
\newblock {\em The National Academy of Sciences}, 2023.

\bibitem{DBLP:conf/icalp/Dwork06}
Cynthia Dwork.
\newblock Differential privacy.
\newblock In {\em Proc. {ICALP}}, 2006.

\bibitem{DBLP:journals/fttcs/DworkR14}
Cynthia Dwork and Aaron Roth.
\newblock The algorithmic foundations of differential privacy.
\newblock {\em Found. Trends Theor. Comput. Sci.}, 2014.

\bibitem{2015Model}
Matt Fredrikson, Somesh Jha, and Thomas Ristenpart.
\newblock Model inversion attacks that exploit confidence information and basic countermeasures.
\newblock In {\em Proceedings of {SIGSAC} on computer and communications security}, pages 1322--1333, 2015.

\bibitem{DBLP:conf/nips/GeipingBD020}
Jonas Geiping, Hartmut Bauermeister, Hannah Dr{\"{o}}ge, and Michael Moeller.
\newblock Inverting gradients - how easy is it to break privacy in federated learning?
\newblock In {\em Proc. NeurIPS}, 2020.

\bibitem{DBLP:conf/icml/GuoKCM22}
Chuan Guo, Brian Karrer, Kamalika Chaudhuri, and Laurens van~der Maaten.
\newblock Bounding training data reconstruction in private (deep) learning.
\newblock In {\em Proc. {ICML}}, Proceedings of Machine Learning Research, 2022.

\bibitem{hagerty2012entropy}
Patrick Hagerty and Tom Draper.
\newblock Entropy bounds and statistical tests.
\newblock In {\em Proc. on the NIST Random Bit Generation}, 2012.

\bibitem{DBLP:conf/nips/HaimVYSI22}
Niv Haim, Gal Vardi, Gilad Yehudai, Ohad Shamir, and Michal Irani.
\newblock Reconstructing training data from trained neural networks.
\newblock In {\em Proc. NeurIPS}, 2022.

\bibitem{DBLP:conf/ijcai/HannunGM22}
Awni~Y. Hannun, Chuan Guo, and Laurens van~der Maaten.
\newblock Measuring data leakage in machine-learning models with fisher information (extended abstract).
\newblock In {\em Proc. {IJCAI}}, 2022.

\bibitem{DBLP:journals/corr/abs-2302-07225}
Jamie Hayes, Saeed Mahloujifar, and Borja Balle.
\newblock Bounding training data reconstruction in {DP-SGD}.
\newblock {\em CoRR}, abs/2302.07225, 2023.

\bibitem{DBLP:conf/cvpr/HeZRS16}
Kaiming He, Xiangyu Zhang, Shaoqing Ren, and Jian Sun.
\newblock Deep residual learning for image recognition.
\newblock In {\em Proc. {CVPR}}, 2016.

\bibitem{DBLP:conf/ccs/HitajAP17}
Briland Hitaj, Giuseppe Ateniese, and Fernando P{\'{e}}rez{-}Cruz.
\newblock Deep models under the {GAN:} information leakage from collaborative deep learning.
\newblock In {\em Proc. {CCS}}, 2017.

\bibitem{DBLP:conf/iclr/HjelmFLGBTB19}
R.~Devon Hjelm, Alex Fedorov, Samuel Lavoie{-}Marchildon, Karan Grewal, Philip Bachman, Adam Trischler, and Yoshua Bengio.
\newblock Learning deep representations by mutual information estimation and maximization.
\newblock In {\em Proc. {ICLR}}, 2019.

\bibitem{DBLP:journals/tifs/KimGK21}
Yongjune Kim, Cyril Guyot, and Young{-}Sik Kim.
\newblock On the efficient estimation of min-entropy.
\newblock {\em {IEEE} TIFS}, 2021.

\bibitem{2012Learning}
Alex Krizhevsky and Geoffrey Hinton.
\newblock Learning multiple layers of features from tiny images.
\newblock {\em Master's thesis, Department of Computer Science, University of Toronto}, 2009.

\bibitem{DBLP:conf/nips/KrizhevskySH12}
Alex Krizhevsky, Ilya Sutskever, and Geoffrey~E. Hinton.
\newblock Imagenet classification with deep convolutional neural networks.
\newblock In {\em Proc. NeurIPS}, 2012.

\bibitem{PopularMetric}
MSU Graphic \&~Media laboratory.
\newblock Ways of cheating on popular objective metrics: blurring, noise, super-resolution and others, 2021.
\newblock https://videoprocessing.ai/metrics/ways-of-cheating-on-popular-objective-metrics.html.

\bibitem{DBLP:journals/pieee/LeCunBBH98}
Yann LeCun, L{\'{e}}on Bottou, Yoshua Bengio, and Patrick Haffner.
\newblock Gradient-based learning applied to document recognition.
\newblock {\em Proc. {IEEE}}, 1998.

\bibitem{DBLP:conf/sosp/LecuyerSVG019}
Mathias L{\'{e}}cuyer, Riley Spahn, Kiran Vodrahalli, Roxana Geambasu, and Daniel Hsu.
\newblock Privacy accounting and quality control in the sage differentially private {ML} platform.
\newblock In {\em Proc. {SOSP}}, 2019.

\bibitem{DBLP:conf/nips/LevySAKKMS21}
Daniel Levy, Ziteng Sun, Kareem Amin, Satyen Kale, Alex Kulesza, Mehryar Mohri, and Ananda~Theertha Suresh.
\newblock Learning with user-level privacy.
\newblock In {\em Proc. NeurIPS}, 2021.

\bibitem{DBLP:conf/kdd/LiDYC020}
Ang Li, Yixiao Duan, Huanrui Yang, Yiran Chen, and Jianlei Yang.
\newblock {TIPRDC:} task-independent privacy-respecting data crowdsourcing framework for deep learning with anonymized intermediate representations.
\newblock In {\em Proc. {KDD}}, 2020.

\bibitem{DBLP:conf/iclr/LinHM0D18}
Yujun Lin, Song Han, Huizi Mao, Yu~Wang, and Bill Dally.
\newblock Deep gradient compression: Reducing the communication bandwidth for distributed training.
\newblock In {\em Proc. {ICLR}}, 2018.

\bibitem{DBLP:conf/uss/LiuWH000CF022}
Yugeng Liu, Rui Wen, Xinlei He, Ahmed Salem, Zhikun Zhang, Michael Backes, Emiliano~De Cristofaro, Mario Fritz, and Yang Zhang.
\newblock Ml-doctor: Holistic risk assessment of inference attacks against machine learning models.
\newblock In {\em Proc. {USENIX} Security}, 2022.

\bibitem{DBLP:conf/nips/LiuSYK020}
Yuhan Liu, Ananda~Theertha Suresh, Felix~X. Yu, Sanjiv Kumar, and Michael Riley.
\newblock Learning discrete distributions: user vs item-level privacy.
\newblock In {\em Proc. NeurIPS}, 2020.

\bibitem{DBLP:conf/iccv/LiuLWT15}
Ziwei Liu, Ping Luo, Xiaogang Wang, and Xiaoou Tang.
\newblock Deep learning face attributes in the wild.
\newblock In {\em Proc. {ICCV}}, 2015.

\bibitem{DBLP:conf/eurosp/LongWBB0TGC20}
Yunhui Long, Lei Wang, Diyue Bu, Vincent Bindschaedler, XiaoFeng Wang, Haixu Tang, Carl~A. Gunter, and Kai Chen.
\newblock A pragmatic approach to membership inferences on machine learning models.
\newblock In {\em Proc. EuroS{\&}P}, 2020.

\bibitem{DBLP:conf/aistats/McMahanMRHA17}
Brendan McMahan, Eider Moore, Daniel Ramage, Seth Hampson, and Blaise~Ag{\"{u}}era y~Arcas.
\newblock Communication-efficient learning of deep networks from decentralized data.
\newblock In {\em Proc. {AISTATS}}, 2017.

\bibitem{DBLP:conf/sp/MelisSCS19}
Luca Melis, Congzheng Song, Emiliano~De Cristofaro, and Vitaly Shmatikov.
\newblock Exploiting unintended feature leakage in collaborative learning.
\newblock In {\em Proc. {S\&P}}, 2019.

\bibitem{DBLP:conf/csfw/Mironov17}
Ilya Mironov.
\newblock R{\'{e}}nyi differential privacy.
\newblock In {\em Proc. {CSF}}, 2017.

\bibitem{DBLP:conf/sp/NasrSH19}
Milad Nasr, Reza Shokri, and Amir Houmansadr.
\newblock Comprehensive privacy analysis of deep learning: Passive and active white-box inference attacks against centralized and federated learning.
\newblock In {\em Proc. {S\&P}}, 2019.

\bibitem{2011Reading}
Yuval Netzer, Tao Wang, Adam Coates, Alessandro Bissacco, and Andrew~Y Ng.
\newblock Reading digits in natural images with unsupervised feature learning.
\newblock {\em Proc. NeurIPS}, 2011.

\bibitem{DBLP:journals/tkde/OsiaTSKHR20}
Seyed~Ali Osia, Ali Taheri, Ali~Shahin Shamsabadi, Kleomenis Katevas, Hamed Haddadi, and Hamid~R. Rabiee.
\newblock Deep private-feature extraction.
\newblock {\em {IEEE} Trans. Knowl. Data Eng.}, 2020.

\bibitem{DBLP:conf/ccs/0002CPP20}
Marco Romanelli, Konstantinos Chatzikokolakis, Catuscia Palamidessi, and Pablo Piantanida.
\newblock Estimating g-leakage via machine learning.
\newblock In {\em Proc. {CCS}}, 2020.

\bibitem{DBLP:conf/icml/SablayrollesDSO19}
Alexandre Sablayrolles, Matthijs Douze, Cordelia Schmid, Yann Ollivier, and Herv{\'{e}} J{\'{e}}gou.
\newblock White-box vs black-box: Bayes optimal strategies for membership inference.
\newblock In {\em Proc. {ICML}}, 2019.

\bibitem{DBLP:journals/corr/Shwartz-ZivT17}
Ravid Shwartz{-}Ziv and Naftali Tishby.
\newblock Opening the black box of deep neural networks via information.
\newblock {\em CoRR}, abs/1703.00810, 2017.

\bibitem{DBLP:journals/corr/SimonyanZ14a}
Karen Simonyan and Andrew Zisserman.
\newblock Very deep convolutional networks for large-scale image recognition.
\newblock In Yoshua Bengio and Yann LeCun, editors, {\em Proc. {ICLR}}, 2015.

\bibitem{DBLP:conf/fossacs/Smith09}
Geoffrey Smith.
\newblock On the foundations of quantitative information flow.
\newblock In Luca de~Alfaro, editor, {\em Proc. {FOSSACS}}, 2009.

\bibitem{DBLP:conf/aaai/ThapaCCS22}
Chandra Thapa, Mahawaga Arachchige~Pathum Chamikara, Seyit Camtepe, and Lichao Sun.
\newblock Splitfed: When federated learning meets split learning.
\newblock In {\em Proc. {AAAI}}, 2022.

\bibitem{DBLP:conf/eurosys/Truex0CGW20}
Stacey Truex, Ling Liu, Ka~Ho Chow, Mehmet~Emre Gursoy, and Wenqi Wei.
\newblock Ldp-fed: federated learning with local differential privacy.
\newblock In {\em Proc. on Edge Systems, Analytics and Networking}, 2020.

\bibitem{DBLP:conf/iclr/TsuzukuIA18}
Yusuke Tsuzuku, Hiroto Imachi, and Takuya Akiba.
\newblock Variance-based gradient compression for efficient distributed deep learning.
\newblock In {\em Proc. {ICLR}}, 2018.

\bibitem{DBLP:journals/tpds/UddinXLYG21}
Md.~Palash Uddin, Yong Xiang, Xuequan Lu, John Yearwood, and Longxiang Gao.
\newblock Mutual information driven federated learning.
\newblock {\em {IEEE} TPDS}, 2021.

\bibitem{wasserman2004all}
Larry Wasserman.
\newblock {\em All of statistics: a concise course in statistical inference}, volume~26.
\newblock Springer, 2004.

\bibitem{DBLP:journals/corr/abs-2212-00328}
Tianyu Xia, Shuheng Shen, Su~Yao, Xinyi Fu, Ke~Xu, Xiaolong Xu, Xing Fu, and Weiqiang Wang.
\newblock Differentially private learning with per-sample adaptive clipping.
\newblock {\em CoRR}, abs/2212.00328, 2022.

\bibitem{DBLP:journals/corr/abs-1708-07747}
Han Xiao, Kashif Rasul, and Roland Vollgraf.
\newblock Fashion-mnist: a novel image dataset for benchmarking machine learning algorithms.
\newblock {\em CoRR}, abs/1708.07747, 2017.

\bibitem{DBLP:journals/tist/YangLCT19}
Qiang Yang, Yang Liu, Tianjian Chen, and Yongxin Tong.
\newblock Federated machine learning: Concept and applications.
\newblock {\em {ACM} Trans. Intell. Syst. Technol.}, 2019.

\bibitem{yang2022digital}
Yahan Yang, Junfeng Lyu, Ruixin Wang, Quan Wen, Lanqin Zhao, Wenben Chen, Shaowei Bi, Jie Meng, Keli Mao, Yu~Xiao, et~al.
\newblock A digital mask to safeguard patient privacy.
\newblock {\em Nature medicine}, 2022.

\bibitem{DBLP:journals/corr/abs-2109-12298}
Ashkan Yousefpour, Igor Shilov, Alexandre Sablayrolles, Davide Testuggine, Karthik Prasad, Mani Malek, John Nguyen, Sayan Ghosh, Akash Bharadwaj, Jessica Zhao, Graham Cormode, and Ilya Mironov.
\newblock Opacus: User-friendly differential privacy library in pytorch.
\newblock {\em CoRR}, abs/2109.12298, 2021.

\bibitem{DBLP:journals/network/ZhaoXCT22}
Yi~Zhao, Ke~Xu, Jiahui Chen, and Qi~Tan.
\newblock Collaboration-enabled intelligent internet architecture: Opportunities and challenges.
\newblock {\em {IEEE} Netw.}, 2022.

\bibitem{10355680}
Guangmeng Zhou, Qi~Li, Yang Liu, Yi~Zhao, Qi~Tan, Su~Yao, and Ke~Xu.
\newblock Fedpage: Pruning adaptively toward global efficiency of heterogeneous federated learning.
\newblock {\em IEEE/ACM Transactions on Networking}, 2023.

\bibitem{DBLP:conf/nips/ZhuLH19}
Ligeng Zhu, Zhijian Liu, and Song Han.
\newblock Deep leakage from gradients.
\newblock In {\em Proc. NeurIPS}, 2019.

\end{thebibliography}

\appendix

\section{The detail explanation of DRA}\label{sec:detail_dra}

\qi{For DRA in FL, the attacker aims to build a reconstruction $\hat{\mD}(\mW_i, \mW_o)$ to approximate the data $\mD$, where $\mW_i$ and $\mW_o$ are the transmitted parameters in FL. Moreover, if the MSE between $\hat{\mD}(\mW_i, \mW_o)$ and $\mD$ achieves $0$, i.e., $\mathbb{E}\|\hat{\mD}(\mW_i, \mW_o) - \mD\|^2=0$, we conclude that $\hat{\mD}(\mW_i, \mW_o)$ and $\mD$ have identical distributions \cite{wasserman2004all}.}

\qi{Additionally, with the definition of $\mD$, i.e., the random variable that follows the distribution of the local dataset, the MIA  \cite{DBLP:conf/sp/CarliniCN0TT22} can be viewed as inferring whether a particular data point belongs to the support set of $\mD$.}

\section{Details of the Prior Knowledge}\label{sec:prior_knowledge}
\begin{figure}[htb]
\begin{center}
\begin{minipage}[b]{0.36\textwidth}
\vskip -0.1in
\centering
\includegraphics[width=\textwidth]
{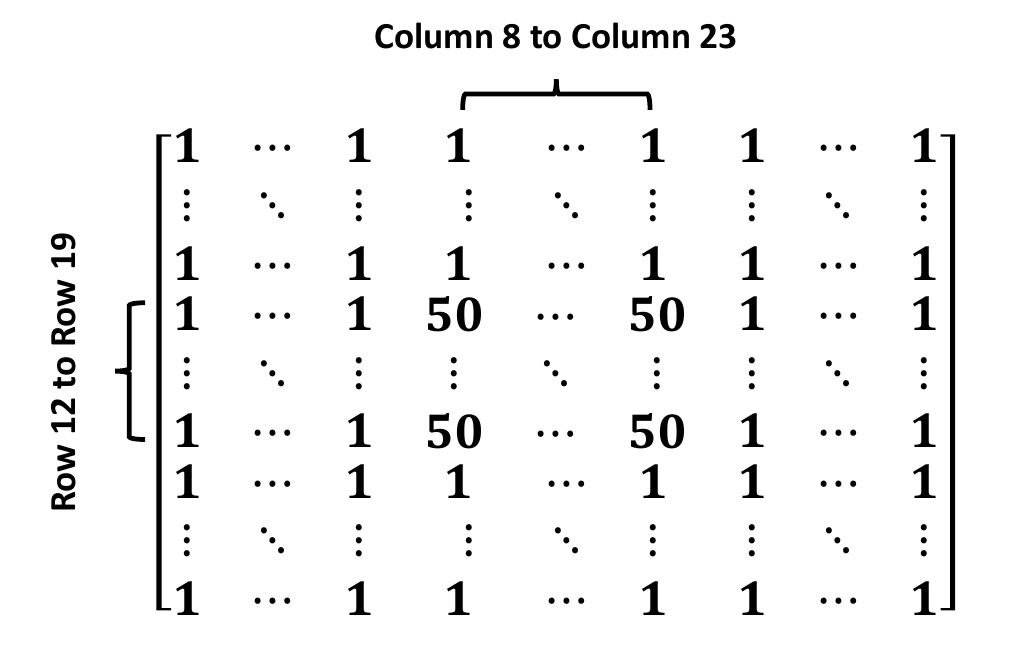}
\end{minipage}
\vskip -0.1in
\caption{\qi{The prior knowledge $\bm{\beta}$ used for the Personalized Channel in Sec.~\ref{sec:exp}. Specifically, the number in the matrix represents the coefficient for the added noise.}}\label{fig:Prior_Knowledge_Beta}
\end{center}
\vskip -0.1in
\end{figure}

\qi{For the Personalized Channel, the prior knowledge $\bm{\beta}$ used in Sec.~\ref{sec:exp} is a $32*32*c$ tensor, where $c$ represents the number of image channel (e.g., $c=1$ for Fashion-MNIST dataset and $c=3$ for CIFAR-10 dataset, respectively). The number in $\bm{\beta}$ represents the coefficient for the added noise, which means we add more noise to the dimension if the coefficient of this dimension is larger. In these experiments, we employ position-based prior knowledge, which means the prior knowledge for all image channels are identical. Therefore, the important parameters are the numbers of one channel, which is a $32*32$ matrix (as displayed in Fig.~\ref{fig:Prior_Knowledge_Beta}).}

\qi{Specifically, as we aim to protect the private information around eyes for the data in CelebA (as displayed in Fig.~\ref{fig:Overview_Channel_Capacity}), we set the number of these positions (i.e., the sub-matrix with row 12 to 19 and column 8 to 23.) to be $50$, and set the rest number of the matrix to be $1$. Therefore, this prior knowledge adds $50$ times noise to the chosen positions (i.e., the positions around the eyes) to provide stronger privacy protection for this area. Additionally, we also use this prior knowledge for other datasets, including Fashion-MNIST, CIFAR10, and SVHN.}

\section{Proofs of Lemmas and Theorems}
\subsection{Proof of Theorem~\ref{mse_lower_bound}}

Before the proof of Thm.~\ref{mse_lower_bound}, we have the following lemma.

\begin{lemma}\label{lemma:bound_determinant}
\vskip -0.05in
    For any d-dimensional semi-positive definite matrix $\mA$, i.e., $\mA \in \mathbb{R}^{d\times d}$, we have 
    \begin{equation}
    \setlength{\abovedisplayskip}{5pt}
  \setlength{\belowdisplayskip}{5pt}
        \det (\mA) \leq \left(\frac{\tr (\mA)}{d}\right)^d.
    \end{equation}
\vskip -0.05in
\end{lemma}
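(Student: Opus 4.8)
The plan is to reduce the matrix inequality to the classical arithmetic mean--geometric mean (AM--GM) inequality applied to the eigenvalues. Since $\mA$ is symmetric positive semi-definite, the spectral theorem guarantees that it is orthogonally diagonalizable with real, nonnegative eigenvalues $\lambda_1, \ldots, \lambda_d \geq 0$. The two quantities appearing in the statement are both spectral invariants: the determinant is the product of the eigenvalues, $\det(\mA) = \prod_{i=1}^{d} \lambda_i$, and the trace is their sum, $\tr(\mA) = \sum_{i=1}^{d} \lambda_i$. This is the only structural input needed, and it rewrites the claim entirely in terms of the scalars $\lambda_i$.

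First I would state these two spectral identities, then apply AM--GM to the nonnegative numbers $\lambda_1, \ldots, \lambda_d$:
\begin{equation*}
    \left( \prod_{i=1}^{d} \lambda_i \right)^{1/d} \leq \frac{1}{d} \sum_{i=1}^{d} \lambda_i = \frac{\tr(\mA)}{d}.
\end{equation*}
Raising both sides to the $d$-th power (legitimate since both sides are nonnegative) yields
\begin{equation*}
    \det(\mA) = \prod_{i=1}^{d} \lambda_i \leq \left( \frac{\tr(\mA)}{d} \right)^{d},
\end{equation*}
which is exactly the desired bound.

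There is essentially no substantive obstacle here; the argument is a direct corollary of AM--GM once the spectral reduction is in place. The only point requiring a word of care is the justification that the eigenvalues are real and nonnegative, which is precisely what positive semi-definiteness (together with symmetry) provides, and the boundary case where some $\lambda_i = 0$, in which both the determinant and the left-hand side of AM--GM vanish so the inequality holds trivially. Equality is attained exactly when all eigenvalues coincide, i.e.\ when $\mA$ is a nonnegative multiple of the identity, though establishing equality conditions is not required for the statement.
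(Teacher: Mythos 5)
Your proof is correct and follows exactly the same route as the paper's: reduce to the eigenvalues via the spectral theorem, use $\det(\mA)=\prod_i\lambda_i$ and $\tr(\mA)=\sum_i\lambda_i$, and apply AM--GM to the nonnegative eigenvalues. Your additional remarks on the boundary case and equality conditions go slightly beyond the paper's one-line argument but change nothing substantive.
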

\begin{proof}
    Since $\mA$ is a semi-positive definite matrix, we have its eigen values, i.e., $\{\lambda_i\}^d_{i=1}$, are non-negative.
    we can get
    \begin{align*}
        \det(\mA) = \prod_{i=1}^{d} \lambda_i \leq \left(\frac{\sum_{i=1}^{d} \lambda_i}{d}\right)^d = \left(\frac{\tr (\mA)}{d}\right)^d,
    \end{align*}
    where the inequality depends on the AM-GM inequality.
\end{proof}
\vskip -0.05in
Then based on Lemma~\ref{lemma:bound_determinant}, we can prove Thm.~\ref{mse_lower_bound}.
\begin{proof}
    As $Cov(\mD)$ is the covariance matrix of $\mD$, we have
    \begin{align*}
        h(\mD | \mW) & \overset{(1)}{\leq} \mathbb{E}_{\mW} [\frac{d}{2} \log(2\pi e) + \frac{1}{2} \log \det(Cov(\mD|\mW))] \\
                         & \overset{(2)}{\leq} \frac{d}{2} \log(2\pi e) + \mathbb{E} [ \frac{d}{2} \log (\frac{\tr(Cov(\mD|\mW))}{d})] \\
                         & =\frac{d}{2} \log(2\pi e) + \mathbb{E} [ \frac{d}{2} \log (\frac{\mathbb{E}[\|\mD - \mathbb{E}[\mD]\|^2|\mW]}{d})] \\
                         & \overset{(3)}{\leq} \frac{d}{2} \log(2\pi e) + \mathbb{E} [ \frac{d}{2} \log (\frac{\mathbb{E}[\|\mD - \hat{\mD}(\mW)\|^2|\mW]}{d})] \\
                         & \overset{(4)}{\leq} \frac{d}{2} \log(2\pi e) +  \frac{d}{2} \log (\frac{\mathbb{E}[\mathbb{E}[\|\mD - \hat{\mD}(\mW)\|^2|\mW]]}{d}) \\
                         & = \frac{d}{2} \log(2\pi e) +  \frac{d}{2} \log (\frac{\mathbb{E}[\|\mD - \hat{\mD}(\mW)\|^2]}{d}),
    \end{align*}
    where (1) is a consequence that with identical mean vector and covariance matrix, Gaussian distribution is the maximum entropy distribution. (2) depends on Lemma~\ref{lemma:bound_determinant}. The reason for (3) is that mean vector is the optimal estimator in terms of mean squared error, and (4) is a consequence of Jensen's inequality. Hence,
    \begin{align*}
        \mathbb{E}[\|\mD - \hat{\mD}(\mW)\|^2/d]) & \geq \frac{1}{2 \pi e} e^{2h(\mD | \mW)/d} \\
        &=  \frac{e^{2h(\mD)/d}}{2 \pi e} e^{-2I(\mD; \mW)/d},
    \end{align*}
    where the last equality is a consequence of Eq.~(\ref{eq:mutual_info}).
\end{proof}

\subsection{Proof of Lemma~\ref{lemma:upper_bound_entropy}}
\begin{proof}
  Firstly, as $\mX$ and $\mY$ are independent, then $\mathE[\mX+\mY] = \vmu_{\mX} + \vmu_{\mY}$ and $Cov(\mX+\mY) = \mSigma_{\mX} + \mSigma_{\mY}$.

  Secondly, with a specific mean vector and covariance matrix, the maximum entropy distribution is Gaussian, which implies that $h(\mX+\mY) \leq h(\mathN(\vmu_{\mX} + \vmu_{\mY}, \; \mSigma_{\mX} + \mSigma_{\mY}))$.

  Finally, if $\mX \sim \mathN(\vmu_{\mX}, \;\mSigma_{\mX})$, then $\mX+\mY \sim \mathN(\vmu_{\mX} + \vmu_{\mY}, \; \mSigma_{\mX} + \mSigma_{\mY})$, which concludes the proof.
\end{proof}

\subsection{Proof of Theorem~\ref{thm:channel_capacity}}
\begin{proof}
  According to the relationship between mutual information and differential entropy, we have
  {\setlength{\abovedisplayskip}{5pt}
  \setlength{\belowdisplayskip}{5pt}
  \begin{align}
    & \; I(\mD, \; \widetilde{\mW}^{(t)}_{o} \;|\; \mW^{(t)}_{i} ) \notag\\
    =& \;  h(\widetilde{\mW}^{(t)}_{o} \;| \; \mW^{(t)}_{i}) - h(\widetilde{\mW}^{(t)}_{o} \;|\; \mD, \; \mW^{(t)}_{i}) \notag\\
                                                                    = &\; h(\widetilde{\mW}^{(t)}_{o} \;| \; \mW^{(t)}_{i}) - h(\mathN(\bm{0},\; \sigma \cdot \rmI)) \notag\\
                                                                    \leq &\; h(\mathN(\vmu^{(t)},\; \mSigma^{(t)} + \sigma \cdot \rmI))-h(\mathN(\bm{0},\; \sigma \cdot \rmI)), \label{upper_bound_gaussian}
  \end{align}
  where the second equality depends on the fact that $\mW^{(t)}_{o}$ is a constant when $\mW^{(t)}_{i}$ and $\mD$ are both observed.
  The last inequality is an immediate consequence of Lemma \ref{lemma:upper_bound_entropy} with $\widetilde{\mW}^{(t)}_{o} = \mW^{(t)}_{o}+\sqrt{\sigma} \cdot \bm{\xi}$.}

  Then with the differential entropy of Gaussian distribution, we can further transform Eq.~(\ref{upper_bound_gaussian}) to
  \begin{align}\label{Gaussian_det}
      &h(\mathN(\vmu^{(t)},\; \mSigma^{(t)} + \sigma \cdot \rmI))-h(\mathN(\bm{0},\; \sigma \cdot \rmI)) \notag\\
       = &\frac{1}{2} \ln (2\pi e)^d \det(\mSigma^{(t)} + \sigma \cdot \rmI) - \frac{1}{2}\ln (2\pi e)^d \sigma^d,
  \end{align}
  where $\det(\cdot)$ denotes the determinant of a matrix and the second term of Eq.~(\ref{Gaussian_det}) is decided by $\det(\sigma \cdot \rmI) = \sigma^d$.

  Next, we focus on the first term of Eq.~(\ref{Gaussian_det}). Note that $\mSigma^{(t)}$ is the covariance matrix of $\mW^{(t)}_{o}$, hence it's a real symmetric matrix, then according to eigen decomposition, we have
  \begin{equation}\label{eq:sigma_decomp}
    \mSigma^{(t)} = \mQ^{(t)} \bm{\Lambda}^{(t)} \mQ^{(t)\mathrm{T}} = \mQ^{(t)} \mathrm{diag}(\lambda^{(t)}_1, \cdots, \lambda^{(t)}_d )\mQ^{(t)\mathrm{T}},
  \end{equation}
  where $\mQ^{(t)}$ is an orthogonal matrix, i.e., $\mQ^{(t)} \mQ^{(t)\mathrm{T}} = \rmI$. Hence, $\sigma \cdot \rmI = \mQ^{(t)}  (\sigma \rmI)  \mQ^{(t)\mathrm{T}} $. Then we have
  \begin{align*}
    & \; \det(\mSigma^{(t)} + \sigma \cdot \rmI) \\
    =&\; \det(\mQ^{(t)} \mathrm{diag}(\lambda^{(t)}_1, \cdots, \lambda^{(t)}_d ) \mQ^{(t)\mathrm{T}} + \mQ^{(t)} (\sigma \rmI) \mQ^{(t)\mathrm{T}}) \\
    =&\; \det(\mathrm{diag}(\lambda^{(t)}_1, \cdots, \lambda^{(t)}_d )+\sigma \rmI) = \prod_{i=1}^{d}(\lambda^{(t)}_i + \sigma).
  \end{align*}
  Then we have $I(\mD, \; \widetilde{\mW}^{(t)}_{o} \;|\; \mW^{(t)}_{i} ) \leq \frac{1}{2}\ln\frac{\prod_{i=1}^{d}(\lambda^{(t)}_i + \sigma)}{\sigma^{d}}$, which immediately completes the proof.
\end{proof}

\subsection{Proof of Theorem~\ref{thm:upper_bound_personalized}}
Before the proof of Thm.~\ref{thm:upper_bound_personalized}, we have the following lemma.

\begin{lemma}\label{lemma:reverse_matrix}
For any $d$-dimensional semi-positive definite matrix $\mA$ and $\mB$, we have
\begin{enumerate}[\quad(1)]
      \item $\valpha^{\mathrm{T}}(\mA + \mB)^{-1}\valpha \leq \valpha^{\mathrm{T}}\mA^{-1}\valpha,$
      \item $\valpha^{\mathrm{T}}(\mA + \mB)^{-1}\valpha \leq \valpha^{\mathrm{T}}\mB^{-1}\valpha,$
    \end{enumerate}
where $\valpha$ is a $d$-dimensional vector.
\end{lemma}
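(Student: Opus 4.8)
The plan is to reduce the two inequalities to a single Loewner-order statement and then prove that statement by a short variational argument. Observe first that the two claims are symmetric under interchanging $\mA$ and $\mB$, so it suffices to establish claim (1), namely $\valpha^{\mathrm{T}}(\mA+\mB)^{-1}\valpha \leq \valpha^{\mathrm{T}}\mA^{-1}\valpha$ for every vector $\valpha$; claim (2) then follows verbatim by swapping the roles of $\mA$ and $\mB$. Equivalently, (1) asserts the matrix inequality $(\mA+\mB)^{-1} \preceq \mA^{-1}$, i.e.\ that inversion reverses the Loewner order. Throughout I treat $\mA$ (and, for claim (2), $\mB$) as positive definite so that the inverses in question exist; I return to the semidefinite wording below.

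The cleanest route I would take is the convex-duality identity
\begin{equation*}
  \valpha^{\mathrm{T}}\mM^{-1}\valpha = \max_{\vx}\left(2\vx^{\mathrm{T}}\valpha - \vx^{\mathrm{T}}\mM\vx\right),
\end{equation*}
valid for any positive definite $\mM$, where the maximum is attained at $\vx = \mM^{-1}\valpha$ (the quadratic is strictly concave, so its unique stationary point is the global maximum, with value $\valpha^{\mathrm{T}}\mM^{-1}\valpha$). Applying this identity to $\mM = \mA+\mB$ and to $\mM = \mA$, I would invoke that $\mB \succeq 0$ forces $\vx^{\mathrm{T}}(\mA+\mB)\vx \geq \vx^{\mathrm{T}}\mA\vx$ for every $\vx$, so the objective $2\vx^{\mathrm{T}}\valpha - \vx^{\mathrm{T}}(\mA+\mB)\vx$ is pointwise no larger than $2\vx^{\mathrm{T}}\valpha - \vx^{\mathrm{T}}\mA\vx$. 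Taking the supremum over $\vx$ on both sides preserves the inequality and yields exactly $\valpha^{\mathrm{T}}(\mA+\mB)^{-1}\valpha \leq \valpha^{\mathrm{T}}\mA^{-1}\valpha$, which is claim (1).

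As a self-contained backup avoiding the variational identity, I could argue by congruence: setting $\mN = \mA^{-1/2}\mB\mA^{-1/2} \succeq 0$, one has $(\mA+\mB)^{-1} = \mA^{-1/2}(\mI+\mN)^{-1}\mA^{-1/2}$, hence $\mA^{-1} - (\mA+\mB)^{-1} = \mA^{-1/2}\bigl[\mI - (\mI+\mN)^{-1}\bigr]\mA^{-1/2}$; since the eigenvalues of $\mI+\mN$ are at least $1$, the bracket is positive semidefinite, and conjugation by the symmetric factor $\mA^{-1/2}$ keeps it so. The only real obstacle — and the single point needing care — is the gap between the stated \emph{semi}-positive definite hypothesis and the need for invertibility: if $\mA$ is merely semidefinite then $\mA^{-1}$ is undefined. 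I would resolve this by either stating the lemma for a positive definite base matrix (which is exactly the regime of its application, where $\mA=\mSigma_{i-1}$ and $\mB=\diag(\vbeta)$ come from a positive definite covariance), or by reading $\mA^{-1}$ as the limit of $(\mA+\epsilon\mI)^{-1}$ as $\epsilon \downarrow 0$ and passing to the limit in the already-established strictly-definite inequality.
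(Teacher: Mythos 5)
Your proof is correct, but it takes a genuinely different route from the paper's. The paper proves claim (1) by invoking the Woodbury-type identity $(\mA + \mB)^{-1} = \mA^{-1} - \mA^{-1}(\mA^{-1}+\mB^{-1})^{-1}\mA^{-1}$ and then observing that the subtracted term is a quadratic form $\tilde{\valpha}^{\mathrm{T}}(\mA^{-1}+\mB^{-1})^{-1}\tilde{\valpha}$ with $\tilde{\valpha}=\mA^{-1}\valpha$ in a positive semidefinite matrix, hence nonnegative. Your primary argument instead derives the Loewner monotonicity of inversion from the variational identity $\valpha^{\mathrm{T}}\mM^{-1}\valpha = \max_{\vx}(2\vx^{\mathrm{T}}\valpha - \vx^{\mathrm{T}}\mM\vx)$, comparing the two objectives pointwise; your congruence backup via $\mN=\mA^{-1/2}\mB\mA^{-1/2}$ is a third valid route. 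Each approach has its advantages: the paper's is the shortest once the inversion identity is granted, but that identity itself requires \emph{both} $\mA^{-1}$ and $\mB^{-1}$ to exist, whereas your variational argument needs only $\mA \succ 0$ and $\mB \succeq 0$ for claim (1), so it is strictly more general and entirely self-contained. You also explicitly address the tension between the lemma's ``semi-positive definite'' hypothesis and the appearance of $\mA^{-1}$ and $\mB^{-1}$ in the statement --- a gap the paper leaves unremarked --- and your proposed fixes (restricting to the positive definite regime in which the lemma is actually applied, or passing to the limit of $(\mA+\epsilon\mI)^{-1}$) are both sound.
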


\begin{proof}
  Here we only prove inequality (1), and the proof for inequality (2) is the same as inequality (1). First, we have
  \begin{equation}\label{eq:unfold_reverse_matrix}
    (\mA + \mB)^{-1} = \mA^{-1} - \mA^{-1}(\mA^{-1}+\mB^{-1})^{-1}\mA^{-1},
  \end{equation}
  hence, we can get following result.
  $$\valpha^{\mathrm{T}}(\mA + \mB)^{-1}\valpha = \valpha^{\mathrm{T}}(\mA^{-1} -\mA^{-1}(\mA^{-1}+\mB^{-1})^{-1}\mA^{-1})\valpha. $$
  Then we need to prove
  \begin{equation}\label{eq:sum_of_reverse_matrix}
    \valpha^{\mathrm{T}}\mA^{-1}(\mA^{-1}+\mB^{-1})^{-1}\mA^{-1}\valpha \geq 0.
  \end{equation}

  As $\mA^{-1}$ is a symmetric matrix, we can rewrite Eq.~(\ref{eq:sum_of_reverse_matrix}) as
  \begin{equation}\label{eq:sum_of_reverse_matrix_2}
    \tilde{\valpha}^{\mathrm{T}}(\mA^{-1}+\mB^{-1})^{-1}\tilde{\valpha} \geq 0,
  \end{equation}
  where $\tilde{\valpha}=\mA^{-1}\valpha$. As $\mA$ and $\mB$ are semi-positive definite, we have $(\mA^{-1}+\mB^{-1})^{-1}$ is a semi-positive definite matrix either, hence inequality (\ref{eq:sum_of_reverse_matrix_2}) holds, which concludes the proof.
\end{proof}

Then with Lemma~\ref{lemma:reverse_matrix}, we can prove Thm.~\ref{thm:upper_bound_personalized}.

\begin{proof}
  As we can rewrite $\mSigma_i$, $i\in\{0,\cdots, d-1\}$, as
  \begin{align}\label{eq:matrix_decomp_2}
   \mSigma_{i} &= \left( \begin{array}{cc}
                              \mSigma_{i-1} &  \vrho_{i}\\
                                 \vrho_{i}^\mathrm{T} &   c_{i,i}
                            \end{array}\right) \\
                            &=\left( \begin{array}{cc}
                                                           \mI_{i-1} & \mathbf{0} \notag \\
                                                           \vrho_{i}^\mathrm{T}  \mSigma_{i-1}^{-1} & 1
                                                         \end{array} \right)\left( \begin{array}{cc}
                                                                                     \mSigma_{i-1} & \vrho_{i} \\
                                                                                     \mathbf{0} & c_{i,i}-\vrho_{i}^\mathrm{T}\mSigma_{i-1}^{-1}\vrho_{i}
                                                                                   \end{array}\right),
   \end{align}
   Hence, the determinant is
   \begin{equation}\label{eq:single_dim_decompose}
     \det\mSigma_i = (c_{i,i}-\vrho_{i}^\mathrm{T}\mSigma_{i-1}^{-1}\vrho_{i})\det \mSigma_{i-1}.
   \end{equation}
   Based on this result, with substitution $\mSigma_i+\vbeta_{i}$ for $\mSigma_i$, we have
   \begin{align}\label{eq:decompose_of_Matrix}
     &\det(\mSigma_i+\vbeta_i) = v_i \cdot \det (\mSigma_{i-1}+\diag(\vbeta_{i-1})), \\
     &v_i = c_{i,i}+\beta_i-\vrho_{i}^\mathrm{T}(\mSigma_{i-1}+\diag(\vbeta_{i-1}))^{-1}\vrho_{i},
   \end{align}
   where $\vbeta_i = (\beta_0, \cdots, \beta_{i-1})$. Then as $(\mSigma_{i-1}+\diag(\vbeta_{i-1}))^{-1}$ is a semi-positive definite matrix, we have
   \begin{equation}\label{eq:diagnal_upper_bound}
     v_i \leq c_{i,i}+\beta_i.
   \end{equation}

   Furthermore, we can rewrite $v_i$ as   \begin{align*}
     v_i & = &&c_{i,i}+\beta_i-\vrho_{i}^\mathrm{T}\mSigma^{-1}_{i-1}\vrho_{i} \notag \\
         &   &&+ \vrho_{i}^\mathrm{T}\mSigma^{-1}_{i-1}(\mSigma^{-1}_{i-1} + \diag(\vbeta_{i-1})^{-1})^{-1}\mSigma^{-1}_{i-1}\vrho_{i} \notag\\
         & \leq && c_{i,i}+\beta_i-\vrho_{i}^\mathrm{T}\mSigma^{-1}_{i-1}\vrho_{i}+\vrho_{i}^\mathrm{T}\mSigma^{-1}_{i-1}\diag(\vbeta_{i-1})\mSigma^{-1}_{i-1}\vrho_{i},
   \end{align*}
   where the first equality is based on Eq.~(\ref{eq:unfold_reverse_matrix}), and the last inequality depends on Lemma~\ref{lemma:reverse_matrix}.
   Finally, according to Eq.~(\ref{eq:single_dim_decompose}), we have $\ln\det\mSigma_{d-1} = \sum_{i=0}^{d-1} \ln v_i$ by induction.
   Combining above results, we can immediately conclude the proof.
\end{proof}

\subsection{Proof of Theorem~\ref{thm:dp_channel_capacity}}
\begin{proof}
    For the first stage, if we denote the threshold of gradient clipping as $S$ and denote the added noise as $\sigma \cdot \rmI$, we get $\| \vg^{(t)} \|_2 \leq S$, where $\vg^{(t)}=\nabla_{\mW}F(\mW^{(t)}_{i};\; \mD)$ is a random vector as a function of $\mD$. Hence, we have
    \begin{equation}\label{eq:cov_clipping_bound}
    \tr(\mSigma^{(t)}) \leq \tr(\mathE(\vg^{(t)}(\vg^{(t)})^{T}))=\mathE(\| \vg^{(t)} \|^2_2) \leq S^2,
    \end{equation}
    where the first inequality depends on the relationship between covariance matrix and the auto-correlation matrix, i.e.,  $\tr[\mathE(\vg^{(t)}(\vg^{(t)})^{T})-\mSigma^{(t)}] = \tr[\mathE(\vg^{(t)})\mathE(\vg^{(t)})^{T})]\geq 0$.

    Then we have $\sum_{i=1}^{d}\lambda^{(t)}_i=\tr(\mSigma^{(t)})\leq S^2$. Based on AM-GM inequality and the fact $\lambda^{(t)}_i\geq 0$, the channel capacity derived by Eq.~(\ref{eq:info_func})
    has an upper bound as
    $$\setlength{\abovedisplayskip}{5pt}
  \setlength{\belowdisplayskip}{5pt}
  f^{(t)}(\sigma) \leq \hat{f}^{(t)}(\sigma)=d\cdot \ln\frac{\sigma+\frac{\sum_{i=1}^{d}\lambda^{(t)}_i}{d}}{\sigma} \leq d\cdot \ln\frac{\sigma+S^2/d}{\sigma}. $$

    Therefore, if we treat $\sigma$ as a constant, and denote $u_{\sigma}(d) := d\cdot \ln\frac{\sigma+S^2/d}{\sigma}, d \geq 0$, we have $f^{(t)}(\sigma)\leq u_{\sigma}(d)$. We observe that $u^{'}_{\sigma}(d) = \ln\frac{\sigma+S^2/d}{\sigma} + \frac{\sigma}{\sigma+S^2/d}-1$. Hence
    $$u{''}_{\sigma}(d)=-\frac{S^2(2\sigma d + S^2)}{d(\sigma d + S^2)^2} \leq 0 \Rightarrow u^{'}_{\sigma}(d)\geq u^{'}_{\sigma}(+\infty)=0,$$
    which implies $u_{\sigma}(d)$ is an increasing function of the dimension $d$. Based on L'Hospital's rule, we can conclude that
    \begin{equation}\label{eq:channel_capacity_inf}
    \setlength{\abovedisplayskip}{5pt}
  \setlength{\belowdisplayskip}{5pt}
      C^{(t)}=f^{(t)}(\sigma)\leq u_{\sigma}(d) \leq u_{\sigma}(+\infty)=\frac{S^2}{\sigma}.
    \end{equation}

    For the second stage, which provides a $(\epsilon, \delta)$-DP, the authors in \cite{DBLP:journals/fttcs/DworkR14} demonstrate that we need to choose $\sigma \geq S^2\cdot \frac{2\log(1.25/\delta)}{\epsilon^2}$, then with a substitution in Eq.~(\ref{eq:channel_capacity_inf}), we have $C^{(t)}\leq \frac{\epsilon^2}{2\log(1.25/\delta)}$. Hence, the channel capacity of $(\epsilon, \delta)$-DP in the FL scenario is upper bounded by $\frac{\epsilon^2}{2\log(1.25/\delta)}$.

    Finally, as illustrated in \cite{DBLP:journals/fttcs/DworkR14}, for mini-batch SGD in DP, i.e., the batch size $B>1$, the mechanism becomes
    $ \vg = \frac{1}{B}(\sum_{i=1}^{B}\bar{\vg}_i + \vxi)$,
    let $\tilde{\vg} = \frac{1}{B}\sum_{i=1}^{B}\bar{\vg}_i$ and $\tilde{\vxi} = \frac{1}{B}\vxi$, we have $\mSigma_{\tilde{\vg}} = \frac{1}{B}\mSigma_{\vg_i}$ and $\mSigma_{\tilde{\vxi}} = \frac{1}{B^2}\mSigma_{\vxi}$. By substitution them into Eq.~(\ref{eq:cov_clipping_bound}), we can immediately conclude the proof.
\end{proof}


\begin{figure*}[htb]
\begin{center}
\begin{minipage}[b]{0.9\textwidth}
\includegraphics[width=\textwidth]{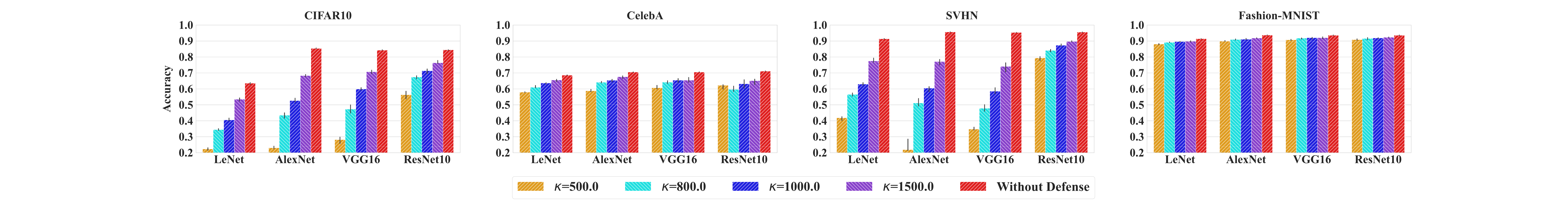}
\vskip -0.1in
\end{minipage}
\vskip -0.1in
\caption{The effect of channel capacities ($\kappa$) for model accuracy. (White Channel)}\label{fig:ModelACC_ChannelCapacity_WhiteChannel}
\begin{minipage}[b]{0.9\textwidth}
\includegraphics[width=\textwidth]{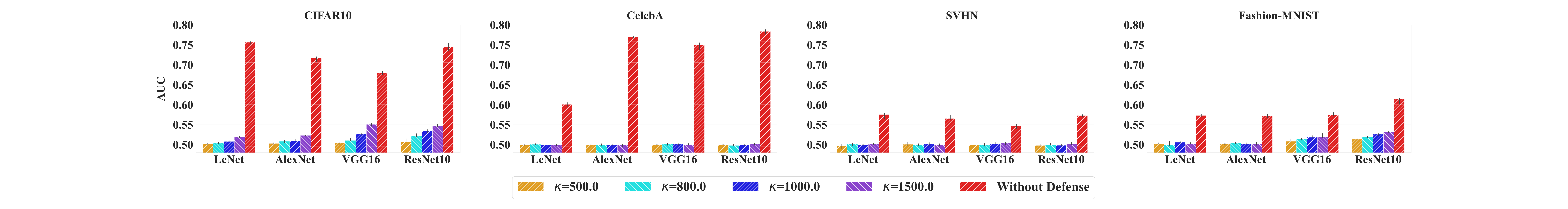}
\vskip -0.1in
\end{minipage}
\vskip -0.1in
\caption{The effect of channel capacities ($\kappa$) for MIA. (White Channel)}\label{fig:MemberInf_ChannelCapacity_WhiteChannel}
\begin{minipage}[b]{0.9\textwidth}
\includegraphics[width=\textwidth]{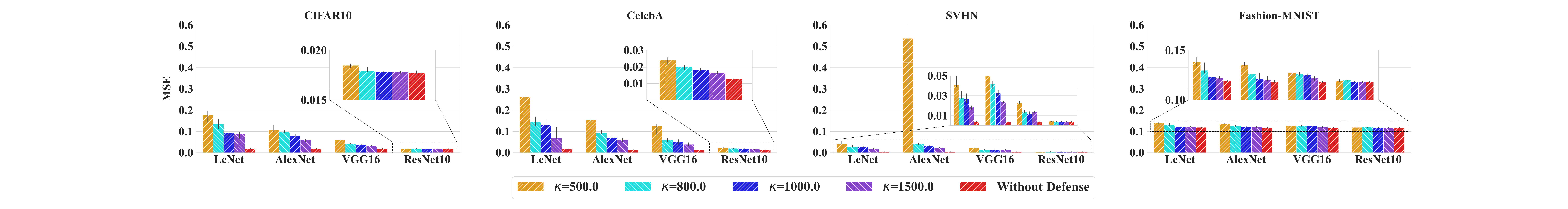}
\vskip -0.1in
\end{minipage}
\vskip -0.1in
\caption{The effect of channel capacities ($\kappa$) for model inversion attack. (White Channel)}\label{fig:ModelInv_ChannelCapacity_WhiteChannel}
\vskip -0.2in
\end{center}
\end{figure*}

\section{\qi{Validating the theories of existing methods}}\label{sec:exp_validating}

\noindent{\bfseries The impact of batch size for DP and utilizing large batch size in defending against DRA.} \qi{To improve the existing defense algorithm for defending against DRA, we design experiments to validate the roles of $B$ by gradient inversion attack on CIFAR-10. Specifically, to keep identical reconstruction difficulties for different $B$, we only reconstruct one image from the gradient as the mean restoration $\mathbb{E}[\hat{\mD}]$. Then we calculate the MSE between $\mathbb{E}[\hat{\mD}]$ and the mean value of local dataset $\mathbb{E}[\mD]$, i.e., $\|\mathbb{E}[\hat{\mD}]-\mathbb{E}[\mD]\|^2$. Due to the convexity of MSE, the difference between $\hat{\mD}$ and $\mD$ is lower bounded, i.e., $\mathbb{E}\|\hat{\mD}-\mD\|^2\geq \|\mathbb{E}[\hat{\mD}]-\mathbb{E}[\mD]\|^2$. Moreover, we utilize clipping bound $S=1.0$ and the noise multiplier $\sigma=1.3$ for the DP training. The results are displayed in Fig.~\ref{fig:batch_size_exp}. If we utilize DP for privacy protection, the performance for defending against DRA consistently decreases with an increasing $B$. This phenomenon is consistent with Thm.~\ref{thm:dp_channel_capacity}, indicating that the small $B$ is beneficial for DP in defending against DRA.}

\qi{Additionally, when we do not apply any defense technique for privacy, the defense ability increases according to the increasing $B$. This phenomenon is consistent with Eq.~(\ref{thm:large_batch_size}) due to the intrinsic noise in the collected data. Moreover, the performance improvement becomes more significant when we add a constant noise to the parameters. The conclusion is that when we add a constant noise to the parameter, a larger $B$ leads to a stronger ability to defend against DRA.}

\noindent{\bfseries The impact of eigenvalues in compression.} \qi{We also validate our theories for compression on CIFAR-10 datasets. In these experiments, we utilize the gradient inversion attack to reconstruct data from the gradients. For the compression, we set the eigenvector to be $\mathbf{0}$ to reduce the information in the corresponding dimension. Then we map the training data to the compressed eigenspace. For comparison, we only compress one dimension according to the eigenvalue. The results are displayed in Fig.~\ref{fig:Compression_Eigen}. Specifically, the eigenvalues $\{\lambda_{max}, \lambda_1, \lambda_{10}\}$ are as follows: \{222.21, 85.08, 9.91\} for CIFAR10, \{346.21, 71.01, 12.82\} for CelebA, \{313.0, 28.04, 4.67\} for SVHN, and \{101.88, 61.80, 3.21\} for Fashion-MNIST. The experiment results demonstrate that the compression on the dimension with a larger eigenvalue leads to a stronger defensive ability to defend against DRA, which is consistent with our theories.}


\begin{figure*}[htb]
\begin{center}
\begin{minipage}[b]{0.9\textwidth}
\includegraphics[width=\textwidth]{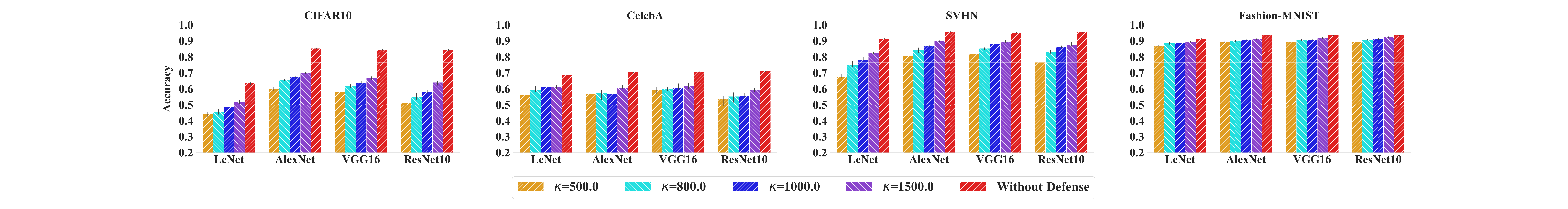}
\vskip -0.1in
\end{minipage}
\vskip -0.1in
\caption{The effect of channel capacities ($\kappa$) for model accuracy. (Personalized Channel)}\label{fig:ModelACC_ChannelCapacity_PersonalizedChannel}
\begin{minipage}[b]{0.9\textwidth}
\includegraphics[width=\textwidth]{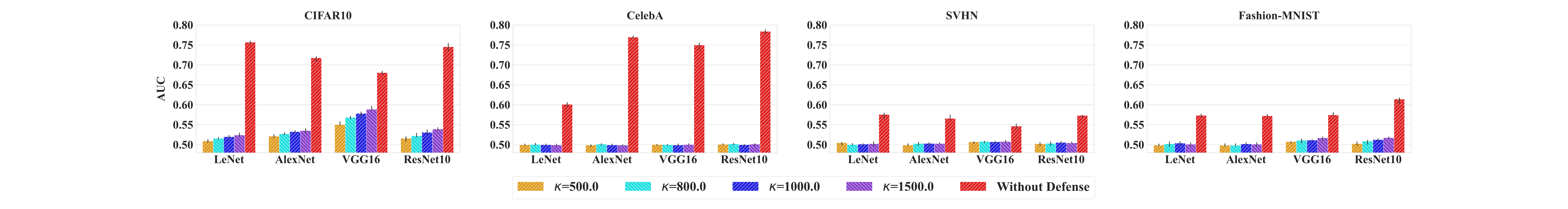}
\vskip -0.1in
\end{minipage}
\vskip -0.1in
\caption{The effect of channel capacities ($\kappa$) for MIA. (Personalized Channel)}\label{fig:MemberInf_ChannelCapacity_PersonalizedChannel}
\begin{minipage}[b]{0.9\textwidth}
\includegraphics[width=\textwidth]{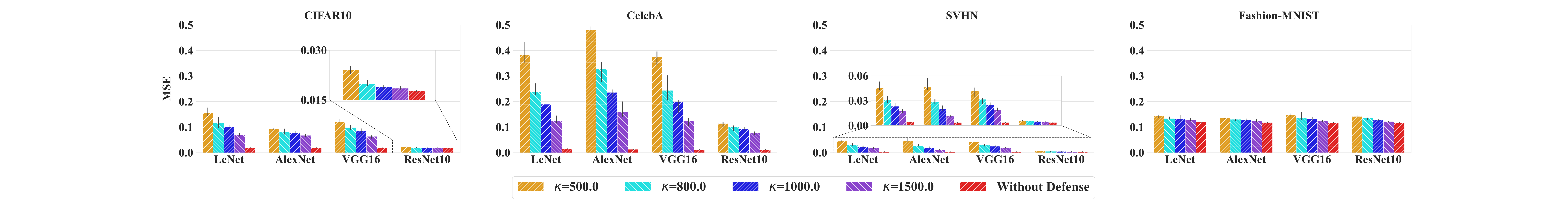}
\vskip -0.1in
\end{minipage}
\vskip -0.1in
\caption{The effect of channel capacities ($\kappa$) for model inversion attack. (Personalized Channel)}\label{fig:ModelInv_ChannelCapacity_PersonalizedChannel}
\vskip -0.2in
\end{center}
\end{figure*}

\section{Channel Capacities of Existing Methods}\label{subsec:more_existing_method}
\noindent{\bfseries Compression.} As for gradient compression, it is easy to understand that the compression mechanism reduces the information contained in parameters, and we can also formalize this intuition with our channel model.

On the one hand, in the eigenspace of $\mSigma^{(t)}$, the channel capacity is $C^{(t)}=\frac{1}{2}\sum_{i=1}^{d}\ln\frac{\lambda^{(t)}_i + \sigma}{\sigma}$, and due to the non-negativity of $\lambda^{(t)}_i$, $\ln\frac{\lambda^{(t)}_i + \sigma}{\sigma}\geq 0$, $i\in\{1,\cdots,d\}$, represents the upper bound of information contained in the different independent dimensions. As illustrated in Fig.~\ref{fig:Rel_Compre_Noise}, if we set a positive $\lambda^{(t)}_i$ to be $0$, then the corresponding $C^{(t)}$ decreases.

On the other hand, the gradient compression is directly applied for the gradient $\vg$ \cite{DBLP:conf/iclr/LinHM0D18,DBLP:conf/iclr/TsuzukuIA18}, hence we need to analyze $\mSigma^{(t)}$ itself. Specifically, in this situation, the information contained in different dimensions is not independent, so we cannot directly analyze the information contained in each dimension. Without loss of generality, we can rewrite $\mSigma^{(t)}$ as

  \begin{align}\label{matrix_decomp}
  \mSigma^{(t)} = &\left( \begin{array}{cc}
                              \mP_{d-k} & \mR \\
                              \mR^{\mathrm{T}}    & \mQ_{k}
                            \end{array}\right) \\ = &\left( \begin{array}{cc}
                                                           \mI_{d-k} & \mathbf{0} \notag \\
                                                           \mR^{\mathrm{T}}  \mP_{d-k}^{-1} & \mI_{k}
                                                         \end{array} \right)\left( \begin{array}{cc}
                                                                                     \mP_{d-k} & \mR \\
                                                                                     \mathbf{0} & \mQ_{k}-\mR^{\mathrm{T}}\mP_{d-k}^{-1}\mR
                                                                                   \end{array}\right),
  \end{align}
where $\mP_{d-k}\in \mathbb{R}^{(d-k)\times (d-k)}$ represents the covariance matrix of the uncompressed dimension, $\mQ_{k}\in \mathbb{R}^{k\times k}$ represents the covariance matrix of the compressed dimension, and $\mR \in \mathbb{R}^{(d-k)\times k}$ represents the mutual covariance matrix between the compressed dimension and the uncompressed dimension. Thus, we have
\begin{equation}\label{eq:model_compression}
  \det(\mSigma^{(t)}) = \det(\mP_{d-k})\cdot\det(\mQ_{k}-\mR^{\mathrm{T}}\mP_{d-k}^{-1}\mR).
\end{equation}
Combining Eq.~(\ref{eq:model_compression}) with Thm.~\ref{thm:channel_capacity}, we can conclude that the decrease of channel capacity caused by compression can be formalized as
$$\Delta C^{(t)}=\frac{1}{2}\ln\frac{\det(\mQ_{k}+\sigma\mI_{k}-\mR^{\mathrm{T}}(\mP_{d-k}+\sigma\mI_{d-k})^{-1}\mR)}{\sigma^k}.$$

As the covariance matrix is positive semi-definite, we conclude that $\Delta C^{(t)} \geq 0$.


\begin{figure*}
\begin{center}
\begin{minipage}[b]{0.9\textwidth}
\vskip -0.3in
\includegraphics[width=\textwidth]{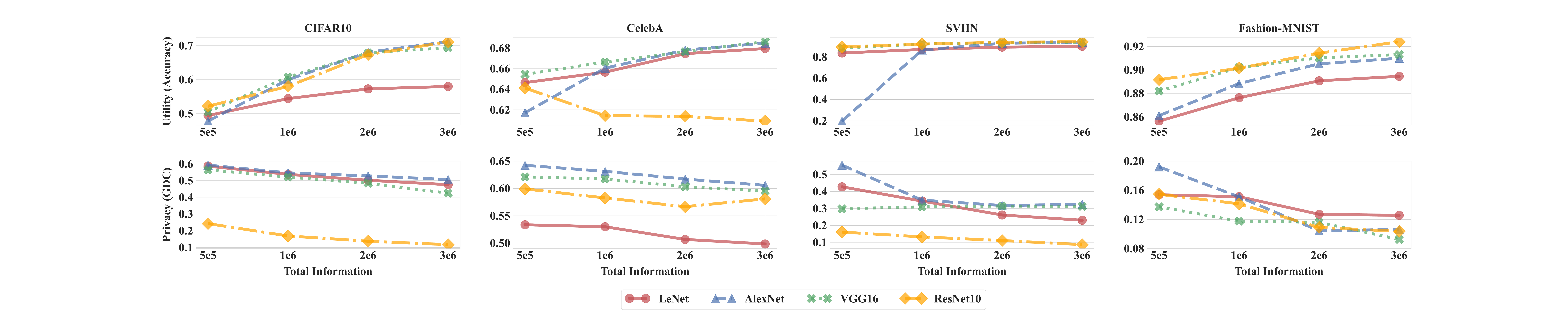}
\vskip -0.1in
\end{minipage}
\vskip -0.1in
\caption{Utility privacy tradeoff according to different optimization number ($n$) when $\kappa=300$.}\label{fig:Utility_Privacy_Tradeoff_Number}
\end{center}
\end{figure*}

\section{Additional Experiments}\label{sec:appendix_exp}

\begin{figure*}[htb]
    \begin{center}
\begin{minipage}[b]{0.9\textwidth}
        \includegraphics[width=\textwidth]{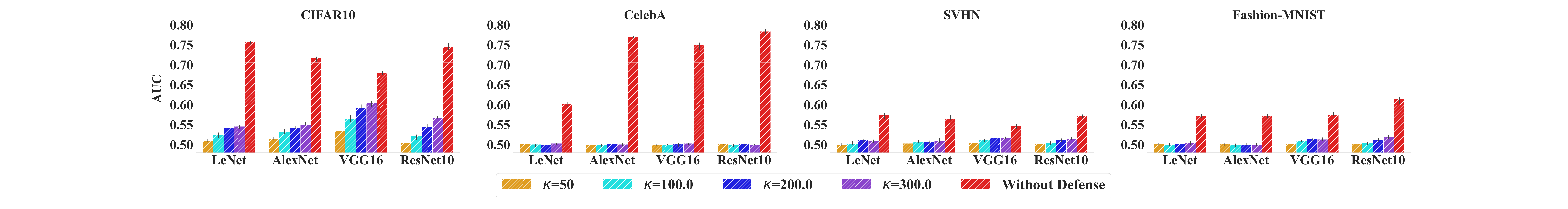}
        \vskip -0.1in
        \end{minipage}
        \vskip -0.1in
        \caption{The effect of channel capacities ($\kappa$) for MIA. (Natural Channel)}\label{fig:MemberInf_ChannelCapacity}
    \begin{minipage}[b]{0.9\textwidth}
        \includegraphics[width=\textwidth]{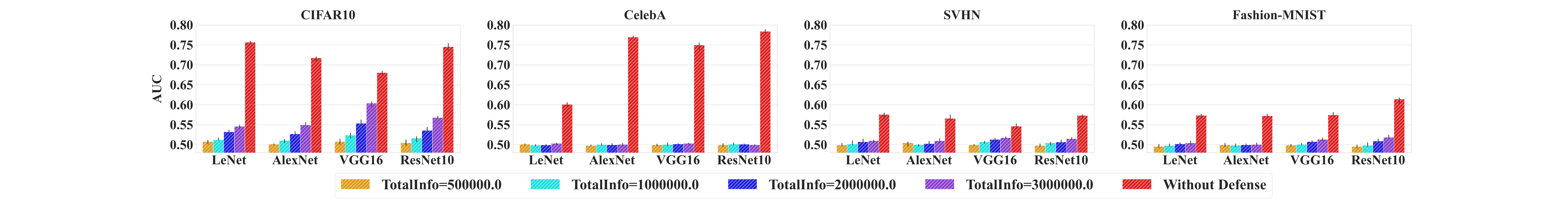}
        \vskip -0.1in
        \end{minipage}
        \vskip -0.1in
        \caption{The effect of optimization number ($n$) for MIA when $\kappa=300$. (Natural Channel)}\label{fig:MemberInf_Number}
    \end{center}
\end{figure*}



\begin{figure*}[htb]
    \begin{center}
        \begin{minipage}[b]{0.9\textwidth}
        \includegraphics[width=\textwidth]{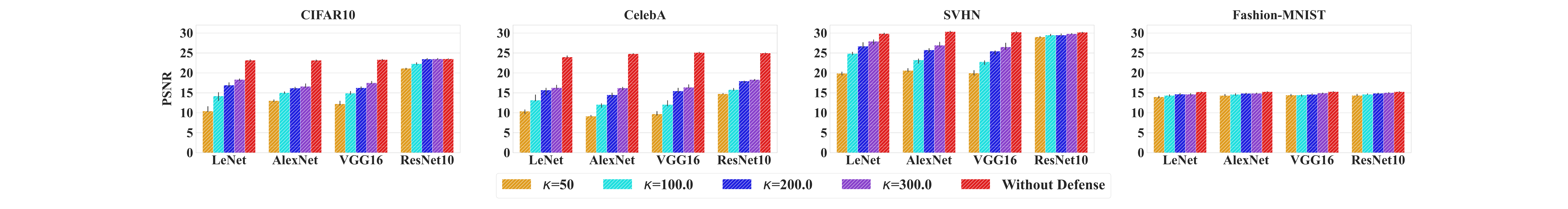}
        \end{minipage}
        \caption{Additional experiments of PSNR for the model inversion attacks. (Natural Channel)}\label{fig:modinv_psnr}
        \begin{minipage}[b]{0.9\textwidth}
        \includegraphics[width=\textwidth]{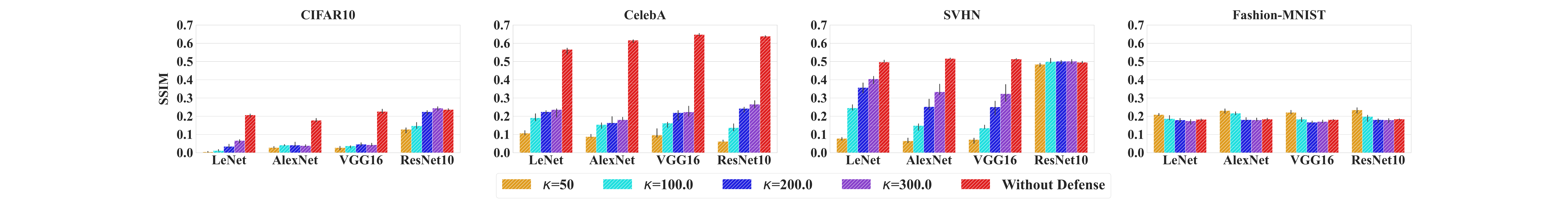}
        \end{minipage}
        \caption{Additional experiments of SSIM for the model inversion attacks. (Natural Channel)}\label{fig:modinv_SSIM}
        \begin{minipage}[b]{0.9\textwidth}
        \includegraphics[width=\textwidth]{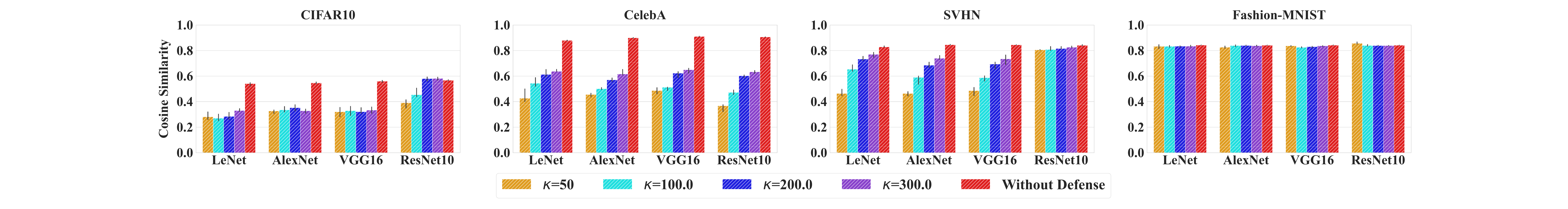}
        \end{minipage}
        \caption{Additional experiments of Cosine Similarity for the model inversion attacks. (Natural Channel)}\label{fig:modinv_cosine_sim}
    \end{center}
\end{figure*}

\begin{figure*}[htb]
    \begin{center}
\begin{minipage}[b]{0.9\textwidth}
        \includegraphics[width=\textwidth]{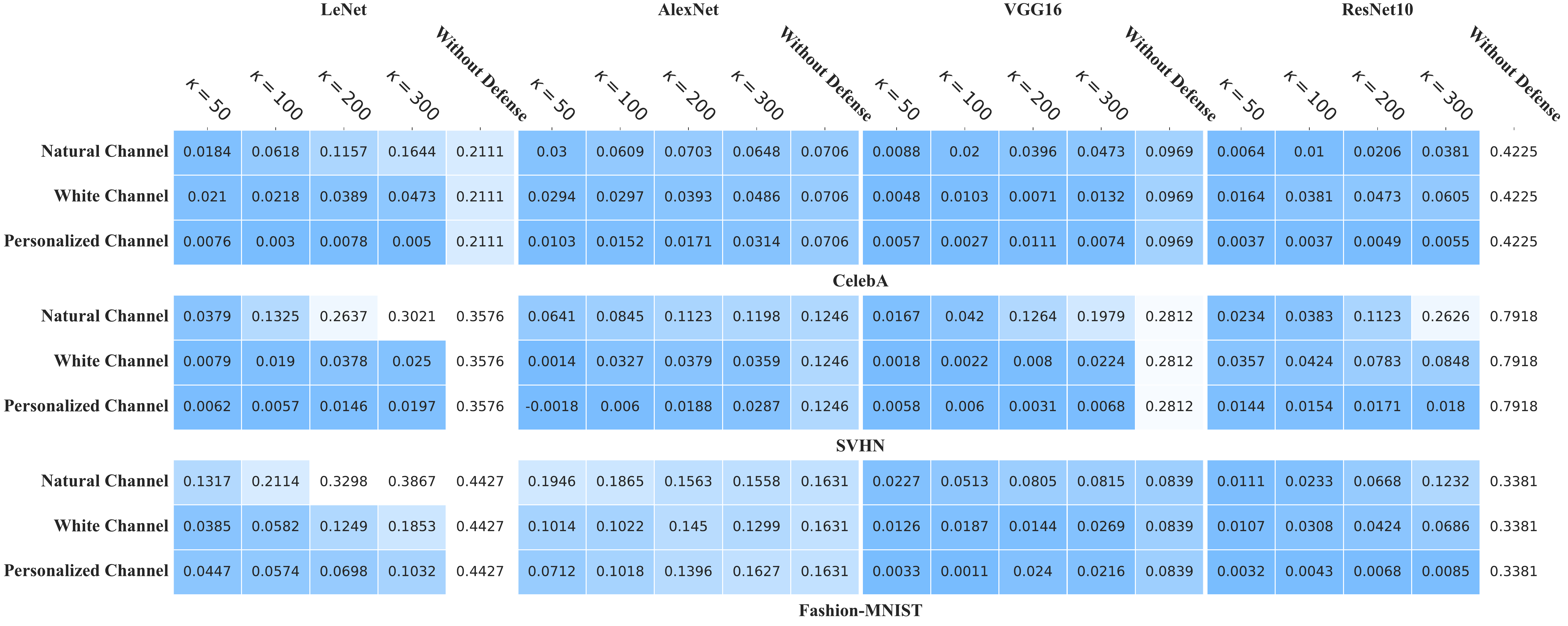}
        \vskip -0.1in
        \end{minipage}
        \vskip -0.1in
        \caption{Additional heatmaps of MSE for the gradient inversion attacks.}\label{fig:Heat_map_MSE_appendix}
\vskip -0.2in
\end{center}
\end{figure*}

\begin{figure*}[htb]
    \begin{center}
        \begin{minipage}[b]{0.9\textwidth}
        \includegraphics[width=\textwidth]{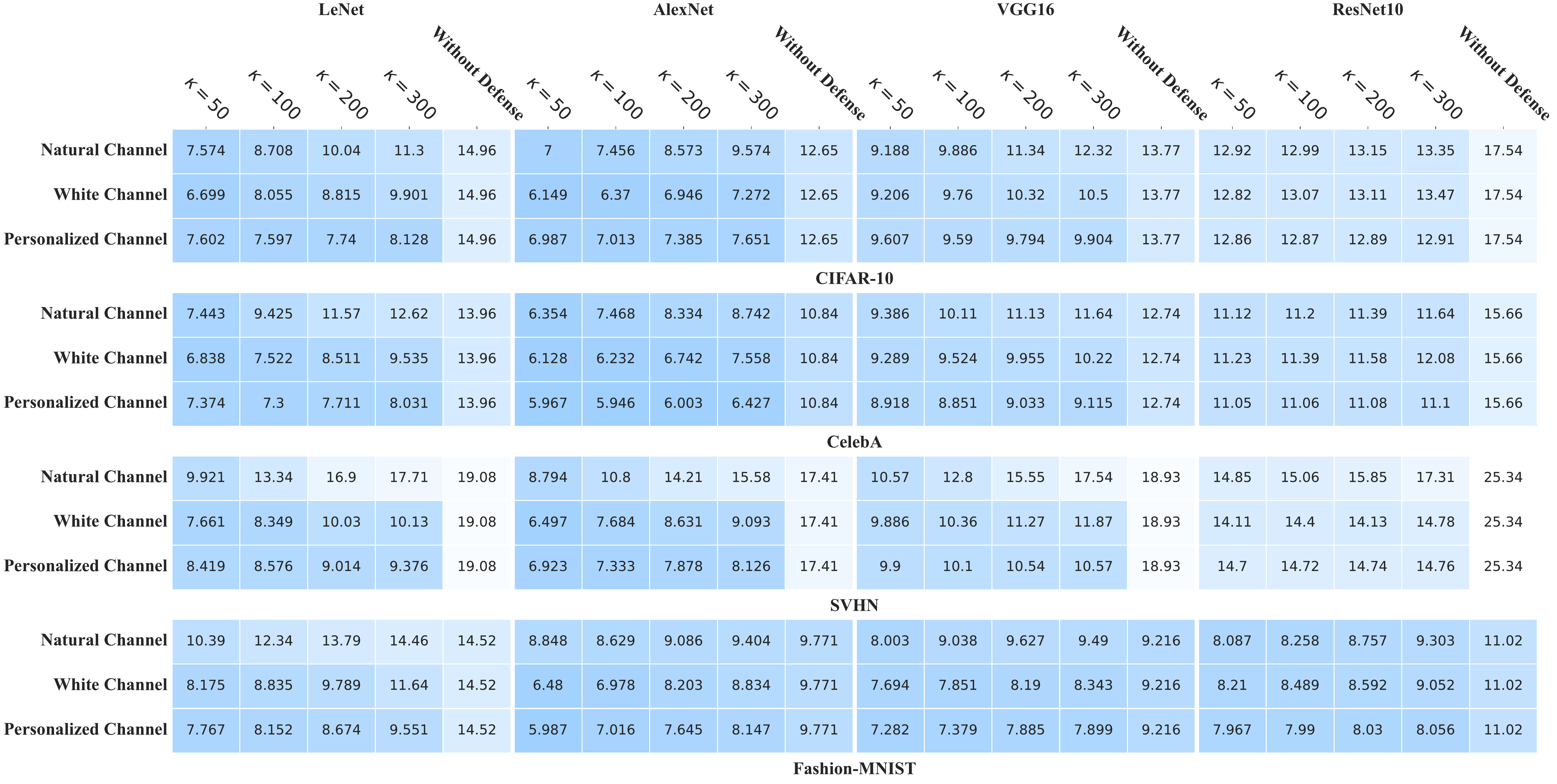}
        \end{minipage}
        \caption{Additional heatmaps of PSNR for the gradient inversion attacks.}\label{fig:heat_map_psnr}
    \end{center}
\end{figure*}

\begin{figure*}[htb]
    \begin{center}
        \begin{minipage}[b]{0.9\textwidth}
        \includegraphics[width=\textwidth]{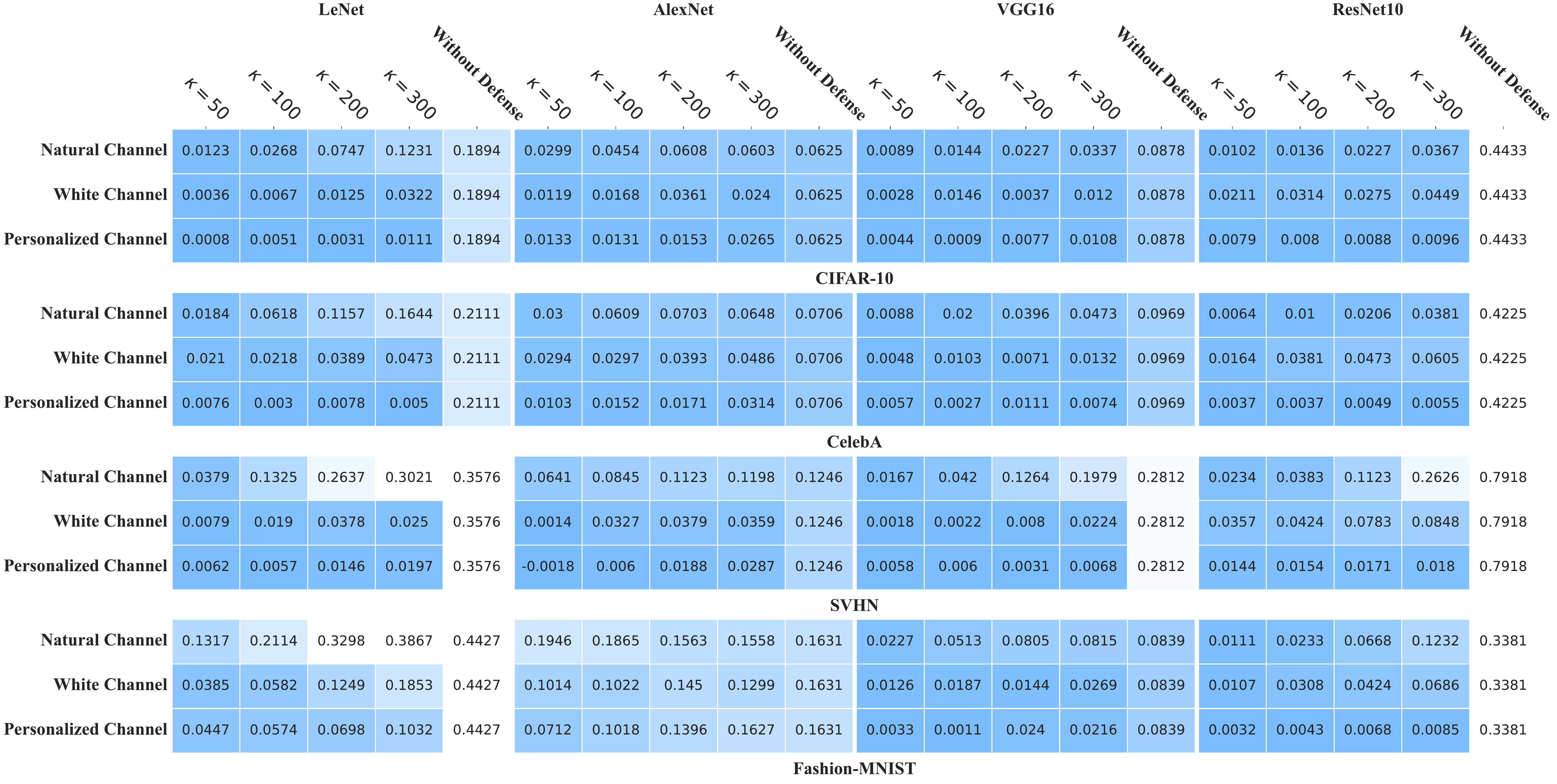}
        \end{minipage}
        \caption{Additional heatmaps of SSIM for the gradient inversion attacks.}\label{fig:heatmap_SSIM}
    \end{center}
\end{figure*}

\begin{figure*}
    \begin{center}
                \begin{minipage}[b]{0.9\textwidth}
        \includegraphics[width=\textwidth]{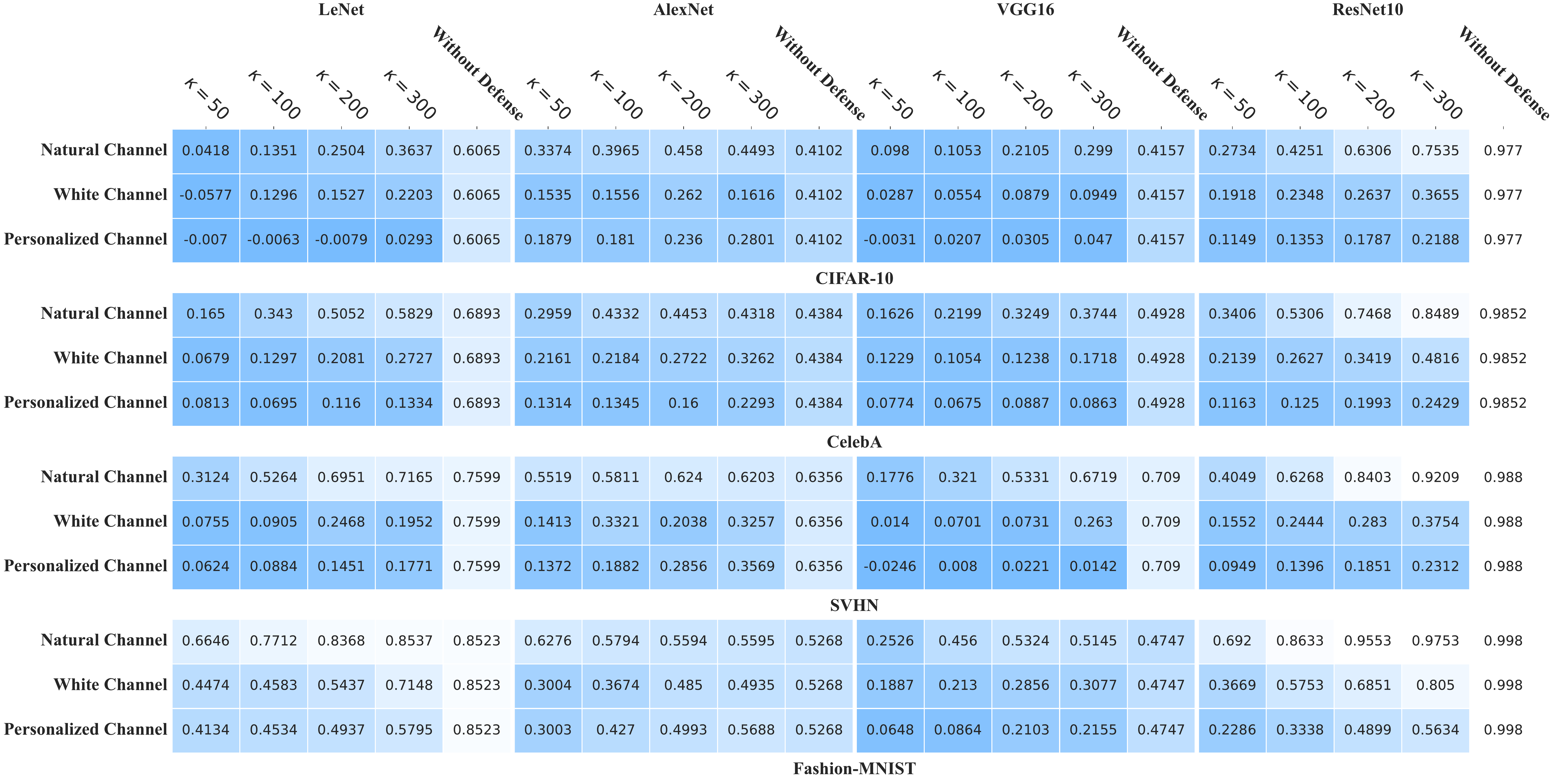}
        \vskip -0.1in
        \end{minipage}
        \vskip -0.1in
        \caption{Additional heatmaps of Cosine Similarity for the gradient inversion attacks.}\label{fig:heatmap_cosine_sim}
    \end{center}
\end{figure*}

\qi{In this section, we display additional experimental results to evaluate our techniques. The results are summarized as follows:}

\begin{itemize}
    \item \qi{{\bfseries The results for White Channel and Personalized Channel.} Fig.~\ref{fig:ModelACC_ChannelCapacity_WhiteChannel} and Fig.~\ref{fig:ModelInv_ChannelCapacity_WhiteChannel} display the model accuracy and the defense ability to defend against DRA for White Channel. Meanwhile, Fig.~\ref{fig:ModelACC_ChannelCapacity_PersonalizedChannel} and Fig.~\ref{fig:ModelInv_ChannelCapacity_PersonalizedChannel} display the corresponding results for Personalized Channel. Specifically, these results are similar to the Natural Channel, when we increase the channel capacity, the model accuracy increases, while the defense ability decreases.}
    \item \qi{{\bfseries The defense ability for defending against MIA.} The results in Fig.~\ref{fig:MemberInf_ChannelCapacity_WhiteChannel}, Fig.~\ref{fig:MemberInf_ChannelCapacity_PersonalizedChannel}, and Fig.~\ref{fig:MemberInf_ChannelCapacity} indicate that reducing channel capacity can also defend against MIA. Specifically, a smaller AUC means a stronger ability for defending against MIA, which indicates that it is difficult for an attacker to infer the membership property of the given data. When we constrain the transmitted information, the AUC for conducting MIA significantly decreases.
    Moreover, Fig.~\ref{fig:MemberInf_Number} exhibits a similar phenomenon when we restrict the optimization rounds $n$. All of these results demonstrate that constraining transmitted information can also defend against MIA.}
    \item \qi{{\bfseries Utility privacy tradeoff according to optimization rounds $n$.} Fig.~\ref{fig:Utility_Privacy_Tradeoff_Number} displays the utility privacy tradeoff according to $n$. Similarly, we fix the channel capacity $\kappa=300$, and vary the total transmitted information in \{$5 \times 10^5$, $1 \times 10^6$, $2 \times 10^6$, $3 \times 10^6$\} (where $n=\lfloor \frac{TotalInfo}{\kappa} \rfloor$, where $\lfloor \cdot \rfloor$ is the floor function.). Specifically, increasing $n$ leads to more transmitted information, thus results in the increase in utility and the decrease in the ability of privacy protection. These results demonstrate that adjusting $n$ can also influence the tradeoff between the utility and the privacy. }
    \item \qi{{\bfseries More metrics for evaluating the ability to defend against DRA.} For random variables, MSE is $0$ implies the convergence in mean, which further implies convergence in distribution, i.e., the distributions of these variables are identical~\cite{wasserman2004all}. Meanwhile, a smaller MSE indicates the reconstructed data distribution gets closer to the target distribution. Fig.~\ref{fig:Heat_map_MSE_appendix} displays the MSE for CelebA, SVHN, and Fashion-MNIST. The results indicate that when we constrain the transmitted information, the MSE increases, which indicates that it is more difficult for an attacker to conduct DRA attacks.  \\
    In addition to the MSE, we also utilize other metrics such as PSNR, SSIM, and Cosine Similarity to measure the similarity between the reconstructed data and the target data. Among them, PSNR is the most popular metric to measure the quality of the processed image, SSIM is the second most popular metric for this target \cite{PopularMetric}, and Cosine Similarity is the most familiar metric for us to measure the similarity of two vectors. For all of them, the larger value means more similarities in the reconstruction, which indicates the weaker ability to defend against DRA. Fig.~\ref{fig:modinv_psnr} and Fig.~\ref{fig:heat_map_psnr} display the results of PSNR, Fig.~\ref{fig:modinv_SSIM} and Fig.~\ref{fig:heatmap_SSIM} show the reconstruction error by SSIM, Fig.~\ref{fig:modinv_cosine_sim} and Fig.~\ref{fig:heatmap_cosine_sim} utilize Cosine Similarity to measure the quality of reconstructed data. All of these results demonstrate that reducing channel capacity can effectively reduce the quality of reconstruction. In other words, our method can enhance the ability to defend against DRA, which is consistent with Thm.~\ref{mse_lower_bound}. \\
    Particularly, we put emphasis on the metric of MSE. We choose MSE as the main metric for three reasons: first, MSE indicates the convergence of random variables~\cite{wasserman2004all}. That is, if the MSE of two random variables is $0$, we conclude that they have identical distributions, which means a perfect reconstruction. Second, Thm.~\ref{mse_lower_bound} indicates that restricting transmitted information can provide the theoretical lower bound for MSE. Finally, MSE is a general metric for random variables.}

\end{itemize}

\section{Discussion of the Channels}\label{sec:advantage_and_disadvantage}

\qi{In this section, we summarize the advantages and disadvantages of the Natural Channel, the While Channel, and the Personalized Channel as follows:}

\begin{itemize}
    \item {\bfseries Natural Channel}. \qi{It preserves the relative importance of different attributes, hence it maintains the information contained in the dataset, which is beneficial for the model utility in FL. However, Natural Channel cannot incorporate prior knowledge into the protecting process, hence we cannot adjust the algorithm according to the requirement when channel capacity is decided.}
    \item {\bfseries White Channel.} \qi{It provides a much stronger defense ability to defend against DRA when there is no prior knowledge. It treats the importance of each attribute to be equal, which adds more noise to the dimension with more information to achieve such stronger protection. However, the computation complexity of the White Channel is higher than the Natural Channel because of the additional matrix multiplication.}
    \item {\bfseries Personalized Channel.} \qi{It allows us to leverage prior knowledge of the attribute importance to the protection process, hence we can perform the better utility-privacy tradeoff. Whereas, in order to reduce the computation complexity, the Personalized Channel depends on an upper bound, which weakens this benefit accordingly.}
\end{itemize}

\end{document}